\documentclass{article} % For LaTeX2e
\usepackage{iclr2027_conference,times}

\usepackage{amsmath,amsfonts,bm}

\def\eqref#1{equation~\ref{#1}}
\def\1{\bm{1}}

\DeclareMathAlphabet{\mathsfit}{\encodingdefault}{\sfdefault}{m}{sl}
\SetMathAlphabet{\mathsfit}{bold}{\encodingdefault}{\sfdefault}{bx}{n}

\def\gG{{\mathcal{G}}}

\newcommand{\E}{\mathbb{E}}

\newcommand{\R}{\mathbb{R}}

\newcommand{\Var}{\mathrm{Var}}

\newcommand{\Cov}{\mathrm{Cov}}
\DeclareMathOperator*{\argmin}{arg\,min}

\DeclareMathOperator{\Tr}{Tr}

\newcommand{\iid}{\stackrel{\mathrm{iid}}{\sim}}

\newcommand{\norm}[1]{\left\lVert #1 \right\rVert}

\usepackage{amsmath,amssymb,amsthm,amsfonts}
\newtheorem{theorem}{Theorem}
\newtheorem{proposition}{Proposition}
\newtheorem{lemma}{Lemma}
\newtheorem{corollary}{Corollary}
\theoremstyle{definition}

\newtheorem{assumption}{Assumption}
\newtheorem{remark}{Remark}

\usepackage[skip=2pt]{caption}
\usepackage{float}
\usepackage{amssymb,amsthm,mathtools}
\usepackage{enumitem}
\usepackage{mathrsfs}
\usepackage{bbm}
\usepackage{hyperref}
\usepackage{url}
\usepackage{graphicx}
\usepackage{xcolor}
\usepackage{setspace}
\usepackage{array}
\usepackage{booktabs}
\usepackage{multirow}
\usepackage{algorithm}
\usepackage{algorithmic}
\usepackage{comment}
\usepackage{tablefootnote}
\usepackage[compact]{titlesec}
\setdisplayskipstretch{1}
\AtBeginDocument{
  \setlength{\abovedisplayskip}{4pt}
  \setlength{\belowdisplayskip}{4pt}
  \setlength{\abovedisplayshortskip}{0pt}
  \setlength{\belowdisplayshortskip}{0pt}
}

\title{Transfer Calibrated Prediction Powered Inference}

\author{Aditya T. Vadlamani~\textsuperscript{1}, Jae Ho Chang~\textsuperscript{2}\thanks{This work was done while the author was a student at The Ohio State University}~~, \\ \textbf{Srinivasan Parthasarathy~\textsuperscript{1}, Subhadeep Paul~\textsuperscript{1}}\\
\textsuperscript{1}The Ohio State University\quad\textsuperscript{2}Yale University
}

\iclrpreprintcopy
\begin{document}

\maketitle

\begin{abstract}
Prediction-powered inference (PPI) and its power-tuned extension (PPI++) improve confidence intervals by combining a small gold-standard labeled sample with a large AI model's predictions. Its efficiency gain relies on low residual variance, which may not hold if the predictor is pre-trained on a different source domain. We propose Transfer Calibrated Prediction-Powered Inference (TC-PPI), adapting the source-domain predictor to the target domain using gold-standard samples through cross-fitting. This approach supports various adaptation methods, such as sparse linear calibration, LoRA, and fine-tuning. Our jointly tuned cross-fit estimator, Joint-TC-Cross-PPI++, maintains unbiasedness and is simultaneously at least as efficient as classical inference, PPI, and PPI++, thereby protecting against negative transfer. We provide high-dimensional MSE bounds for calibration and show empirical improvements over baseline methods across various real-world applications.
\end{abstract}

\vspace{-0.1in}
\section{Introduction}
\vspace{-0.05in}

Reliable statistical inference is often difficult in domains with scarce amounts of ``gold-standard'' (i.e., labeled) data. In such settings, one may have a small set of labeled data $(X_i, Y_i)$ along with a much larger set of $N$ unlabeled data points, $\tilde X_j$, and a pretrained black-box AI model to produce predictions. With just the gold samples, we can perform valid inference, but this can lead to larger errors or more conservative intervals, due to the small sample size. Conversely, relying on the unlabeled data and the model's predictions with them is invalid due to potential bias or inaccurate predictions. A popular framework for performing reliable inference using both types of data is the Prediction-Powered Inference (PPI) framework~\citep{angelopoulos2023prediction,angelopoulos2023ppi++,zrnic2024cross,boyeau2024autoeval,cowen-breen2026multiplepredictionpowered,mani2026no,nath2025recidivism} and closely related semi-supervised inference literature~\citep{zhang2022high}. PPI uses the gold-standard data to rectify the model's predictions on the larger unlabeled dataset, enabling more efficient inference while maintaining validity. For mean estimation, the PPI estimator is
\[
\hat{\theta}^{\text{PP}} = \frac{1}{N}\sum_{i=1}^N f(\tilde{X}_i) + \frac{1}{n} \sum_i^{n}\{Y_i - f(X_i)\}.
\]
%Since $X_i$ and $\tilde{X}_i$ are assumed to be i.i.d. samples from the same distribution, the above estimator is clearly unbiased irrespective of the pre-trained AI model $f$. We note that the variance of the above estimator is the sum of the variances of the two components above, as their covariance is 0 due to independence. Then the
%variance reduction of PPI estimator is driven by the observation that  variance of the target-domain residuals $Y_i-f(X_i)$ is smaller than the variance of $Y_i$ itself, while the variance of the first part is small due to $N$ being large. Thus, the better the predictor on the \emph{target} domain and the smaller the residuals, the larger the benefits.
If $X_i$ and $\tilde{X}_i$ are i.i.d., $\hat{\theta}^{\text{PP}}$ is unbiased regardless of $f$, and has smaller variance than the classical estimator whenever $f$ predicts $Y$ reasonably well, since $N\gg n$ and $Y_i-f(X_i)$ has smaller variance than $Y_i$. This benefit can deteriorate in the presence of domain shifts.

%The PPI estimator reduces variance because the variance of target-domain residuals $Y_i - f(X_i)$ is smaller than that of $Y_i$ for a reasonable predictor $f$, and the first part's variance is small because $N$ is large. Thus, better predictors in the target domain lead to smaller residuals, which in turn lead to greater benefits.
%This observation also reveals a weakness of vanilla PPI in the presence of domain shift. In many scientific settings, the available predictor is not trained on the target population. Instead, one may have access to a source-domain predictor $f_s$ trained on related but different data. In most cases $f_s$ will be a pre-trained model such as a Large Language Model or a protein Language Model. The PPI estimator remains valid after rectifying with target gold data, but its efficiency can deteriorate sharply because the residual variance $\Var_t(Y-f_s(X))$ may be large if $f_s$ is a poor predictor in the target domain.
% This observation reveals a limitation of vanilla PPI in domain-shift scenarios. 
Often, the available predictor is a source-domain model $f_s$ trained on data distinct from the target population. While the PPI estimator remains valid when adjusted with target gold data, its efficiency can decline significantly if the residual variance $\Var_t(Y-f_s(X))$ is large.
%due to $f_s$ being a poor predictor for the target domain.
%To address this problem, we develop a formal framework called \emph{Transfer Calibrated Prediction-Powered Inference} (TC-PPI), which proposes a transfer-learning approach to calibrate the predictor using target-domain data. Transfer learning is an umbrella term for methods that adapt an ML or AI model trained in a different source domain or a broad corpus of data to a new target domain \citep{zhuang2020comprehensive,tripuraneni2020theory,day2017survey}. We show that a version which uses cross-fitting and joint power-tuning provably does not underperform the original PPI.
To redress, we introduce the \textit{Transfer Calibrated Prediction-Powered Inference} (TC-PPI) framework. TC-PPI employs transfer-learning~\citep{zhuang2020comprehensive,tripuraneni2020theory,day2017survey} to calibrate the source predictor using target-domain data. 

 \noindent {\bf Core Methodology.} We develop in stages, beginning with TC-PPI and its power-tuned version~\citep{angelopoulos2023ppi++}, TC-PPI++, which reduces PPI residual variance via transfer calibration. Next, we introduce a joint power-tuning step that keeps the source predictor and transfer correction separate, ensuring robustness against ineffective corrections. We prove a ``no-harm'' guarantee (Theorem~\ref{thm:joint-tc-cross}), showing that the Joint TC estimator with an optimal tuning parameter is at least as effective as classical methods. Finally, we integrate joint tuning with cross-fitting, utilizing all gold observations for rectification. This yields our Joint TC--Cross-PPI++ estimator, which significantly improves performance, as using the full gold sample is crucial for effective rectification.

\noindent {\bf Key Differentiators.} Unlike recent studies~\citep{bastani2021predicting,li2022transfer,chang2024heterogeneous}, which explore transfer learning in high-dimensional linear regression and require individual-level data from the source domain, our approach learns the sparse linear correction term using Lasso~\citep{tibshirani1996regression} {\it with features available only in the target domain}. While our methodology and theory allow any suitable transfer or fine-tuning procedure (e.g., last-layer fine-tuning, LoRA), sparse linear calibration lets us {\it obtain explicit high-dimensional convergence rates}.
Applying transfer learning or fine-tuning in PPI is challenging: using the same gold data for calibration and rectification leads to bias, and naive sample splitting reduces performance.  Our cross-fitting formulation (Section~\ref{sec:tcppi}), addresses these issues. Our joint power-tuning method (Section~\ref{sec:joint-tc-ppipp}), {\it helps mitigate negative transfer} \citep{Zhang_2023}. We prove that our estimators are {\it minimax optimal} (Section~\ref{sec:minimax}). Our framework effectively works across various estimation targets (e.g., mean, OLS, GLM, Bradley-Terry coefficients) and correction methods, {\it demonstrating its versatility across diverse domains} (Section~\ref{sec:empirical}).

The original PPI framework \citep{angelopoulos2023prediction} provides valid confidence intervals when a predictor is already available. Cross-prediction-powered inference \citep{zrnic2024cross} instead learns the predictor directly from the labeled sample via cross-fitting, but this requires enough target-domain data to train a good model from scratch. 
Instead, our proposal {\it adapts a pretrained model using the available gold data through a cross-fitting step}. PPI++ \citep{angelopoulos2023ppi++} addresses poor predictors with a power-tuning parameter, while \citep{mani2026no} notes that near-oracle conditions may not always hold (i.e., {\it no free lunch}). Our joint power-tuning approach aims for a {\it cheap lunch}, improving a predictor with the same gold data while ensuring no harm with high probability.
%Our proposal instead adapts a pretrained model through the same cross-fitting step, using only the gold data already available. PPI++ \citep{angelopoulos2023ppi++} separately addresses a poor predictor via a power-tuning parameter with a no-worse-than-classical guarantee; \citep{mani2026no} shows this is ``not a free lunch'' since a near-oracle tuning parameter requires conditions that may not hold. Our joint power-tuning formulation can be seen as an attempt at a ``cheap lunch'': improving an available predictor with the same gold data while still guaranteeing no harm with high probability.
%A separate line of work addresses predictor heterogeneity structurally rather than through calibration. StratPPI \citep{fisch2024stratified} partitions the population into strata of more homogeneous predictor quality (Appendix~\ref{app:additional_results}). Multi-Source PPI \citep{li2026multisource} aggregates pseudo-labeled sources by minimizing asymptotic confidence-region volume. PPI by Mixture of Experts \citep{gu2026mixture} learns a mixture over predictors with a best-expert guarantee. These exploit structure across strata, sources, or experts, whereas transfer calibration improves a single source predictor's target-domain alignment from the same gold-standard data.
Other approaches address predictor heterogeneity structurally. StratPPI partitions the population into strata of homogeneous predictor quality (Appendix~\ref{app:additional_results}). Multi-Source PPI \citep{li2026multisource} aggregates pseudo-labeled sources by minimizing asymptotic confidence-region volume, while PPI by Mixture of Experts~\citep{gu2026mixture} learns a mixture of predictors with a best-expert guarantee. They utilize structure across strata, sources, or experts, whereas {\it our transfer calibration improves a single-source predictor's target-domain alignment using the {\bf same} gold-standard data}.
%\section{PPI under domain shift}

\section{Transfer calibration and PPI/PPI++ rectification}\label{sec:tcppi}

\noindent {\bf PPI under Domain Shift.} Let $P_t$ denote the target-domain joint distribution of $(X,Y)$ and $P_s$ denote the source-domain joint distribution. In the target domain, we observe two datasets,
\begin{align*}
\mathcal D_t^{\mathrm{gold}} = \{(X_i,Y_i): i=1,\dots,n\} \iid P_t,\qquad 
\mathcal D_t^{\mathrm{pred}} = \{\widetilde X_j: j=1,\dots,N\} \iid P_{t,X},
\end{align*}
with $N\gg n$. We assume that a predictor $f_s:\mathcal X\to\mathcal Y$ has been learned from a vast source corpus: $\mathcal D_s = \{(X_i^s,Y_i^s): i=1,\dots,n_s\} \iid P_s$. 
We do not observe individual-level data from this corpus. 
We allow this predictor to be misspecified for the response in the target domain $P_t$. The inferential target is target-domain parameter $\theta_t^*$. We use the mean, $\E_t[Y] \coloneqq \int ydP_t(x,y)$ to develop key ideas, with additional results on linear regression, generalized linear models, and low-dimensional M-estimation in the Appendix.
%We treat the source predictor $f_s$ as a black-box machine-learning model, such as a tree-based model, deep neural network, or a large language model. In what follows, we condition on the source data $\mathcal D_s$ and consequently treat $f_s$ as fixed.
We treat the source predictor $f_s$ as a black-box machine-learning model (e.g., tree-based, deep neural network, or large language model). We condition on the source data $\mathcal D_s$ and consequently treat $f_s$ as fixed.
%We assume the source domain sample size is so large that the difference between an estimated $\hat{f}_s$ and the true $f_s$ is negligible, and we do not differentiate between them. 
Let $
f_t(x)\coloneqq \E_t[Y\mid X=x]$
denote the target regression function and define the oracle target-domain correction
\begin{equation}\label{eq:general-delta}
\Delta^*(x)\coloneqq  f_t(x)-f_s(x).
\end{equation}
Thus the target mean function can  be written as $
f_t(x)=f_s(x)+\Delta^*(x).$
The role of transfer learning in our framework is to estimate $\Delta^*$, or, more generally, to construct a useful target-adapted prediction function, from limited gold-standard target data. As stated before, we consider the domain difference $\Delta^*$ to be a simple function or a low-dimensional function that scientists and domain researchers can easily train with limited data and limited resources.  A transfer-calibration algorithm $\mathcal A_{\rm cal}$ applied to a subset of the gold data produces a learned correction $\widehat\Delta$ and hence
%\begin{equation}\label{eq:fhat}
$\widehat f_t(x)=f_s(x)+\widehat\Delta(x).$
%\end{equation}
The calibration algorithm may be sparse linear regression (lasso) on the residuals $Y_i-f_s(X_i)$ either using the raw features or using embedding from an open-weight AI model, parameter-efficient fine-tuning methods on deep neural networks such as adapter \citep{houlsby2019parameter} or LoRA \citep{hu2022lora}, or another low-complexity transfer procedure. If a fine-tuning procedure directly produces a predictor $\hat{f}_t$, we define $\widehat\Delta=\widehat f_t-f_s$ after fitting. Thus, the additive notation does not restrict the architecture or parameterization of the transfer calibration. Define the target noise
$\varepsilon=Y-f_t(X).$
Then, by definition of $f_t(X)$, we have $
\E_t[\varepsilon\mid X]=0,$ and 
$\E_t[\varepsilon]=0.$
While some of our theoretical results rely on a sparse linear correction $\Delta^*$, the validity results below require no structural assumptions on $\Delta^*$, such as linearity or sparsity. Such assumptions enter only when we seek explicit rates for learning the transfer correction.
%\section{Transfer calibration and PPI/PPI++ rectification}\label{sec:tcppi}
%\subsection{Exact unbiasedness and conditional variance}
%\subsection{Cross-fitting} \label{sec:tc-cross-ppi}
%However, since $n_1 + n_2=n$, the rectification sample in TC-PPI is smaller, and  hence the asymptotic variance of vanilla PPI is $V_{\mathrm{PPI}}/n$, while that of TC-PPI is $V_{\mathrm{TC}}/n_2$. This may appear to be a tradeoff that requires careful allocation of gold samples to learning transfer correction and rectification, but we now show that cross-fitting solves this problem entirely. The cross-fitting enables us to use the full gold sample for rectification and almost the full sample for learning correction.
%A simple split-sample construction where transfer calibration is trained on a part of the data, ${\cal D}_{t,1}^{\rm gold}$ of size $n_1$, and PPI rectification is obtained on the other part of the sample ${\cal D}_{t,1}^{\rm gold}$  of size $n_2$ is given in the Appendix. Proposition \ref{prop:unbiased} shows the estimator is conditionally unbiased and gives an expression for the conditional variance $\Var\bigl(\widehat\theta_{\mathrm{TC}}\mid \mathcal D_s,{\cal D}_{t,1}^{\rm gold}\bigr)$, while Theorem \ref{thm:clt} provided asymptotic limiting distribution. While this construction has the advantage that the conditional variance of the estimator cleanly decomposes due to independence, it wastes gold samples from the rectification stage, which ultimately leads to underperformance.

\noindent {\bf Calibration and Rectification.} A naive construction, where transfer calibration is trained on one part of the data and PPI rectification is obtained on the other, is given in the Appendix, along with proofs that such an estimator is conditionally unbiased and its asymptotic limiting distribution, along with a derivation of the conditional variance.  While the conditional variance cleanly decomposes in this construction (independent), it wastes gold samples during rectification, reducing performance. We use cross-fitting \citep{zrnic2024cross,mani2026no} to recover all labels for rectification. 

We partition the labeled gold sample into $K$ equal folds $I_1,\ldots,I_K$, with fixed $K$ and $|I_k|=n/K$. For each fold $k$, we fit a transfer correction using combined gold data from the other folds,
\[
 \widehat\Delta^{(-k)}
 =\mathcal A_{\rm cal}\!\left(
 f_s,\{(X_i,Y_i):i\notin I_k\}\right),
 \qquad
 \widehat f_t^{(-k)}(x)=f_s(x)+\widehat\Delta^{(-k)}(x).
\]
For $i\in I_k$, we define the out-of-fold prediction as
$\widehat f_i^{\rm oof}=\widehat f_t^{(-k)}(X_i)$ and the averaged unlabeled prediction as $
 \overline f_t(\widetilde X_j)
 =\frac1K\sum_{k=1}^K
 \widehat f_t^{(-k)}(\widetilde X_j).$
\begin{comment}
For mean estimation, the transfer-calibrated cross-prediction estimator is
\begin{equation}\label{eq:tc-cross-mean}
 \widehat\mu_{\rm TC\text{-}Cross}
 =\frac1N\sum_{j=1}^N\overline f_t(\widetilde X_j)
 +\frac1n\sum_{i=1}^n
 \{Y_i-\widehat f_i^{\rm oof}\}.
\end{equation}
\end{comment}
The new cross-fitted estimator of mean is
\begin{equation}\label{eq:tc-cross-fold-decomp}
\widehat\mu_{\rm TC\text{-}Cross}
=
\frac1K\sum_{k=1}^K
\left[
\frac1N\sum_{j=1}^N
\widehat f_t^{(-k)}(\widetilde X_j)
+
\frac K n\sum_{i\in I_k}
\{Y_i-\widehat f_t^{(-k)}(X_i)\}
\right].
\end{equation}
%Clearly, the averaged unlabeled predictor $\overline f_t$ and the out-of-fold predictions are not independent since the observation $i$ has been used to train one of the predictors in the average. However, the above expression shows that the estimator is an average of fold-specific PPI estimators in which the same predictor $\widehat f_t^{(-k)}$ is used for the unlabeled prediction term and for rectification on its corresponding held-out fold. Because observation $i\in I_k$ is excluded from the data used to fit $\widehat f_t^{(-k)}$, its rectification term remains out of sample. 

The averaged unlabeled predictor $\overline f_t$ and the out-of-fold predictions are not independent, as observation $i$ was used to train one of the predictors in the average. However, the estimator averages fold-specific PPI estimators, where the same predictor $\widehat f_t^{(-k)}$ is used for both the unlabeled prediction and rectification on its held-out fold, and each held-out observation is excluded from fitting the predictor used in its own rectification term. 

\begin{proposition}[Unbiasedness]
\label{prop:tc-cross-unbiased}
Suppose the folds are fixed independently of the observations and have equal
size $|I_k|=n/K$. Under Assumption~\ref{ass:model},
$
\E\!\left[
\widehat\mu_{\rm TC\text{-}Cross}
\mid\mathcal D_s
\right]
=
\theta_t^*.$
\end{proposition}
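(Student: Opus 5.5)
The plan is to use the fold decomposition in \eqref{eq:tc-cross-fold-decomp}: $\widehat\mu_{\rm TC\text{-}Cross}$ is an average over $k$ of fold-specific PPI estimators. By linearity of expectation it is enough to show that each bracketed term has conditional mean $\theta_t^*=\E_t[Y]$ given $\mathcal D_s$. For this I will condition additionally on the out-of-fold training data $\mathcal D^{(-k)}:=\{(X_i,Y_i):i\notin I_k\}$, on which $\widehat\Delta^{(-k)}$ depends. I will also condition on any internal randomness of $\mathcal A_{\rm cal}$ that is independent of the held-out data. Then I apply the tower property. Because $f_s$ is fixed given $\mathcal D_s$, the function $\widehat f_t^{(-k)}$ is measurable with respect to $\sigma(\mathcal D_s,\mathcal D^{(-k)})$.

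The first step handles the unlabeled term. The $\widetilde X_j$ are i.i.d.\ from $P_{t,X}$ and, as Assumption~\ref{ass:model} is meant to guarantee, independent of the gold sample and of $\mathcal D_s$. Hence
\[
\E\Bigl[\tfrac1N\textstyle\sum_{j}\widehat f_t^{(-k)}(\widetilde X_j)\,\Big|\,\mathcal D_s,\mathcal D^{(-k)}\Bigr]=\int \widehat f_t^{(-k)}(x)\,dP_{t,X}(x).
\]
The second step handles the rectification term. The folds are fixed independently of the data, and the gold observations are i.i.d. So for $i\in I_k$ the pair $(X_i,Y_i)$ is independent of $\mathcal D^{(-k)}$ and distributed as $P_t$. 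Therefore
\[
\E\bigl[Y_i-\widehat f_t^{(-k)}(X_i)\mid \mathcal D_s,\mathcal D^{(-k)}\bigr]=\E_t[Y]-\int \widehat f_t^{(-k)}\,dP_{t,X}.
\]
Multiplying by $K/n$ and summing over the $|I_k|=n/K$ indices gives weight exactly one, which is where the equal-fold-size assumption enters. The two integrals then cancel, so the bracket has conditional mean $\E_t[Y]=\theta_t^*$. Averaging over $k$ and applying the tower property yields the claim. No structural assumption on $\Delta^*$ is used.

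The main obstacle is integrability rather than algebra. The cancellation requires $\int|\widehat f_t^{(-k)}|\,dP_{t,X}<\infty$ almost surely, and, for the unconditional tower step, $\E|\widehat f_t^{(-k)}(X)|<\infty$ together with $\E_t|Y|<\infty$. I will take these from the moment conditions in Assumption~\ref{ass:model}, i.e.\ finite second moments of $Y$ and $f_s(X)$ and square-integrable fitted corrections. If those conditions are stated only for $f_s$, I would add that the calibration output has finite mean under $P_{t,X}$. For the Lasso or other linear calibrations this follows from finite moments of the features.

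A second point to check is dependence across folds. The term $\widehat f_t^{(-k)}(\widetilde X_j)$ depends on data in other folds, but this does not matter, because unbiasedness is established fold by fold and only linearity is needed to combine the folds. The dependence would matter for the variance, not for the mean.
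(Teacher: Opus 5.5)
Your proposal is correct and follows essentially the same argument as the paper: condition on $\mathcal D_s$ and the training complement so that $\widehat f_t^{(-k)}$ is fixed, note that the held-out fold and the unlabeled sample are independent draws from $P_t$ and $P_{t,X}$, cancel the two $\widehat f_t^{(-k)}$ integrals (the weight $(K/n)\cdot|I_k|=1$ is where equal fold sizes enter), and finish by linearity over folds and iterated expectation. Your integrability remarks are a careful addition that the paper leaves implicit: the tower step needs $\E|\widehat f_t^{(-k)}(X)|<\infty$, and Assumption~\ref{ass:model} is phrased for the split-sample $\widehat f_t$, so it must be read foldwise. One small correction: the independence of the unlabeled sample from the gold data and from $\mathcal D_s$ comes from the sampling setup, not from Assumption~\ref{ass:model}.
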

%Unlike the split-sample TC-PPI estimator, the cross-fitted estimator does not admit a simple exact conditional variance decomposition after conditioning on a single calibration sample. The fold-specific fitted predictors have overlapping training samples and share the same unlabeled sample, so cross-fold covariance terms are generally nonzero. Theorem~\ref{thm:tc-cross-mean} handles this dependence through an asymptotic linear representation. All terms generated by estimation of the fold-specific calibration functions are collected in a remainder that is $o_p(n^{-1/2})$ under prediction consistency.
The cross-fitted estimator differs from the split-sample TC-PPI estimator as it lacks a straightforward conditional variance decomposition after using a single calibration sample. The overlap in training samples and shared unlabeled samples among fold-specific fitted predictors results in nonzero cross-fold covariance terms. Theorem~\ref{thm:tc-cross-mean} addresses this dependence with an asymptotic linear representation, and the estimation errors from fold-specific calibration functions are collected in a remainder that is $o_p(n^{-1/2})$, if the transfer calibration is consistent. Define $\mathcal D_{-k}\coloneqq \{(X_i,Y_i):i\notin I_k\}$ and
\[
 e_k(x)\coloneqq \widehat f_t^{(-k)}(x)-f_t(x),
 \qquad
 a_n\coloneqq \max_{1\le k\le K}\E_t[e_k(X)^2\mid \mathcal D_s,\mathcal D_{-k}].
\]
Then $ e_k(x)$ is the error that the predictor trained on all folds except for $k$ fold makes in approximating $f_t(x)$. The quantity $a_n$ is the maximum expected squared error over all folds.

\begin{theorem}[TC--Cross-PPI for the target mean]
\label{thm:tc-cross-mean}
Suppose $K$ is fixed, the folds have equal size $|I_k|=n/K$, $n/N\to\rho\in[0,\infty)$, and, conditionally on the source data, $Y, f_t(X), \hat{f}_t(X)$ have finite $(2+\eta)$ moments for some $\eta>0$. Assume $a_n=o_p(1)$. Then
\[
\widehat\mu_{\rm TC\text{-}Cross}-\theta_t^*
=
\frac1n\sum_{i=1}^n\varepsilon_i
+
\frac1N\sum_{j=1}^N
\{f_t(\widetilde X_j)-\E_t f_t(X)\}
+
o_p(n^{-1/2}),
\]
and consequently, $
 \sqrt n\bigl(\widehat\mu_{\rm TC\text{-}Cross}-\theta_t^*\bigr)
 \overset{D}{\longrightarrow}
 N\!\left(0,\Var_t(\varepsilon)+\rho\Var_t\{f_t(X)\}\right).$
%More generally, if for a deterministic sequence $r_n\to0$, it holds that $ \max_{1\le k\le K}\E_t[e_k(X)^2\mid\mathcal D_s,\mathcal D_{-k}]=O_p(r_n),$ then $\sqrt n\,R_n=O_p(\sqrt{r_n}).$
\end{theorem}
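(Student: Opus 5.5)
Fix the sample-splitting decomposition by writing, for each fold $k$, $\widehat f_t^{(-k)}=f_t+e_k$. Substituting into \eqref{eq:tc-cross-fold-decomp} and using $Y_i=f_t(X_i)+\varepsilon_i$ gives
\[
\widehat\mu_{\rm TC\text{-}Cross}-\theta_t^*
=\frac1n\sum_{i=1}^n\varepsilon_i+\frac1N\sum_{j=1}^N\{f_t(\widetilde X_j)-\E_t f_t(X)\}+\frac1K\sum_{k=1}^K R_k,
\]
where the leading terms come from the $f_t$ parts (note $\frac1n\sum_i f_t(X_i)$ cancels exactly between the unlabeled and rectification pieces), and
\[
R_k=\frac1N\sum_{j=1}^N e_k(\widetilde X_j)-\frac Kn\sum_{i\in I_k}e_k(X_i)
=\Bigl[\frac1N\sum_{j}e_k(\widetilde X_j)-\E_t[e_k(X)\mid\mathcal D_{-k}]\Bigr]-\Bigl[\frac Kn\sum_{i\in I_k}e_k(X_i)-\E_t[e_k(X)\mid\mathcal D_{-k}]\Bigr].
\]
Here and below all expectations are also conditional on $\mathcal D_s$. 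Since $K$ is fixed, it suffices to show $\sqrt n R_k=o_p(1)$ for each $k$.

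The key step is a conditional Chebyshev argument. Conditional on $(\mathcal D_s,\mathcal D_{-k})$, the function $e_k$ is fixed. The points $\{X_i\}_{i\in I_k}$ are i.i.d.\ from $P_{t,X}$ and independent of $\mathcal D_{-k}$ because the folds are fixed in advance. The points $\widetilde X_j$ are likewise i.i.d.\ and independent of all gold data. Hence each bracket has conditional mean zero, and its conditional variance is at most $\E_t[e_k(X)^2\mid\mathcal D_{-k}]/N$ and $K\,\E_t[e_k(X)^2\mid\mathcal D_{-k}]/n$ respectively. Therefore
\[
\E\bigl[nR_k^2\mid\mathcal D_s,\mathcal D_{-k}\bigr]\le 2\Bigl(\frac nN+K\Bigr)a_n=O_p(a_n)=o_p(1),
\]
using $n/N\to\rho<\infty$. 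A conditional Markov inequality then gives $P(\sqrt n|R_k|>\epsilon\mid\mathcal D_{-k})\le \min\{1,\,O(a_n)/\epsilon^2\}\to0$ in probability, and bounded convergence yields $\sqrt nR_k=o_p(1)$. The same computation shows that if $a_n=O_p(r_n)$, the remainder is $O_p(\sqrt{r_n}/\sqrt n)$.

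For the CLT, the two leading terms are independent sums of i.i.d.\ mean-zero variables, since $\E_t[\varepsilon]=0$. Scaled by $\sqrt n$, the first has variance $\Var_t(\varepsilon)$ and the second has variance $(n/N)\Var_t f_t(X)\to\rho\Var_t f_t(X)$. The $(2+\eta)$ moment assumption on $Y$ and $f_t(X)$ gives the Lyapunov condition for each sum. When $\rho=0$ the second term is simply $o_p(n^{-1/2})$. Independence lets the characteristic functions multiply, so the sum converges to $N(0,\Var_t(\varepsilon)+\rho\Var_t f_t(X))$, and Slutsky absorbs the remainder.

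The main obstacle is being careful that the conditioning argument is legitimate. Although the fold predictors share training data, the remainder only requires per-fold control and never the cross-fold covariances, because a union over fixed $K$ folds suffices. The only real subtlety is converting the conditional $o_p$ bound to an unconditional one when $a_n$ is itself random, and the truncated conditional Markov inequality handles this. The finite second moment of $\widehat f_t(X)$ guarantees that the conditional variances are finite.
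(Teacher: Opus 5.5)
Your proposal is correct and follows essentially the same route as the paper's proof: the same decomposition into the two oracle averages plus the foldwise remainders $Z_{n,k}=\frac1N\sum_j e_k(\widetilde X_j)-\frac Kn\sum_{i\in I_k}e_k(X_i)$, conditional mean zero with conditional second moment at most $(1/N+K/n)\,\E_t[e_k(X)^2\mid\mathcal D_s,\mathcal D_{-k}]$ given $(\mathcal D_s,\mathcal D_{-k})$, conditional Chebyshev fold by fold with $K$ fixed (your bounded-convergence step spells out what the paper leaves implicit), and the CLT for the two independent leading averages. One small wording slip: the $f_t(X_i)$ terms cancel within the rectification piece via $Y_i-f_t(X_i)=\varepsilon_i$, not between the unlabeled and rectification pieces, though your displayed decomposition is correct.
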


\noindent{\bf Sparse linear calibration: high-dimensional lasso rates.} The above theorem is agnostic to the form of the transfer correction and instead assumes the conditional mean squared error of the correction is $o_p(1)$. To obtain an explicit rate, we now specialize to a sparse linear correction as follows,
\begin{equation}\label{eq:sparse-linear-calibration-main}
\Delta^*(x)=x^\top\delta^*,\quad\delta^*\in\R^p, \quad \|\delta^*\|_0\le s,
\end{equation}
for some $s>0$. 
We assume the features are high-dimensional, i.e., $p \gg n$. 
%On the calibration split, define the pseudo-response $U_i=Y_i-f_s(X_i),\quad i\in\mathcal I_1.$ Then $\E_t[U_i\mid X_i]=X_i^\top\delta^*.$
We estimate $\delta^*$ by the lasso on the calibration data for each fold as,
\[
\widehat\delta^{(-k)}\in\arg\min_{\delta\in\R^p}
\left\{
\frac{1}{2n_{-k}}\sum_{i\notin I_k}
\bigl(Y_i-f_s(X_i)-X_i^\top\delta\bigr)^2
+\lambda\|\delta\|_1
\right\},
\qquad n_{-k}=\frac{K-1}{K}n.
\]

Corollary~\ref{cor:tc-cross-lasso} specializes Theorem \ref{thm:tc-cross-mean} to the case of sparse linear calibration estimated through lasso.

\begin{corollary}[Sparse calibration rate for TC--Cross-PPI]
\label{cor:tc-cross-lasso}
Consider the sparse linear calibration model and assume the lasso related conditions in Appendix~\ref{app:theory} hold on every training complement.
Then, uniformly over all folds (fixed), 
$
 (\widehat\delta^{(-k)}-\delta^*)^\top\Sigma_t
 (\widehat\delta^{(-k)}-\delta^*)
 =
 O_p\!\left(
 \frac{s\log p}{(K-1)n/K}
 \right).$
Consequently, if $
\frac{s\log p}{(K-1)n/K}\to0,$,
then Theorem~\ref{thm:tc-cross-mean} holds. For fixed $K$, it is 
equivalent to $s\log p/n\to0$.
\end{corollary}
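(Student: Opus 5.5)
The plan is to reduce the corollary to a standard lasso prediction-error bound applied fold by fold, and then feed that bound into Theorem~\ref{thm:tc-cross-mean} through the quantity $a_n$. Fix a fold $k$. On the training complement $\mathcal D_{-k}$, define the pseudo-responses $U_i=Y_i-f_s(X_i)$. Because $f_t=f_s+\Delta^*$, we have $U_i=X_i^\top\delta^*+\varepsilon_i$ with $\E_t[\varepsilon_i\mid X_i]=0$. So the fold-$k$ problem is an ordinary sparse linear regression with $n_{-k}=(K-1)n/K$ i.i.d.\ observations. The source predictor $f_s$ is fixed after conditioning on $\mathcal D_s$, so it only shifts the response.

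First I will invoke the lasso conditions assumed in Appendix~\ref{app:theory} on each $\mathcal D_{-k}$. These are a restricted eigenvalue (or compatibility) condition for the empirical Gram matrix, sub-Gaussian design and noise, and a tuning parameter $\lambda\asymp\sigma\sqrt{\log p/n_{-k}}$. I will then apply the standard argument. On the event $\|X^\top\varepsilon/n_{-k}\|_\infty\le\lambda/2$, which has probability $1-o(1)$ by a maximal inequality over the $p$ coordinates, the basic inequality puts the error in the cone $\|\hat v_{S^c}\|_1\le3\|\hat v_S\|_1$. This gives the in-sample bound $\|X(\widehat\delta^{(-k)}-\delta^*)\|_2^2/n_{-k}\lesssim s\lambda^2$ and $\|\widehat\delta^{(-k)}-\delta^*\|_1\lesssim s\lambda$.

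Second, I will pass from the empirical Gram matrix $\widehat\Sigma$ to the population matrix $\Sigma_t$, since $a_n$ is a population quantity. For $v$ in the cone, I plan to use a uniform deviation bound of the form $|v^\top(\widehat\Sigma-\Sigma_t)v|\le\|\widehat\Sigma-\Sigma_t\|_{\max}\|v\|_1^2$ with $\|\widehat\Sigma-\Sigma_t\|_{\max}=O_p(\sqrt{\log p/n})$. This yields an extra term of order $s^2\lambda^2\sqrt{\log p/n}$, and that term is of smaller order than $s\lambda^2$ provided $s\sqrt{\log p/n}\to0$. This is the step I expect to be the main obstacle: the crude $\ell_\infty$ bound forces a slightly stronger sparsity condition. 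If the appendix conditions do not already include this, I would instead use a restricted-eigenvalue transfer result for sub-Gaussian designs (Rudelson--Zhou type). That result gives $v^\top\Sigma_t v\lesssim v^\top\widehat\Sigma v$ uniformly over the cone once $n\gtrsim s\log p$, and so needs only $s\log p/n\to0$. In either case the conclusion is $(\widehat\delta^{(-k)}-\delta^*)^\top\Sigma_t(\widehat\delta^{(-k)}-\delta^*)=O_p(s\log p/n_{-k})$.

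Third, I handle uniformity over folds and connect to the theorem. Since $K$ is fixed, a union bound over the $K$ high-probability events leaves the failure probability $o(1)$, so the $O_p$ bound holds for the maximum over $k$. The fitted predictor satisfies $e_k(x)=x^\top(\widehat\delta^{(-k)}-\delta^*)$, and $\widehat\delta^{(-k)}$ is $\mathcal D_{-k}$-measurable while $X$ is a fresh target draw. Hence
\[
\E_t[e_k(X)^2\mid\mathcal D_s,\mathcal D_{-k}]=(\widehat\delta^{(-k)}-\delta^*)^\top\E_t[XX^\top](\widehat\delta^{(-k)}-\delta^*).
\]
This equals the quadratic form in $\Sigma_t$ if $\Sigma_t$ denotes the second-moment matrix, or if the features are centered. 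Therefore $a_n=O_p(s\log p/((K-1)n/K))=o_p(1)$ under the stated rate condition.

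It remains to check the moment conditions of Theorem~\ref{thm:tc-cross-mean} for $\widehat f_t(X)$. These follow from sub-Gaussianity of $X^\top v$ together with the boundedness of $\|\widehat\delta^{(-k)}\|$ on the high-probability event. The theorem then applies, and because $(K-1)/K\in[1/2,1)$ for fixed $K\ge2$, the rate condition is equivalent to $s\log p/n\to0$.
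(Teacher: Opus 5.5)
Your proposal is correct and follows the paper's argument. The paper applies Proposition~\ref{thm:lasso-calibration-rig} to each training complement of size $(K-1)n/K$; that proposition combines the deterministic cone/compatibility bound with the restricted comparison $u^\top\Sigma_t u\le C_{\rm pop}\,u^\top\widehat\Sigma_1 u$ on the cone, which is your second, Rudelson--Zhou-type option. It then takes the maximum over the fixed number of folds to get $a_n=o_p(1)$ and invoke Theorem~\ref{thm:tc-cross-mean}. Since that comparison is already one of the appendix assumptions, you do not need your cruder max-norm route or its stronger requirement $s\sqrt{\log p/n}\to0$.
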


\begin{remark}
    [Comparison with vanilla PPI]\label{cor:compare}
Using all $n$ gold observations, the vanilla PPI estimator based on the unadapted source predictor $f_s$ has $\sqrt n$-scale asymptotic variance of $
V_{\mathrm{PPI}}
=
\rho\Var_t\{f_s(X)\}
+\Var_t(\varepsilon)
+\Var_t\{\Delta^*(X)\}.$
Under Theorem~\ref{thm:tc-cross-mean}, TC--Cross-PPI also uses all $n$ gold observations and has $\sqrt n$-scale asymptotic variance
$
V_{\mathrm{TC-Cross}}
=
\rho\Var_t\{f_t(X)\}+\Var_t(\varepsilon).$
Thus
\[
V_{\mathrm{PPI}}-V_{\mathrm{TC-Cross}}
=
\Var_t\{\Delta^*(X)\}
+\rho\left[\Var_t\{f_s(X)\}-\Var_t\{f_t(X)\}\right].
\]
In the regime where $N\gg n$, we have $\rho=0$, and consistent transfer calibration removes the source-to-target drift variance $\Var_t\{\Delta^*(X)\}$, making TC-Cross-PPI more efficient than vanilla PPI. %Under sparse linear calibration and mean-zero $X$, $\Var_t\{\Delta^*(X)\}=(\delta^*)^\top\Sigma_t\delta^*$, while Corollary~\ref{cor:tc-cross-lasso} gives calibration error of order $O_p(s\log p/n)$ for the cross-fitted procedure.
\end{remark}

\noindent{\bf Transfer Calibrated Extension for PPI++.}
%Next we extend the Transfer Calibrated estimator to the PPI++ approach of \citep{angelopoulos2023ppi++}. \citeauthor{angelopoulos2023ppi++} consider the possibility that the AI model might be a poor predictor of the response in the gold sample, which can cause PPI to perform even worse than classical inference. They show that one can improve the inference by interpolating between classical and prediction-powered inference with a scalar tuning parameter $\lambda$. The same idea is natural here because even the calibrated predictor may be imperfect.
\citep{angelopoulos2023ppi++} note that if the AI model poorly predicts responses in the gold sample, PPI may underperform compared to classical inference. They show that inference can be improved by interpolating between classical and prediction-powered approaches using a tuning parameter $\lambda$. This concept applies here as even a calibrated predictor may be imperfect. We define the $\lambda$-weighted transfer-calibrated estimator as:
\begin{equation}\label{eq:tcppi-lambda}
 \widehat\mu_{{\rm TC\text{-}Cross},\lambda}
 =\bar Y_n+\lambda\left\{
 \frac1N\sum_{j=1}^N\overline f_t(\widetilde X_j)
 -\frac1n\sum_{k=1}^K\sum_{i\in I_k}\widehat f_t^{(-k)}(X_i)
 \right\}.
\end{equation}
This estimator equals classical inference at $\lambda=0$ and ordinary TC--Cross-PPI at $\lambda=1$.

\begin{theorem}[TC--Cross-PPI++]
\label{thm:tc-cross-ppipp}
Suppose the fold and moment conditions of Theorem \ref{thm:tc-cross-mean} hold. Assume there exists a
square-integrable function $f_\dagger$ such that, uniformly over the fixed
number of folds,
$
\E_t\!\left[
\{\widehat f_t^{(-k)}(X)-f_\dagger(X)\}^2
\mid \mathcal D_s,\mathcal D_{-k}
\right]
=o_p(1).$
Then, for every fixed $\lambda$, we have 
\[
\sqrt n\bigl(
\widehat\mu_{{\rm TC\text{-}Cross},\lambda}
-\theta_t^*
\bigr)
\overset{D}{\longrightarrow}
N\!\left(0,\Var_t(Y)
-
2\lambda\Cov_t\{Y,f_\dagger(X)\}
+
(1+\rho)\lambda^2\Var_t\{f_\dagger(X)\}\right).
\]
If $\Var_t\{f_\dagger(X)\}>0$, the oracle limiting minimizer is
\begin{equation}\label{eq:tc-cross-lambda-general}
\lambda_\dagger^*
=
\frac{\Cov_t\{Y,f_\dagger(X)\}}
{(1+\rho)\Var_t\{f_\dagger(X)\}},
\end{equation}
and the corresponding minimized variance is $
\Var_t(Y)
-
\frac{
\Cov_t\{Y,f_\dagger(X)\}^2
}{
(1+\rho)\Var_t\{f_\dagger(X)\}
}.$
If in addition, the transfer-calibrated predictor is consistent for the oracle target
predictor, $
f_\dagger(X)=f_t(X),$ and
then $
\lambda_t^*
=
\frac{1}{1+\rho},$
with the minimized variance being $
\Var_t(Y)
-
\frac{\Var_t\{f_t(X)\}}{1+\rho}.$
If a feasible estimator $\widehat\lambda$ satisfies
$\widehat\lambda-\lambda_\dagger^*=o_p(1)$, using
$\widehat\lambda$ in the estimator gives the same asymptotic limiting distribution.
\end{theorem}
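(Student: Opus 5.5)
The plan is to reuse the fold decomposition behind Theorem~\ref{thm:tc-cross-mean}, with $f_t$ replaced by $f_\dagger$, and then treat $\lambda$ as a fixed scalar multiplying a separately linearized prediction-difference term. Rewrite \eqref{eq:tcppi-lambda} as
\[
\widehat\mu_{{\rm TC\text{-}Cross},\lambda}-\theta_t^*
=\frac1n\sum_{i=1}^n (Y_i-\theta_t^*)
+\lambda\,\frac1K\sum_{k=1}^K\Bigl[\frac1N\sum_{j=1}^N \widehat f_t^{(-k)}(\widetilde X_j)-\frac Kn\sum_{i\in I_k}\widehat f_t^{(-k)}(X_i)\Bigr].
\]
Set $g_k\coloneqq \widehat f_t^{(-k)}-f_\dagger$. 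Each bracket then splits into two pieces. The first is the oracle part
\[
\frac1N\sum_j\{f_\dagger(\widetilde X_j)-\E_t f_\dagger\}-\frac Kn\sum_{i\in I_k}\{f_\dagger(X_i)-\E_t f_\dagger\}.
\]
The second is the remainder
\[
\frac1N\sum_j\{g_k(\widetilde X_j)-\E_t[g_k\mid\mathcal D_{-k}]\}-\frac Kn\sum_{i\in I_k}\{g_k(X_i)-\E_t[g_k\mid\mathcal D_{-k}]\},
\]
where the centering constants $\E_t[g_k(X)\mid \mathcal D_s,\mathcal D_{-k}]$ cancel exactly between the two sums. This cancellation is what removes any bias from the learned predictor.

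The remainder is controlled fold by fold, conditioning on $(\mathcal D_s,\mathcal D_{-k})$. The unlabeled points $\widetilde X_j$ are i.i.d.\ and independent of $\mathcal D_{-k}$, and so are the held-out points $X_i$, $i\in I_k$. Each centered average therefore has conditional mean zero. Its conditional variance is at most $\E_t[g_k^2\mid\mathcal D_{-k}]/N$ and $K\,\E_t[g_k^2\mid \mathcal D_{-k}]/n$, respectively. The hypothesis makes $\E_t[g_k^2\mid\mathcal D_{-k}]=o_p(1)$, and $n/N\to\rho<\infty$, so conditional Chebyshev gives $\sqrt n\times$(remainder)$=o_p(1)$. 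Summing over the fixed $K$ folds keeps this negligible. This is the same argument as the remainder step of Theorem~\ref{thm:tc-cross-mean}, with $f_\dagger$ in place of $f_t$. I expect it to be the main technical point: the conditioning has to be done carefully, because the unlabeled sample is shared across folds, so the conditional bound must be applied to each fold separately rather than jointly.

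Averaging the oracle parts over $k$ leaves the asymptotically linear representation
\[
\frac1n\sum_i\bigl[(Y_i-\theta_t^*)-\lambda\{f_\dagger(X_i)-\E_t f_\dagger\}\bigr]
+\lambda\frac1N\sum_j\{f_\dagger(\widetilde X_j)-\E_t f_\dagger\}
+o_p(n^{-1/2}).
\]
Here $\frac1K\sum_k\frac Kn\sum_{i\in I_k}=\frac1n\sum_i$. The two sums are independent i.i.d.\ averages with finite $(2+\eta)$ moments, so Lyapunov's CLT applies to each. With $n/N\to\rho$, the limiting variance is
\[
\Var_t(Y-\lambda f_\dagger)+\rho\lambda^2\Var_t f_\dagger = \Var_t Y-2\lambda\Cov_t(Y,f_\dagger)+(1+\rho)\lambda^2\Var_t f_\dagger .
\]

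The remaining claims follow from this formula and a Slutsky argument.
\begin{itemize}
\item \emph{Oracle minimizer.} The variance is a strictly convex quadratic in $\lambda$ when $\Var_t f_\dagger>0$. Setting its derivative to zero gives $\lambda_\dagger^*$, and substituting back gives the stated minimized variance.
\item \emph{Consistent calibration.} If $f_\dagger=f_t$, then $\Cov_t(Y,f_t)=\Cov_t(f_t+\varepsilon,f_t)=\Var_t f_t$, since $\E_t[\varepsilon\mid X]=0$. This gives $\lambda_t^*=1/(1+\rho)$ and minimized variance $\Var_t Y-\Var_t f_t/(1+\rho)$.
\item \emph{Feasible $\widehat\lambda$.} The estimator is affine in $\lambda$:
\[
\widehat\mu_{\widehat\lambda}-\widehat\mu_{\lambda_\dagger^*}=(\widehat\lambda-\lambda_\dagger^*)D_n,
\]
where $D_n$ is the bracketed prediction difference. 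The decomposition above shows $\sqrt n D_n=O_p(1)$. Hence the difference is $o_p(n^{-1/2})$, and Slutsky's lemma gives the same limiting distribution.
\end{itemize}
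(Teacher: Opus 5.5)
Your proposal is correct and follows essentially the same route as the paper. The paper also reuses the fold-by-fold cross-fitting remainder argument of Theorem~\ref{thm:tc-cross-mean} with $f_\dagger$ in place of $f_t$, applies the CLT to the two independent oracle averages, minimizes the resulting quadratic in $\lambda$, and handles $\widehat\lambda$ by Slutsky; your affine-in-$\lambda$ argument with $\sqrt n D_n=O_p(1)$ is the one the paper writes out for the joint estimator. One small fix: only square-integrability of $f_\dagger$ is assumed, so use the classical i.i.d.\ (Lindeberg--L\'evy) CLT for the oracle averages. Lyapunov would need $(2+\eta)$ moments of $f_\dagger(X)$, which are not given.
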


In the above theorem, we let the transfer-corrected predictors from the folds converge to a function $f_{\dagger}(X)$, instead of the true target domain predictor function $f_t(X)$. This removes the assumption that, post-transfer learning we are able to correctly approximate the true target domain conditional expectation of $Y$ given $X$.  The oracle parameter $\lambda^*_{\dagger}$ in \eqref{eq:tc-cross-lambda-general} depends on unknown
target-domain moments. 
\begin{comment}
Let
$
\overline f_{\rm oof}
=
\frac1n\sum_{i=1}^n\widehat f_i^{\rm oof},$ and
$
\overline Y_n
=
\frac1n\sum_{i=1}^nY_i,$
and define
\[
\widehat{\Cov}_{\rm oof}(Y,\widehat f)
=
\frac1{n-1}\sum_{i=1}^n
(Y_i-\overline Y_n)
(\widehat f_i^{\rm oof}-\overline f_{\rm oof}),
\qquad
\widehat{\Var}_{\rm oof}(\widehat f)
=
\frac1{n-1}\sum_{i=1}^n
(\widehat f_i^{\rm oof}-\overline f_{\rm oof})^2.
\]
\end{comment}
A plug-in estimator for $\lambda^*_{\dagger}$ is
%\begin{equation}\label{eq:tc-cross-lambda-hat}
$\widehat\lambda
=
\frac{
\widehat{\Cov}_{\rm oof}(Y,\widehat f)
}{
(1+n/N)
\widehat{\Var}_{\rm oof}(\widehat f)
},$
%\end{equation}
when the denominator is positive, with $\widehat\lambda=0$ otherwise.
One may also clip $\widehat\lambda$ to $[0,1]$ for stability. Under the foldwise prediction-consistency and moment conditions of
Theorem~\ref{thm:tc-cross-ppipp},
$
\widehat\lambda-\lambda_\dagger^*=o_p(1).$
\vspace{-0.05in}
\section{Joint power tuning: negative transfer protection}
\label{sec:joint-tc-ppipp}
\vspace{-0.05in}

The TC--Cross-PPI++ estimator in \eqref{eq:tcppi-lambda} controls the influence of the transfer-calibrated predictor, ensuring its variance does not exceed that of the classical-only variance. However, as target samples are limited, transfer calibration can be weak or unstable. %Therefore, we keep the source predictor separate for inference instead of fully replacing it with the calibrated predictor.
%The TC--Cross-PPI++ estimator in \eqref{eq:tcppi-lambda} controls how strongly we use the transfer-calibrated predictor and guarantees that the estimator variance is no larger than the classical-only variance. However, even after cross-fitting, transfer calibration to the source predictor is estimated from a limited target sample and can be weak or unstable. 
We thus also retain the original source predictor instead of replacing it with the calibrated predictor. %We therefore introduce separate power parameters for the source prediction and the learned transfer correction and optimize them jointly. This construction makes ordinary PPI++ and TC--Cross-PPI++ nested subfamilies of a single estimator.
For $i\in I_k$, define $
\widehat\Delta_i^{\rm oof}
\coloneqq 
\widehat f_t^{(-k)}(X_i)-f_s(X_i)$ and $
\widehat H_i^{\rm oof}
\coloneqq 
\begin{pmatrix}
 f_s(X_i)\\
 \widehat\Delta_i^{\rm oof}
\end{pmatrix},$
and the averaged unlabeled vector
\[
\widetilde H_N^{\rm CF}
\coloneqq 
\begin{pmatrix}
\displaystyle
\frac1N\sum_{j=1}^N f_s(\widetilde X_j) & \qquad 
\displaystyle
\frac1{KN}\sum_{k=1}^K\sum_{j=1}^N
\left\{
\widehat f_t^{(-k)}(\widetilde X_j)
-
f_s(\widetilde X_j)
\right\}
\end{pmatrix}^T.
\]
For
$\boldsymbol\lambda=(\lambda_s,\lambda_\Delta)^\top\in\mathbb R^2$,
define the Joint TC-Cross-PPI++ estimator as
\begin{equation}\label{eq:joint-tc-cross}
\widehat\mu_{\mathrm{JTC\text{-}Cross},\boldsymbol\lambda}
=
\overline Y_n
+
\boldsymbol\lambda^\top
\left\{
\widetilde H_N^{\rm CF}
-
\frac1n\sum_{i=1}^n\widehat H_i^{\rm oof}
\right\}.
\end{equation}
%The important special cases are,
\begin{align*}
\textbf{\underline{Special~Cases:} }
(0,0)^\top: \text{classical inference}, \quad
(1,0)^\top&: \text{Source PPI using }f_s, \quad
(1,1)^\top: \text{TC--Cross-PPI},\\
(\lambda,0)^\top: \text{Source PPI++ using }f_s, & \quad
(\lambda,\lambda)^\top: \text{TC--Cross-PPI++}.
\end{align*}

%Thus optimizing over the vector $\boldsymbol\lambda$ gives the procedure the option to ignore the transfer correction, use it fully, or combine it with the original source predictor in a way that neither scalar PPI++ nor scalar TC--Cross-PPI++ can represent. Proposition \ref{prop:joint-cross-unbiased} in the Appendix shows the estimator is unbiased for every fixed $\boldsymbol \lambda$.
Optimizing over the vector $\boldsymbol\lambda$ allows one to ignore the transfer correction, use it fully, or combine it with the original source predictor in ways that neither vanilla PPI++ nor TC--Cross-PPI++ can represent. Proposition~ \ref{prop:joint-cross-unbiased} (see Appendix) shows the estimator is unbiased for any fixed $\boldsymbol \lambda$.

\begin{comment}

The unbiasedness above does not require the fold-specific fitted predictors to be independent. For each fold, the same fold-specific predictor is applied to the held-out gold observations and to the unlabeled observations, and its two population means cancel after conditioning on the corresponding training complement. The different fold-specific terms are generally dependent because their training samples overlap and because they share the same unlabeled sample. This dependence is handled in the asymptotic theory below through the cross-fitting remainder, as before.
\end{comment}

\begin{theorem}[Joint TC--Cross-PPI++ and asymptotic no-harm]
\label{thm:joint-tc-cross}
Suppose the fold and moment conditions of Theorem \ref{thm:tc-cross-mean} hold. Assume there exists a
square-integrable function $\Delta^\dagger$ such that $
\max_{1\le k\le K}
\E_t\!\left[
\{\widehat\Delta^{(-k)}(X)-\Delta^\dagger(X)\}^2
\mid \mathcal D_s,\mathcal D_{-k}
\right]
=O_p(r_n),$ with $r_n=o(1)$.
Define $
H_\dagger(x)
\coloneqq 
\begin{pmatrix}
 f_s(x) & \, 
 \Delta^\dagger(x)
\end{pmatrix}^\top,$ and $
f_\dagger(x)
\coloneqq 
f_s(x)+\Delta^\dagger(x).$
Then for every fixed
$\boldsymbol\lambda\in\mathbb R^2$,
\[
\sqrt n\left(
\widehat\mu_{\mathrm{JTC\text{-}Cross},\boldsymbol\lambda}
-\theta_t^*
\right)
\overset{D}{\longrightarrow}
N\!\left(
0,
\Var_t\{Y-\boldsymbol\lambda^\top H_\dagger(X)\}
+
\rho\Var_t\{\boldsymbol\lambda^\top H_\dagger(X)\}
\right).
\]
Further, let, $
\Sigma_\dagger
\coloneqq 
\Var_t\{H_\dagger(X)\},$ and $
c_\dagger
\coloneqq 
\Cov_t\{H_\dagger(X),Y\}.$
If $\Sigma_\dagger$ is nonsingular, the unique oracle limiting weight is
%\begin{equation}\label{eq:joint-cross-lambda}
$\boldsymbol\lambda_{\rm Jcross}^*
=
\frac1{1+\rho}
\Sigma_\dagger^{-1}c_\dagger.$
%\end{equation}
If $\Sigma_\dagger$ is singular, a minimum-norm oracle solution is obtained by
replacing $\Sigma_\dagger^{-1}$ with its Moore--Penrose pseudoinverse
$\Sigma_\dagger^\dagger$. Now, let $
V_{\rm Jcross}^{\dagger,*}
\coloneqq 
\inf_{\boldsymbol\lambda\in\mathbb R^2}
V_{\rm Jcross}^\dagger(\boldsymbol\lambda).$
Then the minimized variance is given by,
%\begin{equation}\label{eq:joint-cross-min-var}
$V_{\rm Jcross}^{\dagger,*}
=
\Var_t(Y)
-
\frac{
c_\dagger^\top
\Sigma_\dagger^\dagger
c_\dagger
}{
1+\rho
}.$
%\end{equation}
Denote the optimized source PPI++ variance by $
V_{s,++}^{\rm cross}$
and the optimized scalar TC--Cross-PPI++ variance associated with the
limiting transfer predictor $f_\dagger$ by
$
V_{\dagger,++}^{\rm cross}$.
Because the joint class contains source-PPI++,
TC--Cross-PPI++,
and classical,
\begin{equation}\label{eq:joint-cross-dominance}
V_{\rm Jcross}^{\dagger,*}
\le
\min\left\{
V_{s,++}^{\rm cross},
V_{\dagger,++}^{\rm cross},
\Var_t(Y)
\right\}.
\end{equation}
\end{theorem}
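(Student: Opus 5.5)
The plan is to reduce the joint estimator to the same cross-fitting machinery used for Theorem~\ref{thm:tc-cross-mean} and Theorem~\ref{thm:tc-cross-ppipp}. The reduction uses linearity in $\boldsymbol\lambda$. Write
\[
\widehat\mu_{\mathrm{JTC\text{-}Cross},\boldsymbol\lambda}-\theta_t^*
=(\overline Y_n-\theta_t^*)+\lambda_s A_n+\lambda_\Delta B_n ,
\]
where
\[
A_n=\frac1N\sum_j f_s(\widetilde X_j)-\frac1n\sum_i f_s(X_i)
\]
involves only the fixed source predictor. The term $B_n$ is the cross-fitted correction contrast, written fold by fold as
\[
B_n=\frac1K\sum_k\Bigl[\frac1N\sum_j \widehat\Delta^{(-k)}(\widetilde X_j)-\frac Kn\sum_{i\in I_k}\widehat\Delta^{(-k)}(X_i)\Bigr].
\]
$A_n$ is an exact difference of i.i.d.\ averages, so no remainder arises there.

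For $B_n$, I replace $\widehat\Delta^{(-k)}$ by $\Delta^\dagger+(\widehat\Delta^{(-k)}-\Delta^\dagger)$. The main part becomes
\[
\frac1N\sum_j\{\Delta^\dagger(\widetilde X_j)-\E_t\Delta^\dagger\}-\frac1n\sum_i\{\Delta^\dagger(X_i)-\E_t\Delta^\dagger\}.
\]
The remainder consists, for each fold, of the centered empirical processes of $g_k=\widehat\Delta^{(-k)}-\Delta^\dagger$ over the held-out fold $I_k$ and over the unlabeled sample.

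\textbf{Main obstacle: controlling this remainder.} The step is the same one handled in Theorem~\ref{thm:tc-cross-mean}, and I would reuse that argument. Conditional on $(\mathcal D_s,\mathcal D_{-k})$, $g_k$ is fixed. The points $X_i$, $i\in I_k$, are i.i.d.\ and independent of $g_k$, so $\frac Kn\sum_{i\in I_k}\{g_k(X_i)-\E_t g_k\}$ has conditional mean zero. Its conditional variance is at most $(K/n)\E_t[g_k^2\mid\mathcal D_s,\mathcal D_{-k}]=O_p(r_n/n)$. Likewise, the unlabeled sample is independent of all gold data, so the term $\frac1N\sum_j\{g_k(\widetilde X_j)-\E_t g_k\}$ has conditional variance $O_p(r_n/N)$.

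A conditional Chebyshev argument then makes each piece $O_p(\sqrt{r_n/n})=o_p(n^{-1/2})$. The population means $\E_t g_k$ cancel exactly between the two pieces of each fold, which is the role of pairing the same $\widehat\Delta^{(-k)}$ in both terms. Since $K$ is fixed, summing over folds preserves the rate. Cross-fold dependence never enters, because each fold is bounded separately before summing.

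\textbf{Limit distribution.} The estimator now equals
\[
\frac1n\sum_i\{Y_i-\boldsymbol\lambda^\top H_\dagger(X_i)-c\}+\frac1N\sum_j\{\boldsymbol\lambda^\top H_\dagger(\widetilde X_j)-c'\}+o_p(n^{-1/2}),
\]
with centering constants $c,c'$ equal to the respective population means. The two sums are independent i.i.d.\ averages with finite $(2+\eta)$ moments: $H_\dagger$ is square integrable, and the moment conditions of Theorem~\ref{thm:tc-cross-mean} apply. The Lyapunov CLT applied to each average, together with $n/N\to\rho$ and independence of the two samples, gives the stated variance $V^\dagger_{\rm Jcross}(\boldsymbol\lambda)=\Var_t\{Y-\boldsymbol\lambda^\top H_\dagger\}+\rho\Var_t\{\boldsymbol\lambda^\top H_\dagger\}$.

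\textbf{Optimal weight.} Expanding the variance gives the quadratic
\[
V^\dagger_{\rm Jcross}(\boldsymbol\lambda)=\Var_t(Y)-2\boldsymbol\lambda^\top c_\dagger+(1+\rho)\boldsymbol\lambda^\top\Sigma_\dagger\boldsymbol\lambda .
\]
This is convex since $\Sigma_\dagger\succeq0$. Its first-order condition is $(1+\rho)\Sigma_\dagger\boldsymbol\lambda=c_\dagger$. When $\Sigma_\dagger$ is nonsingular, this yields the unique minimizer $\boldsymbol\lambda^*_{\rm Jcross}=\Sigma_\dagger^{-1}c_\dagger/(1+\rho)$.

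In the singular case the first-order condition is still solvable. Indeed, $c_\dagger\in\mathrm{range}(\Sigma_\dagger)$: if $\Sigma_\dagger v=0$, then $v^\top H_\dagger$ is a.s.\ constant, so $v^\top c_\dagger=0$. Hence $\Sigma_\dagger^\dagger c_\dagger/(1+\rho)$ is the minimum-norm solution. Substituting it back gives the minimum value $\Var_t(Y)-c_\dagger^\top\Sigma_\dagger^\dagger c_\dagger/(1+\rho)$.

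\textbf{Dominance.} The infimum over $\mathbb R^2$ is no larger than the infimum over the nested subfamilies. The line $(\lambda,0)$ reproduces the source PPI++ limiting variance: this is Theorem~\ref{thm:tc-cross-ppipp} with $f_\dagger$ replaced by $f_s$, which is trivially consistent. The diagonal $(\lambda,\lambda)$ gives $\lambda^\top H_\dagger=\lambda f_\dagger$, which matches the scalar TC--Cross-PPI++ variance of Theorem~\ref{thm:tc-cross-ppipp}. The point $(0,0)$ gives $\Var_t(Y)$. Taking minima over these subfamilies yields \eqref{eq:joint-cross-dominance}.
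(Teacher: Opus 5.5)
Your proposal is correct and follows essentially the same route as the paper. You decompose around $H_\dagger$ and kill each fold-wise remainder $\frac1N\sum_j e_k(\widetilde X_j)-\frac Kn\sum_{i\in I_k}e_k(X_i)$ by conditional Chebyshev given $(\mathcal D_s,\mathcal D_{-k})$ with $K$ fixed. You then apply a CLT to the two independent i.i.d.\ averages, minimize the quadratic (using the null-space argument in the singular case), and invoke the nested subfamilies $(\lambda,0)$, $(\lambda,\lambda)$, $(0,0)$.

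The only cosmetic difference is that you split the contrast coordinatewise into $A_n$ and $B_n$, whereas the paper works with the vector $\widehat H_i^{\rm oof}-H_\dagger(X_i)$. One small tightening: since the summands are i.i.d.\ with a fixed law, square integrability alone suffices for the ordinary CLT, so you need not appeal to $(2+\eta)$ moments of $\Delta^\dagger(X)$, which the stated assumptions do not directly provide.
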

The asymptotic no-harm property relative to source PPI++ does not require the transfer learner to be correctly specified. It suffices for fold-specific estimators to converge to a stable, square-integrable limiting function $\Delta^\dagger$. 
%Thus the asymptotic no-harm property relative to source PPI++ does not require the transfer learner to be correctly specified. It is enough that the fold-specific transfer estimators converge to a stable square-integrable limiting function $\Delta^\dagger$. If the transfer correction is unhelpful, the joint optimizer can asymptotically assign it no weight.
If in addition, the transfer learner is consistent for the oracle correction,
so that $
\Delta^\dagger=\Delta^*,$
then
%\begin{equation}\label{eq:joint-cross-oracle-equality}
$V_{\rm Jcross}^{\dagger,*}
=
\Var_t(Y)
-
\frac{\Var_t\{f_t(X)\}}{1+\rho}.
$
%This equals the oracle TC--Cross-PPI++ variance in Theorem~\ref{thm:tc-cross-ppipp}. Hence convergence to a stable transfer function is sufficient for the no-harm safeguard. However, if we want our estimator to be strictly more efficient than classical inference, then we need consistency for the oracle correction.
This equals the oracle TC--Cross-PPI++ variance in Theorem~\ref{thm:tc-cross-ppipp}, and is smaller than the variance of classical inference. While convergence to a stable transfer function ensures the no-harm safeguard, consistency for the oracle correction guarantees greater efficiency than classical inference. 
Ordinary source PPI++ has limiting variance
$
\Var_t(Y)
-
\frac{
\Cov_t\{Y,f_s(X)\}^2
}{
(1+\rho)\Var_t\{f_s(X)\}
}.
$
Since $f_t(X)=\E_t[Y\mid X]$, we have, $
\Cov_t\{Y,f_s(X)\}
=
\Cov_t\{f_t(X),f_s(X)\},$
and the covariance-variance inequality gives,
$
\frac{
\Cov_t\{Y,f_s(X)\}^2
}{
\Var_t\{f_s(X)\}
}
\le
\Var_t\{f_t(X)\}.$
Therefore, TC--Cross-PPI++, with oracle power tuning parameters and consistent transfer correction, is guaranteed to be no less efficient than source
PPI++. Theorem~\ref{thm:joint-tc-cross} strengthens this comparison by retaining source PPI++ as a nested subfamily even when the transfer learner converges to a misspecified direction.

If a feasible weight estimator satisfies $
\widehat{\boldsymbol\lambda}
-
\boldsymbol\lambda_{\rm Jcross}^*
=o_p(1),$
substitution of $\widehat{\boldsymbol\lambda}$ provides the same asymptotic limiting distribution. We can estimate $\boldsymbol\lambda$ using the plug-in estimator (with a small ridge regularization when needed), $\widehat{\boldsymbol\lambda}
=
\frac1{1+n/N}
(\widehat\Sigma_H+\tau I_2)^{-1}
\widehat c_H,$ with $
\tau\ge0
$. Estimating the weights $\boldsymbol\lambda$ using the above estimator incurs an additional inflation of $O(n^{-1}+r_n + \tau^2)$ to the $\sqrt{n}$ scale variance (Proposition \ref{prop:joint-weight-rate}, Appendix \ref{app:theory}).
The empirical estimators for the asymptotic variances in the above theorems, along with results on their consistency and coverage rates of the corresponding Wald intervals, are provided in Appendix \ref{app:variance-estimation}. Unlike \cite{zrnic2024cross}, we avoid expensive bootstrap to estimate standard errors and obtain confidence intervals. Instead, we show that despite using cross-fitting, we have the correct asymptotic coverage rate in Lemma \ref{lem:mean-var-consistency}.
%The joint formulation is valuable as its safeguard is independent of the transfer model's form or accuracy, including Lasso, last-layer fine-tuning, adapters, LoRA, or a misspecified procedure. 
%If transfer calibration is ineffective, the optimizer can shrink the correction, returning to standard PPI++; otherwise, it leverages additional target-domain information.
%When the source predictor and transfer correction provide complementary insights, the joint estimator can outperform either scalar PPI++ estimators.
%The cross-fitted formulation makes the comparison with ordinary PPI++ at the same gold-label budget.
\vspace{-2mm}
\section{Nonasymptotic rates and Minimax Optimality}
\vspace{-0.1in}
\label{sec:minimax}
In this section, we specialize to the sparse linear calibration model and study the finite-sample cost of estimating the transfer correction. 
For balanced $K$-fold cross-fitting, each correction is estimated on $
n_{\rm tr}\coloneqq \frac{K-1}{K}n$ observations, while all $n$ gold observations contribute to rectification.
Define $
r_n
\coloneqq 
\max_{1\le k\le K}
\E\!\left[
\E_t\{e_k(X)^2\mid\mathcal D_s,\mathcal D_{-k}\}
\mid\mathcal D_s
\right].$
Under sparse linear calibration with lasso, $
r_n
\lesssim
\frac{\sigma_\epsilon^2s\log p}{n_{\rm tr}}$ (Lemma \ref{lem:exp-lasso}, Appendix),
up to constants determined by the design and noise conditions (Appendix~\ref{app:theory}). Now, we can state the non-asymptotic MSE bound theorem as follows.

\begin{theorem}[Nonasymptotic MSE bound]
\label{thm:tc-cross-nonasymptotic}
For fixed, balanced folds, and finite second moments, the TC-Cross PPI satisfies
\begin{equation}\label{eq:tc-cross-mse-upper}
\operatorname{MSE}\!\left(
\widehat\mu_{\rm TC\text{-}Cross}
\mid\mathcal D_s
\right)
\le
\frac{\sigma_\epsilon^2}{n}
+
\frac{2V_t}{N}
+
\left(
\frac2N+\frac K n
\right)r_n.
\end{equation}
Further consider the TC--Cross-PPI++ estimator with $
\lambda_0
\coloneqq 
\frac{1}{1+n/N}$ as the tuning parameter. Then
\begin{align}
\operatorname{MSE}\!\left(
\widehat\mu_{{\rm TC\text{-}Cross},\lambda_0}
\mid\mathcal D_s
\right)
\leq
\frac{\sigma_\epsilon^2}{n}
+
\frac{V_t}{n+N}
 + K
\frac{N}{n(n+N)}\,r_n.
\label{eq:tc-cross-ppipp-mse-upper}
\end{align}
\end{theorem}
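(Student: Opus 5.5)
Because Proposition~\ref{prop:tc-cross-unbiased} makes both estimators conditionally unbiased, the MSE equals the variance, which I would bound by an exact decomposition into an oracle part plus fold-specific error terms. Throughout, $\sigma_\epsilon^2=\Var_t(\varepsilon)$ and $V_t=\Var_t\{f_t(X)\}$. Writing $\widehat f_t^{(-k)}=f_t+e_k$ in \eqref{eq:tc-cross-fold-decomp} gives
\[
\widehat\mu_{\rm TC\text{-}Cross}-\theta_t^*
= \underbrace{\tfrac1n\textstyle\sum_i\varepsilon_i}_{A}
+\underbrace{\tfrac1N\textstyle\sum_j\{f_t(\widetilde X_j)-\E_t f_t\}}_{B}
+\underbrace{\tfrac1K\textstyle\sum_k\Bigl[\tfrac1N\sum_j e_k(\widetilde X_j)-\tfrac Kn\sum_{i\in I_k}e_k(X_i)\Bigr]}_{R}.
\]
Here $A$ and $B$ are independent with variances $\sigma_\epsilon^2/n$ and $V_t/N$. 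I would split $R=R_U+R_L$ into the unlabeled-sample part and the labeled-sample part, each centered at the conditional mean $\E_t[e_k(X)\mid\mathcal D_{-k}]$; these centers cancel between $R_U$ and $R_L$.

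For the labeled part, set $T_k=\frac Kn\sum_{i\in I_k}\{e_k(X_i)-\E_t[e_k\mid\mathcal D_{-k}]\}$. Conditionally on $\mathcal D_{-k}$, the summands are i.i.d. and mean zero, so $\E[T_k^2]\le \frac Kn r_n$. Jensen's inequality over the average of the $K$ folds gives $\E[(\frac1K\sum_kT_k)^2]\le\frac1K\sum_k\E[T_k^2]\le \frac Kn r_n$. This is cruder than exploiting cross-fold covariances, but it matches the target constant $K/n$.

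For the unlabeled part, $\widetilde X$ is independent of all gold data. Conditioning on the gold data, $\frac1N\sum_j\bar e(\widetilde X_j)$ with $\bar e=\frac1K\sum_k e_k$ has conditional variance at most $\frac1N\E_t\bar e^2\le\frac1N r_n$. The cross terms need the most care.
\begin{itemize}
\item $\E[AR_U]=0$ because $A$ involves only gold data and $R_U$ is centered given the gold data.
\item $\E[BR_L]=0$ because $R_L$ is centered given $\mathcal D_{-k}$ and is independent of $\widetilde X$.
\item $B$ and $R_U$ are correlated; rather than compute that covariance, I would use $2\E[BR_U]\le \Var B+\Var R_U$. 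This produces the factors $2V_t/N$ and $2r_n/N$.
\item $\E[AR_L]$ is the main obstacle. Terms with $i\in I_k$ vanish since $\E[\varepsilon_i\mid X_i,\mathcal D_{-k}]=0$. Terms with $i\notin I_k$ involve $\varepsilon_i\in\mathcal D_{-k}$ and $e_k(X_{i'})$ for $i'\in I_k$. I would condition on $\mathcal D_{-k}$: the inner average over $I_k$ is then centered, so the product has zero mean.
\end{itemize}
Summing the pieces yields \eqref{eq:tc-cross-mse-upper}.

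For the PPI++ bound, I would write $\widehat\mu_{\lambda}-\theta_t^*=(\bar Y_n-\theta_t^*)+\lambda\{\cdots\}$ and substitute $\widehat f_t^{(-k)}=f_t+e_k$. The oracle part becomes $\frac1n\sum_i(Y_i-\lambda f_t(X_i))+\frac\lambda N\sum_j f_t(\widetilde X_j)$, centered. Its variance is
\[
\tfrac1n\Var(\varepsilon+(1-\lambda)f_t)+\tfrac{\lambda^2}N V_t=\tfrac{\sigma_\epsilon^2}n+V_t\Bigl[\tfrac{(1-\lambda)^2}n+\tfrac{\lambda^2}N\Bigr].
\]
At $\lambda_0=N/(n+N)$ the bracket equals $1/(n+N)$. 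The remainder is $\lambda_0R$. Here I would avoid the crude Cauchy--Schwarz step by noting that the oracle part and $R_U$ are both functions of $\widetilde X$. Grouping $\frac{\lambda_0}N\sum_j\{f_t+\bar e\}(\widetilde X_j)$ and using the exact variance gives $\lambda_0^2\Var(f_t+\bar e)/N$. This variance is at most $\lambda_0^2(V_t+2\sqrt{V_tr_n}+r_n)/N$, which is not as clean as the claim. The cleanest route would instead accept the cross-term bounds as above and absorb constants: $\lambda_0^2K r_n/n\le K\frac{N}{n(n+N)}r_n$ because $\lambda_0\le1$ and $\lambda_0 = N/(n+N)$.

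The $R_U$ contribution $\lambda_0^2 r_n/N\le \frac{N}{(n+N)^2}r_n$ is dominated by the same bound. The covariance of $R_U$ with the $\widetilde X$-part of the oracle term is what I expect to be delicate. One option is to show it vanishes on centering by $\E_t[\bar e\mid\text{gold}]$, together with the residual covariance $\Cov_t(f_t,\bar e)$. The other is to bound it and fold it into the factor $K$. I would try the former first, checking whether $\Cov_t(f_t,\bar e)$ is controlled.
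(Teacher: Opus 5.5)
\textbf{First bound.} Your argument is sound. Splitting $R=R_U+R_L$ and proving $\E R_U^2\le r_n/N$ and $\E R_L^2\le Kr_n/n$ handles the remainder. Together with the vanishing of $\E[AR_U]$, $\E[AR_L]$, $\E[BR_L]$ and the bound $2\E[BR_U]\le (V_t+r_n)/N$, this gives exactly $\sigma_\epsilon^2/n+2V_t/N+(2/N+K/n)r_n$. You should also list $\E[R_UR_L]=0$, which holds because $R_U$ is centered given the gold data while $R_L$ is gold-measurable. Your route differs from the paper's only in bookkeeping. The paper keeps $R_n$ whole, bounds $\E[R_n^2\mid\mathcal D_s]\le(1/N+K/n)r_n$ fold by fold, and bounds $|\E[B_NR_n\mid\mathcal D_s]|\le\sqrt{V_tr_n}/N$ by Cauchy--Schwarz before applying AM--GM.

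\textbf{Second bound.} This one is not proved, and none of the routes you list closes it. Write the error as $A+(1-\lambda)B^L+\lambda B+\lambda R$ with $B^L=\frac1n\sum_i\{f_t(X_i)-\E_tf_t(X)\}$.

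Besides $\E[BR_U]$, there is a second nonzero interaction you never examine: $\E[B^LR_L]$. The labeled oracle fluctuation is correlated with the held-out calibration errors through the shared $X_i$, $i\in I_k$. Let $c_k\coloneqq\Cov_t\{f_t(X),e_k(X)\mid\mathcal D_s,\mathcal D_{-k}\}$ and $\bar c\coloneqq K^{-1}\sum_kc_k$. Condition on the gold data for $R_U$, and fold by fold on $\mathcal D_{-k}$ for $R_L$. This gives
\[
\E[BR_U\mid\mathcal D_s]=\frac1N\,\E[\bar c\mid\mathcal D_s],\qquad
\E[B^LR_L\mid\mathcal D_s]=-\frac1n\,\E[\bar c\mid\mathcal D_s].
\]
All other cross terms vanish, so the full interaction is $2\lambda\{\lambda/N-(1-\lambda)/n\}\E[\bar c\mid\mathcal D_s]$. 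It is identically zero at $\lambda_0=N/(n+N)$, because $\lambda_0/N=(1-\lambda_0)/n=1/(n+N)$.

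This exact cancellation is the missing idea. It yields $\operatorname{MSE}=\sigma_\epsilon^2/n+V_t/(n+N)+\lambda_0^2\E[R^2\mid\mathcal D_s]$. Then $\lambda_0^2(1/N+K/n)r_n=N(n+KN)r_n/\{n(n+N)^2\}\le KNr_n/\{n(n+N)\}$, since $n+KN\le K(n+N)$.

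Your alternatives each fail:
\begin{itemize}
\item Reusing $2\E[BR_U]\le(V_t+r_n)/N$ adds $\lambda_0^2V_t/N$, so the coefficient of $V_t$ exceeds $1/(n+N)$. It also leaves $\E[B^LR_L]$ unaccounted for.
\item You cannot \emph{fold it into the factor $K$}. The interaction can be of exact order $\sqrt{V_tr_n}$: take $e_k=\eta f_t$ with $\eta>0$ deterministic and $\E_tf_t(X)=0$, so $\bar c=\eta V_t=\sqrt{V_tr_n}$. This is not $O(r_n)$ as $r_n\to0$ with $n,N$ fixed.
\item $\E[BR_U]$ does not vanish after any centering, since $\Cov_t(f_t,\bar e)$ is generically nonzero. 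It disappears only jointly with the labeled term, and only at $\lambda_0$.
\end{itemize}
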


The difference between the two bounds clarifies the role of power tuning. Power tuning is not only a safeguard against a poor predictor. When the target predictor is consistently learned, the finite-sample choice $\lambda_0=N/(n+N)$ changes the oracle signal term from $V_t/N$ to $V_t/(n+N)$ and cancels the leading interaction between the oracle prediction fluctuation and the calibration error. Consequently, the $O(\sqrt{r_n}/N)$ term appearing
in \ref{eq:tc-cross-mse-upper} disappears and the calibration cost in
\ref{eq:tc-cross-ppipp-mse-upper} is proportional to $r_n$. We next give a matching information-theoretic lower bound. Consider the
Gaussian sparse-transfer subclass
${\cal G}_s(V_s,V_\delta,\sigma_\epsilon^2)$ defined in Appendix~\ref{app:minimax},
and let $
\psi_{n,p,s}(V_\delta,\sigma_\epsilon^2)
\coloneqq 
\min\left\{
V_\delta,\,
\frac{\sigma_\epsilon^2(s-1)
\log\!\{e(p-1)/(s-1)\}}
{n-1}
\right\}.$
\begin{theorem}[Minimax lower bound with an unknown sparse transfer direction]
\label{thm:minimax-main}
Assume $n\ge3$, $N\ge1$, and $2\le s\le p/2$. There exists a universal constant $c>0$ such that
\begin{align}
&\inf_{\widehat\theta\in\mathfrak T_{n,N}}
\sup_{F\in\ {\cal G}_s(V_s,V_\delta,\sigma_\epsilon^2)}
\E_F\bigl[(\widehat\theta-\theta)^2\bigr]
~\ge~
\frac{\sigma_\epsilon^2}{n}+\frac{V_t}{n+N}
+
c\,\frac{N}{n(n+N)}
\psi_{n,p,s}(V_\delta,\sigma_\epsilon^2)
.
\label{eq:minimax-main}
\end{align}
\end{theorem}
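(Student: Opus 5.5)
The lower bound in Theorem~\ref{thm:minimax-main} has three terms. I would prove it by splitting it into two separate lower bounds, each obtained on a sub-family of ${\cal G}_s(V_s,V_\delta,\sigma_\epsilon^2)$, and then combining them. Since the maximum of two nonnegative quantities is at least half their sum, the combination costs only a constant factor. That factor can be absorbed if the ``oracle'' part is shown with exact constants and the sparse part with constant $c$. Alternatively, one can embed both perturbations in a single Bayesian prior so that the risks add.

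\textbf{Step 1 (oracle part).} Fix the transfer direction $\delta^*$, so that $f_t$ is known up to a scalar. Consider the Gaussian submodel in which the mean $\theta$ is unknown and $X$ is Gaussian. Write $Y=\theta+g(X)+\varepsilon$, where $g(X)=f_t(X)-\E f_t(X)$ has variance $V_t$ and $\varepsilon\sim N(0,\sigma_\epsilon^2)$. The parameter $\theta$ can be shifted in two ways: through the mean of $X$ along the direction that drives $f_t$ (this is seen by all $n+N$ covariate draws), and through an intercept in the noise (seen only by the $n$ labels). Placing a Gaussian prior on each shift, the Bayes risk is the posterior variance of a Gaussian linear model. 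Letting the prior variances go to infinity gives exactly $\sigma_\epsilon^2/n+V_t/(n+N)$. This is the standard semi-supervised mean lower bound, i.e.\ the efficient influence function variance, and it matches the $\lambda_0$ upper bound in Theorem~\ref{thm:tc-cross-nonasymptotic}.

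\textbf{Step 2 (sparse-direction part).} Now fix the marginal of $X$, say $N(0,I_p)$ after reparametrisation. Let $\delta^*$ range over a packing of $(s-1)$-sparse vectors with one coordinate reserved, and with amplitude chosen so that $\|\delta\|_2^2\asymp\psi_{n,p,s}$. The dependence on $\theta$ enters as follows: $\theta=\E[f_s(X)]+\E[X]^\top\delta^*$. So I take $X$ to have a mean component $\mu$ in the reserved coordinate, which is estimable at rate $1/(n+N)$ from the unlabeled data. Equivalently, the unlabeled sample reveals $\E X$, but converting it into $\theta$ requires knowing $\delta^*$ on the relevant coordinates.

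A cleaner route is a two-point-in-$\theta$ reduction conditional on each $\delta$. By Le Cam/Fano over the Varshamov--Gilbert packing of size $\exp\{c(s-1)\log(e(p-1)/(s-1))\}$, the labeled KL divergence is $n\|\delta-\delta'\|^2/(2\sigma_\epsilon^2)$. This forces an estimation error for $\delta$ of order $\psi$ in squared $\ell_2$. That error translates into an error in $\theta$ weighted by the uncertainty of the covariate mean, which is of size $N/(n(n+N))$: the fraction of the covariate-mean fluctuation that the labeled residuals cannot cancel. The $n-1$ in the denominator comes from sacrificing one labeled degree of freedom to the intercept. The cap by $V_\delta$ comes from the constraint $\Var(\Delta^*)\le V_\delta$ in the class.

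\textbf{Main obstacle.} The hard part is Step 2's conversion of ignorance about $\delta$ into a $\theta$-risk with the precise multiplier $N/(n(n+N))$. I would first try a Bayesian argument: put a Gaussian prior on the covariate mean shift $\mu$ and a uniform prior over the sparse packing for $\delta$. Then compute the Bayes risk of $\theta=\mu^\top\delta+\text{const}$. Given $\delta$, the posterior of $\mu^\top\delta$ has variance that mixes the $(n+N)$ covariate precision with the $n$ label precision. The unresolved component is bounded below by $\frac{N}{n(n+N)}\,\E\|\delta-\E[\delta\mid\text{data}]\|^2$, and the Fano bound above controls that expectation. Finally, I would check that the constructed distributions lie in ${\cal G}_s(V_s,V_\delta,\sigma_\epsilon^2)$ (variance budgets, sparsity $s$ counting the reserved coordinate), and add the Step 1 bound using independent priors so that the Bayes risks add.
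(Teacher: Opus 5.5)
Your final plan matches the paper's route: one prior with diffuse Gaussian priors on the location parameters, a uniform prior on a sparse packing for $\delta$, Fano in the centered $(n-1)$-sample regression, and conversion of $\delta$-uncertainty into $\theta$-risk at rate $N/\{n(n+N)\}$. The gap is in the step that is supposed to produce the \emph{sum} of the oracle and calibration terms.

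\textbf{The max-combination fails.} The oracle part must carry coefficient exactly one. Since $c\,\frac{N}{n(n+N)}\psi_{n,p,s}\le \frac{\sigma_\epsilon^2}{n}\cdot\frac{c(s-1)\log\{e(p-1)/(s-1)\}}{n-1}$, the calibration term is at most $\sigma_\epsilon^2/n$ whenever $c(s-1)\log\{e(p-1)/(s-1)\}\le n-1$, which covers the whole estimable regime. There the maximum of your two bounds is just $\sigma_\epsilon^2/n+V_t/(n+N)$, the calibration term is lost, and no constant can be ``absorbed.''

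\textbf{Independent priors do not make the risks add.} Two separately computed Bayes risks, one with $\delta^*$ fixed (Step 1) and one with $\delta$ drawn from a packing (Step 2), only bound the minimax risk by their maximum. Once $\delta$ is random, the Step 1 calculation has to be redone under that same prior, and that is where your construction breaks.

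\textbf{Why the packing must lie on the shell.} Write $M=n+N$, $D_X=\bar X_L-\bar X_U$ and $C_\delta=\Var(\delta\mid\mathcal D_c)$. In the single computation (Lemma~\ref{lem:bayes-decomp}), the law of total variance gives
$\Var(\theta\mid\mathcal D)=\sigma_\epsilon^2/n+V_s/M+M^{-1}\E[\norm{\delta}_2^2\mid\mathcal D_c]+(N/M)^2D_X^\top C_\delta D_X$.
Because $D_X$ is independent of $\mathcal D_c$ with $\E D_XD_X^\top=(1/n+1/N)I_p$, the last term averages to $\frac{N}{nM}\E\Tr C_\delta$. So the oracle term $(V_s+V_\delta)/M$ appears only if $\norm{\delta}_2^2=V_\delta$ almost surely under the packing prior.

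Your packing, ``with amplitude chosen so that $\norm{\delta}_2^2\asymp\psi_{n,p,s}$,'' yields $(V_s+\psi_{n,p,s})/M$ instead. Those vectors are also outside ${\cal G}_s(V_s,V_\delta,\sigma_\epsilon^2)$, which requires $\norm{\delta}_2^2=V_\delta$ exactly, not $\le V_\delta$.

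\textbf{The missing construction.} Take $\delta_v=b\,e_1+a(0,v^\top)^\top$, where $v$ ranges over a constant-weight Varshamov--Gilbert code of weight $s-1$ in $\{0,1\}^{p-1}$, $a^2=r_*/(s-1)$, $b^2=V_\delta-r_*$, and $r_*\asymp\psi_{n,p,s}$. The reserved coordinate absorbs the leftover norm; it does not host a covariate mean, as you suggested. This keeps every point $s$-sparse and on the shell, and leaves the separations $a^2d_H(v,v')$ and the KL bound unchanged. It is also where the $(s-1)$, the $(p-1)$, and the cap at $V_\delta$ (needed for $b^2\ge0$) in $\psi_{n,p,s}$ come from.

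\textbf{Keep a free intercept.} $\theta$ must remain a free location parameter with its own diffuse prior, i.e.\ $f_t-S$ must have a free intercept $\theta-\mu_s-\mu^\top\delta$. Your Step~2 formula $\theta=\E f_s(X)+\E[X]^\top\delta^*$ drops it. With $\delta$ known and no intercept, the Bayes risk is only $(V_s+\norm{\delta}_2^2)/M$, so it is the intercept that generates the $\sigma_\epsilon^2/n$ term. It also forces $\delta$ to be learned from the centered regression with $n-1$ effective observations.
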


\begin{comment}
The first term $R_{\rm or}$ is the prediction-powered floor if the centered
target prediction direction were known. The second term $R_{\rm cal}$ is the additional
price of learning the unknown sparse transfer direction. In the estimable sparse regime
$\frac{\sigma_\epsilon^2 s\log(ep/s)}{n}
\lesssim V_\delta$,
Theorem~\ref{thm:minimax-main} gives, up to constants,
$
R_{\rm cal}
\gtrsim
\frac{\sigma_\epsilon^2 N}{n(n+N)}
\frac{s\log(ep/s)}{n}.
$
\end{comment}

\begin{remark}[Minimax optimality]
\label{cor:tc-cross-ppipp-minimax}
%Fix $K$ and assume $r_n\lesssim \sigma_\epsilon^2 s\log p/n_{\rm tr}$.
In the estimable regime above and on the Gaussian class, both the upper bound \ref{eq:tc-cross-ppipp-mse-upper} and the lower bound \ref{eq:minimax-main} have the oracle rate $\sigma_\epsilon^2/n+V_t/(n+N)$ and a calibration cost of order $
\frac{N}{n(n+N)}\cdot\frac{s}{n}.$
The two bounds differ only by a multiplicative factor independent of $n$, $N$, $p$, or $s$, and by the logarithmic factor whenever $s\ll p$; the lasso-$r_n$ upper bound carries $\log p$, while the lower bound carries $\log(ep/s)$. TC--Cross-PPI++ at $\lambda_0=N/(n+N)$ is thus minimax optimal.
\end{remark} 

\vspace{-0.1in}
\section{Empirical Experiments and Results}\label{sec:empirical}
\vspace{-0.1in}
\begin{figure}[htb!]
  \centering
  \includegraphics[width=\textwidth]{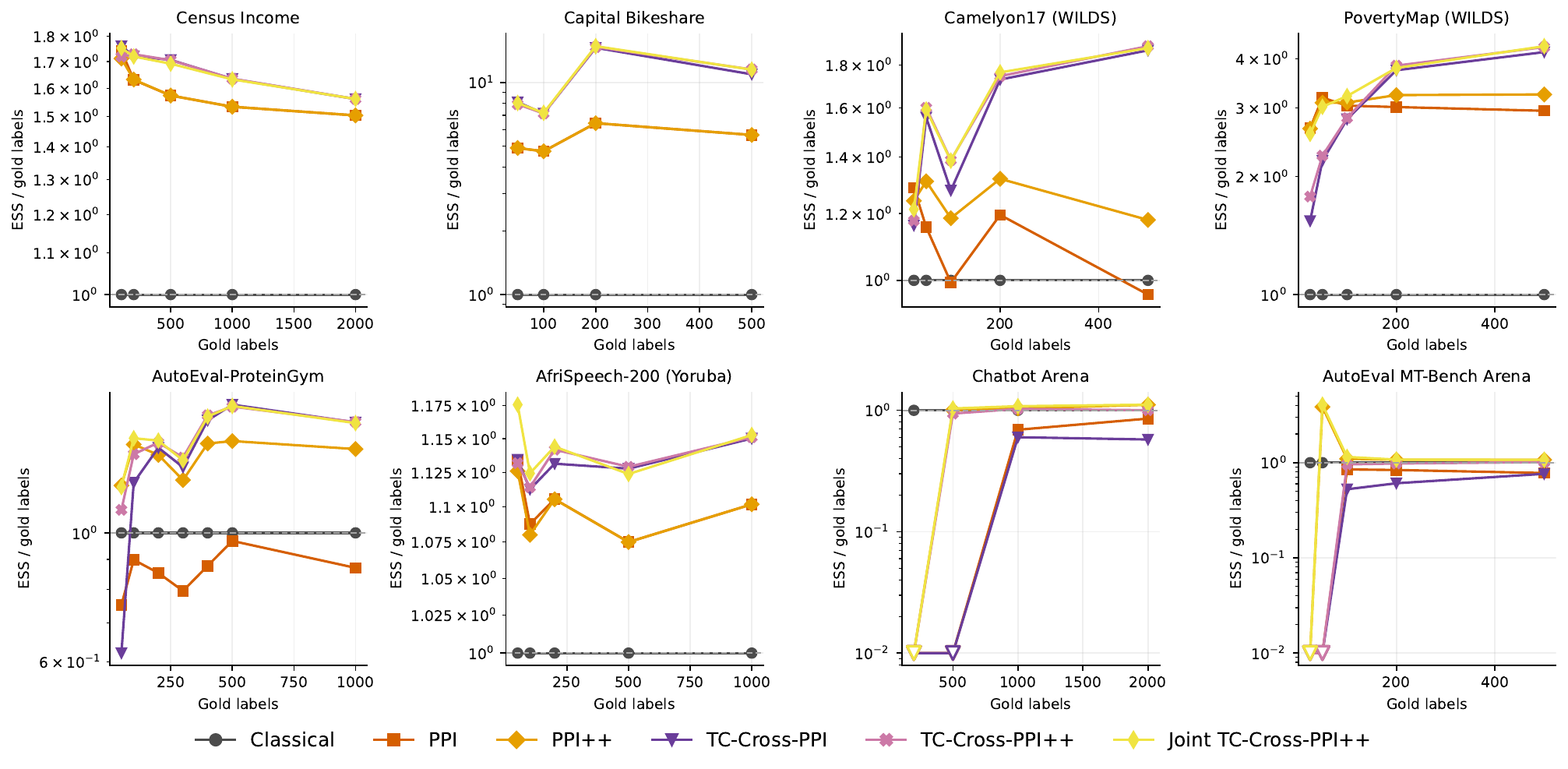}
  \caption{Empirical effective sample size (ESS) divided by the number of gold labels, on a logarithmic $y$-axis. The dashed line at $1$ is classical inference. Values above one are MSE \textbf{improvements} over the gold-only estimator. Values below $0.01$ are drawn at $0.01$ as open downward triangles.}
  \vspace{-0.1in}
  \label{fig:empirical-ess}
\end{figure}
We evaluate transfer-calibrated PPI variants across a wide range of datasets. We use all the target-domain observations for evaluation, with $n$ sampled points serving as ``gold labels'' and the rest as prediction-rich unlabeled samples. The ground truth is the parameter estimated using the entire target-domain data. We set the nominal coverage to $95\%$, except for the two AutoEval replications which match the original coverage of $90\%$ (Appendix~\ref{app:experimental_setup}).
In addition to reporting classical inference, PPI, and PPI++, we report metrics for the cross-fitted and joint-power tuned variants of TC-PPI
\footnote{Split TC-PPI(++) performs poorly due to half the data not being used for rectification and is omitted.}. We include some results with other PPI variants in Appendix~\ref{app:other_ppi_results}. Metrics reported include MSE, interval width, and Effective Sample Size (ESS) defined as $\operatorname{ESS}\coloneqq n\cdot({\operatorname{MSE}_{\rm classical}}/{\operatorname{MSE}_{\rm method}})$. 
% See Figure~\ref{fig:empirical-ess} for the relative ESS and Figure~\ref{fig:empirical-width} for confidence interval widths.
%We report coverage and interval width results in Appendix~\ref{app:additional_results}.
LoRA-based calibration for protein and LLM-arena applications is in Appendix~\ref{app:experimental_setup}~and~\ref{app:additional_results}.
\begin{figure}[hb!]
  \centering
  \includegraphics[width=\textwidth]{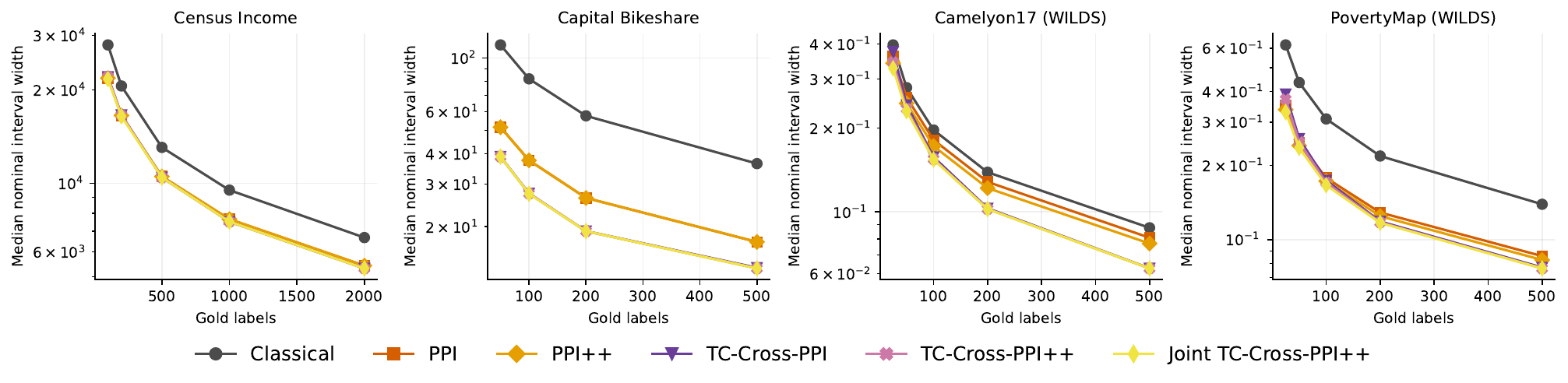}
    \caption{Median (over trials) width (in estimand's natural units (e.g., dollars for Census)) of nominal $95\%$ confidence intervals, on a logarithmic $y$-axis. Remaining datasets are in Appendix~\ref{app:additional_results}.}
  \label{fig:empirical-width}
\end{figure}
\vspace{-6mm}

\noindent\textbf{Census income.}
For mean income, the source XGBoost predictions are informative for the target year as PPI/PPI++ and the cross-fitted TC-PPI methods all substantially improve on classical inference, with relative MSE of $0.58$--$0.61$ at $n=200$ and ESS gains of $63$--$73\%$. TC--Cross-PPI++ and Joint TC--Cross-PPI++ are tied for best. Sparse calibration changes the residual variance by only a few percent (Table~\ref{tab:residual-variance} in Appendix~\ref{app:additional_results}), so most of this gain comes from the source predictor itself.

\noindent\textbf{Temporal transfer of bicycle demand (Bikeshare).}
We train a gradient-boosted source predictor on 2011 data and transfer it to 2012. The inferential target is the mean 2012 hourly rental demand, $234.67$. The source predictions correlate strongly with target outcomes ($0.953$) but their mean is only $150.27$, and the error varies with calendar and weather covariates rather than being a constant offset. Transfer calibration roughly halves the residual variance (Table~\ref{tab:residual-variance}), producing some of the largest gains we observe. At $n=100$, relative MSE ranges from $0.21$ for PPI++ down to $0.14$ for the cross-fitted methods, corresponding to ESS gains of $370$--$620\%$. Joint TC--Cross-PPI++ is best.

% \noindent\textbf{AfriSpeech-200 (Yoruba).} We estimate the mean word error rate (WER) for Yoruba-accented English speech (AfriSpeech-200~\citep{olatunji2023afrispeech}) using \texttt{wav2vec2-base-960h}~\citep{baevski2020wav2vec20frameworkselfsupervised} as the source ASR model. Gold labels come only from AfriSpeech-200's \texttt{train} split, while \texttt{dev} and \texttt{test} remain unlabeled. Since the splits may not contain the same mix of speakers, this departs slightly from the i.i.d. sampling we use for the other datasets. The gains are modest but consistent across all $n$. TC--Cross-PPI gains are between $9.2\%$ ESS at $n=1000$ and $20.3\%$ at $n=500$, and coverage stays within $0.91$--$0.97$ at every $n$ and method. 
\noindent\textbf{AfriSpeech-200 (Yoruba).} We estimate the mean word error rate (WER) for Yoruba-accented English speech~\citep{olatunji2023afrispeech} using \texttt{wav2vec2-base-960h}~\citep{baevski2020wav2vec20frameworkselfsupervised} as the source ASR model. The gains are modest but consistent across all $n$: every method improves on classical inference at every budget, TC--Cross-PPI's ESS gains range from $11.3\%$ at $n=100$ to $15.0\%$ at $n=1000$, and coverage stays within $0.91$--$0.98$ at every $n$ and method.

\noindent\textbf{Distribution shift in the wild.}
The two WILDS benchmarks~\citep{koh2021wilds} resemble Bikeshare: the source predictions are correlated with the target, but their errors differ across target sub-populations (hospitals, slides, countries) in ways the correction can learn. Camelyon17~\citep{bandi2018detection} has histopathology patches from hospitals with different staining protocols; we transfer a DenseNet-121 to two held-out hospitals to estimate tumor prevalence ($0.500$). PovertyMap~\citep{yeh2020using} pairs satellite imagery with survey-based wealth measurements; we transfer an 8-band ResNet18 to ten held-out countries to estimate mean wealth ($-0.049$). Both sources degrade out of domain. Camelyon17's accuracy falls from $0.966$ in distribution to $0.787$ and $0.673$ on the two held-out hospitals, PovertyMap's correlation falls from $0.868$ to $0.830$, and their mean predictions are $0.339$ and $0.115$ against true values of $0.500$ and $-0.049$. On PovertyMap, cross-fitted methods reach relative MSEs of $0.23$--$0.27$ for $n\ge200$, versus $0.31$--$0.34$ for PPI/PPI++; on Camelyon17 they reach $0.53$--$0.58$ versus $0.76$--$1.04$, with PPI trailing classical at several budgets. Coverage stays between $0.88$ and $0.99$.

% \subsection{Protein-Modeling Applications}
% \noindent\textbf{ProteinGym.}
% We analyze the \texttt{BLAT\_ECOLX\_Firnberg\_2014} assay with ESM-2 8M (\texttt{esm2\_t6\_8M\_UR50D}) as the source predictor. The raw source score is poorly aligned with experimental fitness, and vanilla PPI pins $\lambda$ to one, so it is highly inefficient here, with relative MSE above $20\times$ classical at every $n$. Power tuning fixes this. PPI++ reaches relative MSE of $0.79$--$0.82$ across $n=200$--$500$, and the cross-fitted methods improve further still, to $0.65$--$0.77$, or ESS gains of up to ${\sim}53\%$.
% %, PPI++ is $0.815$ ($+22.7\%$) and TC--Cross-PPI++/Joint are $0.773$/$0.766$ ($+29.4\%$/$+30.6\%$). 
% For model ranking we target $\mu_m=\mathbb E[Ys_m(X)]$, where $s_m$ is model $m$'s standardized zero-shot score. With three candidate ESM models, TC--Cross-PPI++ reaches mean rank correlation $1.000$ at $n=100$, against $0.917$ for PPI++ and $0.250$ for vanilla PPI.

\begin{table}[htb!]
  \centering
  \caption{Representative small-label results for different datasets and the inferential target in parentheses. MSE and interval width are normalized by classical inference at the same label budget.}
  \label{tab:empirical-main}
  \resizebox{\textwidth}{!}{%
    \begin{tabular}{lccccc}
\toprule
& \multicolumn{2}{c}{\textit{Tabular (Mean)}}
& \multicolumn{1}{c}{\textit{Speech (WER)}}
& \multicolumn{1}{c}{\begin{tabular}[c]{@{}c@{}}
\textit{Protein Language Models} \\
\textit{(Mean Mutational Fitness)}
\end{tabular}}
&
\\
\cmidrule(lr){2-3}
\cmidrule(lr){4-4}
\cmidrule(lr){5-5}
Method
& \begin{tabular}[c]{@{}c@{}}Census Income\\$n=200$\end{tabular}
& \begin{tabular}[c]{@{}c@{}}Capital Bikeshare\\$n=100$\end{tabular}
& \begin{tabular}[c]{@{}c@{}}AfriSpeech-200\\(Yoruba), $n=500$\end{tabular}
& \begin{tabular}[c]{@{}c@{}}AutoEval-ProteinGym\\$n=500$\end{tabular}
&
\\
\midrule
Classical
& (1.000, 1.000) & (1.000, 1.000) & (1.000, 1.000) & (1.000, 1.000) &
\\
Joint TC-Cross-PPI++
& (0.581, \textbf{0.798})
& (\textbf{0.139}, \textbf{0.334})
& (0.890, \textbf{0.946})
& (\textbf{0.605}, \textbf{0.806})
&
\\
PPI++
& (0.613, 0.803)
& (0.211, 0.460)
& (0.930, 0.963)
& (0.695, 0.850)
&
\\
TC-Cross-PPI++
& (\textbf{0.578}, 0.806)
& (0.140, 0.335)
& (\textbf{0.886}, 0.950)
& (0.607, 0.809)
&
\\

\midrule
& \multicolumn{2}{c}{\textit{WILDS (Mean)}}
& \multicolumn{2}{c}{\textit{LLM Arena (Bradley--Terry)}}
&
\\
\cmidrule(lr){2-3}
\cmidrule(lr){4-5}
Method
& \begin{tabular}[c]{@{}c@{}}Camelyon17 (WILDS)\\$n=100$\end{tabular}
& \begin{tabular}[c]{@{}c@{}}PovertyMap (WILDS)\\$n=200$\end{tabular}
& \begin{tabular}[c]{@{}c@{}}Chatbot Arena\\$n=500$\end{tabular}
& \begin{tabular}[c]{@{}c@{}}AutoEval MT-Bench Arena\\$n=500$\end{tabular}
&
\\
\midrule
Classical
& (1.000, 1.000) & (1.000, 1.000) & (1.000, 1.000) & (1.000, 1.000) &
\\
Joint TC-Cross-PPI++
& (0.721, \textbf{0.784})
& (0.264, \textbf{0.535})
& (\textbf{0.965}, \textbf{0.946})
& (0.933, \textbf{0.959})
&
\\
PPI++
& (0.844, 0.875)
& (0.310, 0.569)
& (0.996, 0.962)
& (\textbf{0.932}, 0.965)
&
\\
TC-Cross-PPI++
& (\textbf{0.720}, 0.788)
& (\textbf{0.260}, 0.538)
& (1.063, 1.009)
& (0.987, 0.976)
&
\\
\bottomrule
\end{tabular}%
  }
\end{table}

\noindent\textbf{AutoEval-ProteinGym (SPG1 replication).}
We reproduce AutoEval's~\citep{boyeau2024autoeval} released SPG1 assay comparison, in which seven candidate zero-shot models (see Appendix~\ref{app:experimental_setup}) provide source/annotator scores. PPI++ has relative MSE of $0.70$--$0.74$ across $n=200$--$500$. The cross-fitted methods improve further as $n$ grows, reaching an MSE of $0.61$, or ESS gains of up to $65\%$; Joint TC--Cross-PPI++ is consistently the best performing. We also study annotator-quality sensitivity across eight annotators (the seven candidates plus VESPA) and repeat it for the standard ProteinGym BLAT assay with three ESM-2 annotators (see Appendix~\ref{app:additional_results}).

\noindent\textbf{Chatbot Arena.}
We keep the 20 most compared models and use Meta-Llama-3-8B-Instruct as the source judge. With few labeled points, the 19-parameter Bradley--Terry model is underdetermined, and every method except Classical suffers quasi-separation. Near separation, the power-weight estimate becomes ill-conditioned, which is a finite-sample effect, not a violation of the asymptotic no-harm guarantee. (Theorem~\ref{thm:joint-tc-cross}). From $n\geq 500$, PPI++ and Joint TC--Cross-PPI++ match or beat Classical, reaching $0.90$ relative MSE (ESS $+11\%$) at $n=2000$.

\noindent\textbf{AutoEval MT-Bench Arena.} We reproduce AutoEval's MT-Bench experiment, matching human ratings to GPT-4 pairwise judgments across six systems, calibrated with frozen MiniLM sentence embeddings. Like Chatbot Arena, the power-tuned methods help reliably. PPI++ and Joint TC--Cross-PPI++ have relative MSE of $0.910$ and $0.870$ at $n=100$, or ESS gains of $9.9\%$ and $14.9\%$, and $0.932$ and $0.933$ at $n=500$, both $+7.2\%$. The fixed-$\lambda$ PPI and TC--Cross-PPI stay worse than classical throughout, and TC--Cross-PPI++ becomes beneficial at $n=500$ ($0.988$, $+1.3\%$). 

% We compare against StratPPI~\citep{fisch2024stratified} in Appendix~\ref{app:additional_results} on the Arena datasets.

% \subsection{Summary}
% Coverage approaches nominal levels once the gold sample is adequate for the target dimension, although undercoverage remains at the smallest Census sample sizes and in one ProteinGym annotator-quality configuration. Figure~\ref{fig:representative-intervals} illustrates the corresponding interval geometry: cross-fitted calibration frequently moves the point estimate toward the full-data target while retaining intervals substantially shorter than classical inference, whereas split TC intervals are often wider because only half the labels are available for rectification. The cross-fitted results materially change the empirical conclusion: much of split TC--PPI's poor performance was a label-allocation artifact. TC--Cross-PPI recovers full-label efficiency in the Census analyses, improves on PPI++ for all three ProteinGym annotators, and yields especially large gains under temporal shift in bicycle demand. It is not uniformly superior to PPI++, because calibration must improve the estimand-relevant influence function, and high-dimensional Arena inference remains unstable at extremely small $n$. Overall, two-fold transfer-calibrated cross-fitting is the preferred default implementation, while the Arena judge-quality study remains explicitly exploratory.
\vspace{-0.1in}
\section{Concluding Remarks}
\vspace{-0.08in}
%In this work, we introduce Transfer-Calibrated Prediction Powered Inference (TC-PPI), which adapts a pretrained source predictor to a target domain using the same gold-standard data used for the rectification step in PPI. TC-PPI avoids needing to train a target-domain predictor from scratch without having to accept an efficiency loss due to a poorly-aligned source predictor. We discuss how applying transfer-learning within PPI is non-trivial since splitting the gold sample or reusing it for both calibration and rectification leads to issues with unbiasedness and efficiency. To address this we've introduced cross-fitting construction along with a joint-power-tuning construction to guard against negative transfer. 
We introduce Transfer-Calibrated Prediction Powered Inference (TC-PPI), a framework that adapts a pretrained source predictor to a target domain using gold-standard data from PPI’s rectification step. This approach avoids the inefficiencies of training a target predictor from scratch and  provably addresses source-predictor misalignment issues. We highlight the challenges of transfer learning in PPI, where using the gold sample for both calibration and rectification can introduce bias. To mitigate, we propose cross-fitting and a joint power-tuning approach to prevent negative transfer. We recommend that researchers always adapt a pre-trained AI/ML model to their data using our TC-PPI framework for potentially significant gains with asymptotically negligible downside risk.

\begin{comment}
Transfer calibration does not alter the validity mechanism of PPI. The target gold sample is still what guarantees valid inference. What transfer calibration changes is the quality of the predictor entering the PPI correction. We formulate the source-to-target discrepancy as the general function
\[
\Delta^*(x)=f_t(x)-f_s(x),
\]
and allow the learned correction $\widehat\Delta$ to come from sparse linear calibration, last-layer fine-tuning, adapters, LoRA, or another transfer procedure. For mean estimation, vanilla PPI pays the drift variance $\Var_t\{\Delta^*(X)\}$ in its rectification term, whereas TC-PPI replaces it by the variance of $\widehat\Delta(X)-\Delta^*(X)$. Under sparse linear calibration, this specializes from
\[
\Var_t(\varepsilon)+(\delta^*)^\top\Sigma_t\delta^*
\]
to
\[
\Var_t(\varepsilon)+O_p\!\left(\frac{s\log p}{n_1}\right).
\]
The cross-fitted version additionally recovers all gold labels for rectification and, under prediction consistency, has the same first-order limit as oracle PPI using the target-calibrated predictor. Joint power tuning further nests ordinary PPI++ and TC-PPI++ in a single class. Its oracle no-harm safeguard is model-agnostic and does not require the transfer correction to be consistently estimated: the transfer direction can be ignored when it is unhelpful. Consistency is needed only for the stronger oracle-efficiency conclusion. This supports Joint TC--Cross-PPI++ as the recommended formulation, with the scalar cross-fitted procedures serving as important special cases. 
\end{comment}

\newpage
\subsection*{AI use statement}

For quick initial prototyping of our framework code, we relied on Codex/Claude Code. Subsequent versions relied on manual corrections and interventions.  The authors wrote all proofs and the original draft manually. We used Grammarly for grammar corrections and, in a few cases, to improve the prose for clarity. We did not use AI tools for the proofs or references. We used Google's PAT to identify gaps, which we addressed in the current version of the draft.

We, the authors, take responsibility for the final content of this work.

% \subsection*{Ethics statement}

% (This section is \textbf{recommended} and does not count toward the page limit.)

% If authors feel that their paper submission raises questions regarding the Code
% of Ethics, they are encouraged to include a paragraph of Ethics Statement (at
% the end of the main text before references) to address potential concerns where
% appropriate. Topics include, but are not limited to, studies that involve human
% subjects, practices to data set releases, potentially harmful insights,
% methodologies and applications, potential conflicts of interest and sponsorship,
% discrimination/bias/fairness concerns, privacy and security issues, legal
% compliance, and research integrity issues (e.g., IRB, documentation, research
% ethics). This statement should not be more than 1 page.

\subsection*{Reproducibility statement}

In the spirit of reproducibility, proofs are available for the theoretical aspects of our work, and details of the experiments, including datasets and models, can be found in Appendix~\ref{app:experimental_setup}. The source code is available in the supplemental material.

% \subsubsection*{Author Contributions}
% If you'd like to, you may include  a section for author contributions as is done
% in many journals. This is optional and at the discretion of the authors.

% \subsubsection*{Acknowledgments}
% Use unnumbered third level headings for the acknowledgments. All
% acknowledgments, including those to funding agencies, go at the end of the paper.

\bibliography{iclr2027_conference}
\bibliographystyle{iclr2027_conference}

\newpage
\appendix
\section{Nonasymptotic and minimax results for sparse linear calibration}
\label{sec:mse-information-gain}

\subsection{Nonasymptotic MSE upper bounds}
Recall, cross-fitting uses all $n$ gold observations for rectification, while each fold-specific transfer correction is estimated on a complementary training sample of size $
n_{\rm tr}\coloneqq \frac{K-1}{K}n$, under balanced $K$-fold cross-fitting. Further recall that we condition on the source data $\mathcal D_s$ and treat the source predictor $f_s$ as fixed. The target domain model with sparse linear discrepancy assumption is,
\[
Y=f_s(X)+X^\top\delta^*+\epsilon,
\qquad
\E_t[\epsilon\mid X]=0,
\qquad
\|\delta^*\|_0\le s,
\]
and define
\[
f_t(X)\coloneqq  f_s(X)+X^\top\delta^*=\E_t[Y\mid X].
\]
We use the following short-hand notations,
\[
\sigma_\epsilon^2\coloneqq \Var_t(\epsilon),
\qquad
V_t\coloneqq \Var_t\{f_t(X)\},
\qquad
V_s\coloneqq \Var_t\{f_s(X)\}.
\]
For simplicity, assume the predictors are centered, i.e., $\E_t[X]=0$. If this does not hold,  then the quadratic forms below should be interpreted with the centered covariance matrix. The variance of the source-to-target drift term is
\[
R_s\coloneqq \Var_t\{X^\top\delta^*\}
=(\delta^*)^\top\Sigma_t\delta^*.
\]

For each fold $k$, let $\widehat\delta^{(-k)}$ denote the lasso calibration estimator fitted on the training complement $\mathcal D_{-k}$ and define
\[
e_k(x)
\coloneqq 
x^\top(\widehat\delta^{(-k)}-\delta^*),
\]
as the prediction error for the calibration estimator.
Let
\begin{equation}\label{eq:fold-cal-risk}
a_{n,k}
\coloneqq 
\E_t\!\left[
e_k(X)^2
\mid
\mathcal D_s,\mathcal D_{-k}
\right],
\qquad
r_n
\coloneqq 
\max_{1\le k\le K}
\E\!\left[
a_{n,k}
\mid
\mathcal D_s
\right].
\end{equation}
Then the quantity $r_n$ measures the maximum foldwise prediction risk of the transfer-calibration step. For the lasso based transfer calibration, this prediction risk from proposition \ref{thm:lasso-calibration-rig},
\[
r_n
\lesssim
\frac{s\log p}{n_{\rm tr}}
=
\frac{s\log p}{(K-1)n/K}.
\]

%\subsection{A nonasymptotic oracle inequality for TC--Cross-PPI}

Recall the decomposition from Theorem~\ref{thm:tc-cross-mean},
\[
\widehat\mu_{\rm TC\text{-}Cross}-\theta_t^*
=
\frac1n\sum_{i=1}^n\epsilon_i
+
\frac1N\sum_{j=1}^N
\{f_t(\widetilde X_j)-\E_t f_t(X)\}
+
R_n,
\]
where
\[
R_n
=
\frac1K\sum_{k=1}^K
\left[
\frac1N\sum_{j=1}^N e_k(\widetilde X_j)
-
\frac K n\sum_{i\in I_k}e_k(X_i)
\right].
\]
The first two terms form the oracle PPI estimator based on $f_t$, while $R_n$ contains the error from estimating the transfer correction.

The bound in Theorem \ref{thm:tc-cross-nonasymptotic} separates the oracle PPI variance (which requires knowing $f_t$),
\[
\frac{\sigma_\epsilon^2}{n}+\frac{V_t}{N},
\]
from the finite-sample cost of estimating the fold-specific transfer corrections. In particular, if
\begin{equation}\label{eq:mean-lasso-risk-cross}
r_n
\le
C_{\rm cal}
\frac{s\log p}{n_{\rm tr}},
\end{equation}
then
\begin{align}
\operatorname{MSE}\!\left(
\widehat\mu_{\rm TC\text{-}Cross}
\mid\mathcal D_s
\right)
\le{}&
\frac{\sigma_\epsilon^2}{n}
+
\frac{V_t}{N}
+
\frac{2\sqrt{C_{\rm cal}V_t}}{N}
\sqrt{\frac{s\log p}{n_{\rm tr}}} + 
C_{\rm cal}
\left(
\frac1N+\frac K n
\right)
\frac{s\log p}{n_{\rm tr}}.
\label{eq:tc-cross-lasso-mse}
\end{align}
For fixed $K$ and $N\gtrsim n$, since $n_{\rm tr}\asymp n$, the excess over the oracle MSE is therefore at most of order
\[
O\!\left(
\frac{\sqrt{s\log p}}{n^{3/2}}
+
\frac{s\log p}{n^2}
\right).
\]
When the unlabeled sample is sufficiently large, the first term becomes negligible and the contribution of transfer-calibration estimation to the MSE is of order $s\log p/n^2$.

\subsection{Proof of Theorem \ref{thm:tc-cross-nonasymptotic} }
\begin{proof}[Proof of Theorem~\ref{thm:tc-cross-nonasymptotic}]
Condition throughout on the source data $\mathcal D_s$. Define
\[
A_n
\coloneqq 
\frac1n\sum_{i=1}^n\epsilon_i,
\qquad
B_N
\coloneqq 
\frac1N\sum_{j=1}^N
\{f_t(\widetilde X_j)-\E_t f_t(X)\},
\]
and, for each fold $k$,
\[
Z_{n,k}
\coloneqq 
\frac1N\sum_{j=1}^N e_k(\widetilde X_j)
-
\frac K n\sum_{i\in I_k}e_k(X_i).
\]
Then
\[
R_n=\frac1K\sum_{k=1}^K Z_{n,k}.
\]
Therefore combining these components the error of the TC-Cross-PPI estimator can be written as,
\[
\widehat\mu_{\rm TC\text{-}Cross}-\theta_t^*
=
A_n+B_N+R_n.
\]

By $\E_t[\epsilon\mid X]=0$, we can write
\[
\E[A_n\mid\mathcal D_s]=0,
\qquad
\Var(A_n\mid\mathcal D_s)
=
\frac{\sigma_\epsilon^2}{n}.
\]
Similarly, looking at $B_N$ we have,
\[
\E[B_N\mid\mathcal D_s]=0,
\qquad
\Var(B_N\mid\mathcal D_s)
=
\frac{V_t}{N}.
\]
The gold and unlabeled samples are independent, so
\[
\E[A_nB_N\mid\mathcal D_s]=0.
\]
Hence the oracle part $A_n+B_N$ has the conditional MSE of
\[
\E[(A_n+B_N)^2\mid\mathcal D_s]
=
\frac{\sigma_\epsilon^2}{n}
+
\frac{V_t}{N}.
\]

We next control the cross-fitting remainder. Recall that for each fixed fold $k$, $
\mathcal D_{-k}
=
\{(X_i,Y_i):i\notin I_k\}.$ denotes the combined data from other folds.
Conditional on $\mathcal D_s$ and $\mathcal D_{-k}$, the function $e_k$ is
fixed, the observations in $I_k$ are independent of $e_k$. Further, the unlabeled
sample is independent of both the training complement and the held-out fold.
The two empirical averages defining $Z_{n,k}$ therefore have the same
conditional population mean, so
\[
\E[Z_{n,k}\mid\mathcal D_s,\mathcal D_{-k}]=0.
\]
Moreover,
\begin{align*}
\E[Z_{n,k}^2\mid\mathcal D_s,\mathcal D_{-k}]
&=
\frac1N
\Var_t\{e_k(X)\mid\mathcal D_s,\mathcal D_{-k}\}+
\frac K n
\Var_t\{e_k(X)\mid\mathcal D_s,\mathcal D_{-k}\}\\
&\le
\left(
\frac1N+\frac K n
\right)a_{n,k}.
\end{align*}
Note that the above calculation so far is made fold-wise. Since the $Z_{n,k}$ need not be independent across folds, we use
\[
R_n^2
=
\left(
\frac1K\sum_{k=1}^K Z_{n,k}
\right)^2
\le
\frac1K\sum_{k=1}^K Z_{n,k}^2.
\]
Taking iterated conditional expectations gives
\begin{align}
\E[R_n^2\mid\mathcal D_s]
&\le
\frac1K\sum_{k=1}^K
\E[Z_{n,k}^2\mid\mathcal D_s] \le
\left(
\frac1N+\frac K n
\right)
\frac1K\sum_{k=1}^K
\E[a_{n,k}\mid\mathcal D_s]\nonumber\\
&\le
\left(
\frac1N+\frac K n
\right)r_n.
\label{eq:proof-Rn-second-moment}
\end{align}
The above display then provides the conditional MSE of the remainder term. The only thing left to examine is the covariance between the oracle part and $R_n$. First,
\[
\E[A_nR_n\mid\mathcal D_s]=0.
\]
To see this, it is enough to show
$\E[A_nZ_{n,k}\mid\mathcal D_s]=0$ for every $k$. Conditional on
$\mathcal D_s$ and $\mathcal D_{-k}$, decompose
\[
A_n
=
\frac1n\sum_{i\notin I_k}\epsilon_i
+
\frac1n\sum_{i\in I_k}\epsilon_i.
\]
The first term is fixed under this conditioning and has zero covariance with
$Z_{n,k}$ because
$\E[Z_{n,k}\mid\mathcal D_s,\mathcal D_{-k}]=0$.
For the second term, the unlabeled part of $Z_{n,k}$ is independent of the
held-out observations, while
\[
\E_t[\epsilon\, e_k(X)\mid\mathcal D_s,\mathcal D_{-k}]
=
\E_t\!\left[
e_k(X)\E_t(\epsilon\mid X)
\mid\mathcal D_s,\mathcal D_{-k}
\right]
=0.
\]
Independence across distinct held-out observations then gives
\[
\E\!\left[
\left(
\frac1n\sum_{i\in I_k}\epsilon_i
\right)
Z_{n,k}
\middle|
\mathcal D_s,\mathcal D_{-k}
\right]
=0.
\]
Iterated expectation proves the claim.

The covariance of $B_N$ with the held-out component of $Z_{n,k}$ is zero,
because $B_N$ depends only on the unlabeled sample. Conditional on
$\mathcal D_s$ and $\mathcal D_{-k}$, the covariance with the unlabeled
component is
\[
\E[B_N Z_{n,k}\mid\mathcal D_s,\mathcal D_{-k}]
=
\frac1N
\Cov_t\{f_t(X),e_k(X)\mid\mathcal D_s,\mathcal D_{-k}\}.
\]
Therefore, by Cauchy--Schwarz,
\[
\left|
\E[B_N Z_{n,k}\mid\mathcal D_s,\mathcal D_{-k}]
\right|
\le
\frac{\sqrt{V_t}}{N}\sqrt{a_{n,k}}.
\]
Averaging over folds, applying iterated expectation, and using Jensen's
inequality gives
\begin{align}
\left|
\E[B_NR_n\mid\mathcal D_s]
\right|
&\le
\frac{\sqrt{V_t}}{KN}
\sum_{k=1}^K
\E[\sqrt{a_{n,k}}\mid\mathcal D_s]\nonumber\\
&\le
\frac{\sqrt{V_t}}{KN}
\sum_{k=1}^K
\sqrt{\E[a_{n,k}\mid\mathcal D_s]}\nonumber\\
&\le
\frac{\sqrt{V_t r_n}}{N}.
\label{eq:proof-BR-bound}
\end{align}

By Proposition~\ref{prop:tc-cross-unbiased},
$\widehat\mu_{\rm TC\text{-}Cross}$ is conditionally unbiased given
$\mathcal D_s$, so its conditional MSE is its conditional second moment around
$\theta_t^*$. Combining the preceding calculations,
\begin{align*}
&
\operatorname{MSE}\!\left(
\widehat\mu_{\rm TC\text{-}Cross}
\mid\mathcal D_s
\right)
-
\left(
\frac{\sigma_\epsilon^2}{n}
+
\frac{V_t}{N}
\right)\\
&\qquad=
2\E[(A_n+B_N)R_n\mid\mathcal D_s]
+
\E[R_n^2\mid\mathcal D_s]\\
&\qquad=
2\E[B_NR_n\mid\mathcal D_s]
+
\E[R_n^2\mid\mathcal D_s].
\end{align*}
Using \eqref{eq:proof-Rn-second-moment} and
\eqref{eq:proof-BR-bound} yields
\[
\left|
\operatorname{MSE}\!\left(
\widehat\mu_{\rm TC\text{-}Cross}
\mid\mathcal D_s
\right)
-
\left(
\frac{\sigma_\epsilon^2}{n}
+
\frac{V_t}{N}
\right)
\right|
\le
\frac{2\sqrt{V_t r_n}}{N}
+
\left(
\frac1N+\frac K n
\right)r_n,
\]
The upper bound
\eqref{eq:tc-cross-mse-upper} follows immediately after applying the simple fact; $\sqrt{2ab}\le a+b$ for $a,b>0$.

We now establish the second claim of the theorem for the
TC--Cross-PPI++ estimator with the fixed power parameter
\[
\lambda_0
=
\frac{1}{1+n/N}
=
\frac{N}{n+N}.
\]

In addition to $B_N$, define the centered gold-sample average
\[
B_n^{\rm L}
\coloneqq
\frac1n\sum_{i=1}^n
\{f_t(X_i)-\E_t f_t(X)\}.
\]
Recall that for $i\in I_k$,
\[
\widehat f_t^{(-k)}(X_i)
=
f_t(X_i)+e_k(X_i),
\]
and similarly, for an unlabeled observation,
\[
\frac1K\sum_{k=1}^K
\widehat f_t^{(-k)}(\widetilde X_j)
=
f_t(\widetilde X_j)
+
\frac1K\sum_{k=1}^K e_k(\widetilde X_j).
\]
Therefore, for any fixed $\lambda$,
\begin{align}
\widehat\mu_{\rm TC\text{-}Cross,\lambda}
-\theta_t^*
&=
A_n
+
(1-\lambda)B_n^{\rm L}
+
\lambda B_N
+
\lambda R_n.
\label{eq:proof-ppipp-decomp}
\end{align}
The first three terms in
\eqref{eq:proof-ppipp-decomp} are components of the  error of the oracle
PPI++ estimator based on the knowledge of $f_t$, while $\lambda R_n$ is the contribution
from estimating the transfer correction.

For convenience, write
\[
C_{n,N}(\lambda)
\coloneqq
(1-\lambda)B_n^{\rm L}
+
\lambda B_N.
\]
Since the labeled and unlabeled samples are independent,
\[
\E[B_n^{\rm L}B_N\mid\mathcal D_s]=0.
\]
Moreover, because $\E_t[\epsilon\mid X]=0$,
\[
\E[A_nB_n^{\rm L}\mid\mathcal D_s]=0,
\qquad
\E[A_nB_N\mid\mathcal D_s]=0.
\]
Thus
\begin{align}
\E\!\left[
\{A_n+C_{n,N}(\lambda)\}^2
\mid\mathcal D_s
\right]
&=
\frac{\sigma_\epsilon^2}{n}
+
\left\{
\frac{(1-\lambda)^2}{n}
+
\frac{\lambda^2}{N}
\right\}V_t.
\label{eq:proof-oracle-ppipp-var}
\end{align}
At $\lambda=\lambda_0=N/(n+N)$,
\[
\frac{(1-\lambda_0)^2}{n}
+
\frac{\lambda_0^2}{N}
=
\frac{1}{n+N},
\]
and therefore
\begin{equation}
\E\!\left[
\{A_n+C_{n,N}(\lambda_0)\}^2
\mid\mathcal D_s
\right]
=
\frac{\sigma_\epsilon^2}{n}
+
\frac{V_t}{n+N}.
\label{eq:proof-oracle-ppipp-opt-var}
\end{equation}

It remains to control the interaction term between the oracle PPI++ part (the first 3 terms) 
and the calibration remainder (the last term). We already showed above that
\[
\E[A_nR_n\mid\mathcal D_s]=0.
\]
We next show that the choice $\lambda_0$ also makes
$C_{n,N}(\lambda_0)$ orthogonal to $R_n$.

Fix a fold $k$ and condition on $\mathcal D_s$ and
$\mathcal D_{-k}$. The part of $B_n^{\rm L}$ involving observations
outside $I_k$ is then fixed, and hence has zero expectation when
multiplied by $Z_{n,k}$ because
\[
\E[Z_{n,k}\mid\mathcal D_s,\mathcal D_{-k}]=0.
\]
For the held-out part, the unlabeled component of $Z_{n,k}$ is
independent of the observations in $I_k$, while independence across
held-out observations gives
\begin{align}
\E[
B_n^{\rm L} Z_{n,k}
\mid
\mathcal D_s,\mathcal D_{-k}]
&=
-\frac1n
\Cov_t\{
f_t(X),e_k(X)
\mid
\mathcal D_s,\mathcal D_{-k}
\}.
\label{eq:proof-BL-Z}
\end{align}
On the other hand, as derived above,
\begin{align}
\E[
B_N Z_{n,k}
\mid
\mathcal D_s,\mathcal D_{-k}]
&=
\frac1N
\Cov_t\{
f_t(X),e_k(X)
\mid
\mathcal D_s,\mathcal D_{-k}
\}.
\label{eq:proof-BN-Z}
\end{align}
Combining \eqref{eq:proof-BL-Z} and
\eqref{eq:proof-BN-Z}, for a general fixed $\lambda$,
\begin{align}
\E[
C_{n,N}(\lambda)Z_{n,k}
\mid
\mathcal D_s,\mathcal D_{-k}]
=
\left\{
\frac{\lambda}{N}
-
\frac{1-\lambda}{n}
\right\}
\Cov_t\{
f_t(X),e_k(X)
\mid
\mathcal D_s,\mathcal D_{-k}
\}.
\label{eq:proof-ppipp-cov-cancel}
\end{align}
Now setting $
\lambda=\lambda_0=\frac{N}{n+N},$ in the above equation,
we have
\[
\frac{\lambda_0}{N}
=
\frac{1-\lambda_0}{n}
=
\frac{1}{n+N},
\]
and consequently, the right-hand side of
\eqref{eq:proof-ppipp-cov-cancel} is identically zero. Averaging over
the folds and applying iterated expectation yields
\begin{equation}
\E[
C_{n,N}(\lambda_0)R_n
\mid
\mathcal D_s]
=
0.
\label{eq:proof-ppipp-cross-zero}
\end{equation}

Combining
\eqref{eq:proof-ppipp-decomp},
\eqref{eq:proof-oracle-ppipp-opt-var},
\eqref{eq:proof-ppipp-cross-zero}, and the previously established
identity
$\E[A_nR_n\mid\mathcal D_s]=0$, we obtain
\begin{align*}
\operatorname{MSE}\!\left(
\widehat\mu_{\rm TC\text{-}Cross,\lambda_0}
\mid\mathcal D_s
\right)
=
\frac{\sigma_\epsilon^2}{n}
+
\frac{V_t}{n+N}
+
\lambda_0^2
\E[R_n^2\mid\mathcal D_s].
\end{align*}
Applying \eqref{eq:proof-Rn-second-moment},
\begin{align*}
\operatorname{MSE}\!\left(
\widehat\mu_{\rm TC\text{-}Cross,\lambda_0}
\mid\mathcal D_s
\right)
&\le
\frac{\sigma_\epsilon^2}{n}
+
\frac{V_t}{n+N}
+
\lambda_0^2
\left(
\frac1N+\frac K n
\right)r_n
\\
&=
\frac{\sigma_\epsilon^2}{n}
+
\frac{V_t}{n+N}
+
\frac{N(n+KN)}{n(n+N)^2}\,r_n.
\end{align*}
Finally, since $K\ge1$,
\[
n+KN
\le
K(n+N),
\]
and hence
\[
\frac{N(n+KN)}{n(n+N)^2}
\le
\frac{KN}{n(n+N)}.
\]
Therefore,
\[
\operatorname{MSE}\!\left(
\widehat\mu_{\rm TC\text{-}Cross,\lambda_0}
\mid\mathcal D_s
\right)
\le
\frac{\sigma_\epsilon^2}{n}
+
\frac{V_t}{n+N}
+
K\frac{N}{n(n+N)}r_n,
\]
which proves \eqref{eq:tc-cross-ppipp-mse-upper} and completes the
proof.

\begin{comment}
The oracle procedure's gain over source PPI++ depends on the additional explanatory power gained by moving from $f_s$ to $f_t$. Under sparse linear calibration, a sufficient condition for the calibration error to be asymptotically negligible relative to this gain is
\begin{equation}\label{eq:feasible-tc-ppi-plus-plus-condition}
\frac{s\log p}{n_{\rm tr}}
\ll
V_t
-
\frac{\Cov_t\{Y,f_s(X)\}^2}{V_s}.
\end{equation}

\end{comment}

\end{proof}

\subsection{Minimax Lower Bound.}
\label{app:minimax}
Recall the settings:
Define the class ${\cal G}_s(V_s,V_\delta,\sigma_\epsilon^2)$ as follows.
For parameters
\[
\theta\in\R,\qquad
\mu_s\in\R,\qquad
\mu\in\R^p,\qquad
\delta\in\R^p,
\]
with
\begin{equation}\label{eq:sparse-shell}
\norm{\delta}_0\le s,
\qquad
\norm{\delta}_2^2=V_\delta,
\end{equation}
let
\begin{align}
S &= \mu_s+\xi,
& \xi&\sim N(0,V_s),\label{eq:Smodel}\\
X &= \mu+W,
& W&\sim N_p(0,I_p),\label{eq:Zmodel}\\
Y &= \theta+\xi+W^\top\delta+\varepsilon,
& \varepsilon&\sim N(0,\sigma_\epsilon^2),\label{eq:Ymodel}
\end{align}
where $\xi,W,\varepsilon$ are mutually independent. Thus $\mu_s=\E(S)$. The target regression on the observed features is
\begin{equation}\label{eq:target-regression-canonical}
f_t(s,x)
\coloneqq 
\E_t[Y\mid S=s,X=x]
=
\theta+(s-\mu_s)+(x-\mu)^\top\delta.
\end{equation}
Using the observed score $S$ itself as the available source prediction $f_s=S$, the transfer residual is
\begin{align}
    f_t(S,X)-S
    &=
    \theta-\mu_s-\mu^\top\delta+X^\top\delta\nonumber.
    \label{eq:canonical-affine-correction}
\end{align}
Moreover,
\[
\E[Y]=\theta,
\qquad
\Var\{f_t(S,X)\}
=V_s+V_\delta=:V_t,
\qquad
\Var(Y\mid S,X)=\sigma_\epsilon^2.
\]

With $n$ iid samples $\xi_i,\tilde\xi_i$ from $N(0,V_s)$, we observe $n$ independent labeled draws
\[
\{(S_i,X_i,Y_i):i=1,\ldots,n\}
\]
and, independently, $N$ unlabeled target draws
\[
\{(\widetilde S_j,\widetilde X_j):j=1,\ldots,N\}.
\]
Write
\[
M\coloneqq n+N.
\]
Let $\mathfrak T_{n,N}$ denote the class of \emph{all measurable estimators} of $\theta$ based on these observations.

Define
\begin{equation}\label{eq:psi-def}
\psi_{n,p,s}(V_\delta,\sigma_\epsilon^2)
\coloneqq 
\min\left\{
V_\delta,
\frac{\sigma_\epsilon^2(s-1)\log\!\left(\frac{e(p-1)}{s-1}\right)}{n-1}
\right\}.
\end{equation}

We first separate the information about the unknown means from the information about the sparse slope. Let
\[
\bar X_L=\frac1n\sum_{i=1}^n X_i,
\qquad
\bar X_U=\frac1N\sum_{j=1}^N\widetilde X_j,
\qquad
\bar X_M=\frac{n\bar X_L+N\bar X_U}{M},
\]
and define the labeled--unlabeled contrast
\[
D_X\coloneqq \bar X_L-\bar X_U.
\]
Then
\begin{equation}\label{eq:mean-contrast-dist}
\bar X_M\sim N_p\left(\mu,\frac1M I_p\right),
\qquad
D_X\sim N_p\left(0,\left(\frac1n+\frac1N\right)I_p\right),
\end{equation}
and $\bar X_M$ and $D_X$ are independent. The analogous statements hold for $S$.

Let $Q\in\R^{n\times(n-1)}$ have orthonormal columns satisfying
\[
Q^\top Q=I_{n-1},
\qquad
Q^\top\mathbf 1_n=0.
\]
Define the labeled residual vector
\[
U_L=(Y_1-S_1,\ldots,Y_n-S_n)^\top.
\]
From \eqref{eq:Ymodel},
\[
Y_i-S_i
=
\theta-\mu_s+(X_i-\mu)^\top\delta+\varepsilon_i.
\]
Hence
\begin{equation}\label{eq:centered-regression}
Q^\top U_L
=
(Q^\top X_L)\delta+Q^\top\varepsilon.
\end{equation}
Because the rows of $X_L$ are Gaussian with covariance $I_p$, the transformed design $Q^\top X_L$ has $n-1$ independent $N_p(0,I_p)$ rows, and $Q^\top\varepsilon\sim N_{n-1}(0,\sigma_\epsilon^2 I_{n-1})$. Moreover, the centered regression experiment \eqref{eq:centered-regression} is independent of the sample means and the labeled--unlabeled mean contrasts. Thus the unknown transfer slope $\delta$ is learned from an ordinary sparse Gaussian linear-regression experiment with effective sample size $n-1$.

The next lemma is the main bridge between sparse-regression difficulty and target-mean difficulty.

\begin{lemma}[Bayes decomposition]
\label{lem:bayes-decomp}
Let $\pi$ be any prior supported on sparse vectors satisfying $\norm{\delta}_2^2=V_\delta$ almost surely. Put independent Gaussian priors
\[
\theta\sim N(0,T^2),
\qquad
\mu_s\sim N(0,T^2),
\qquad
\mu\sim N_p(0,T^2I_p),
\]
independently of $\delta\sim\pi$. Let 
\begin{equation}
    R_B(T, \pi) = \inf_{\widehat{\theta}} \mathbb{E}_{\pi,\cal D} \left[ (\widehat{\theta}(\mathcal{D}) - \theta)^2 \right]
\end{equation}
be the Bayes risk for estimating $\theta$ under squared error from the full labeled and unlabeled experiment. Then
\begin{equation}\label{eq:bayes-limit}
\lim_{T\to\infty}R_B(T,\pi)
=
\frac{\sigma_\epsilon^2}{n}
+
\frac{V_s+V_\delta}{M}
+
\frac{N}{nM}\,R_\delta(\pi),
\end{equation}
where
\begin{equation}\label{eq:delta-bayes-risk}
R_\delta(\pi)
\coloneqq 
\inf_{\widehat\delta}
\E_\pi\norm{\widehat\delta-\delta}_2^2
\end{equation}
is the Bayes risk for estimating $\delta$ from the centered regression experiment \eqref{eq:centered-regression}.
\end{lemma}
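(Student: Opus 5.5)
The plan is to reduce the full labeled/unlabeled experiment, via an orthogonal change of variables, to three mutually independent blocks: (i) the centered regression experiment \eqref{eq:centered-regression}, together with within-sample contrasts of $S$ and of the unlabeled data, which carry no information on $(\theta,\mu_s,\mu)$; (ii) the pooled means $\bar S_M,\bar X_M$; and (iii) the labeled--unlabeled contrasts $D_S,D_X$ together with the labeled mean $\bar Y_L$. The contrasts $D_S,D_X$ have a known distribution by \eqref{eq:mean-contrast-dist} and are independent of everything else. I would then compute the Bayes risk in two stages: first conditionally on $\delta$, then average over the posterior of $\delta$ given block (i).

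\textbf{Step 1 (oracle statistic).} Conditionally on $\delta$, define
\[
Z(\delta)\coloneqq \bar Y_L-(\bar S_L-\bar S_M)-(\bar X_L-\bar X_M)^\top\delta .
\]
Substituting \eqref{eq:Smodel}--\eqref{eq:Ymodel} gives
\[
Z(\delta)=\theta+\bar\xi_M+\bar W_M^\top\delta+\bar\varepsilon_L .
\]
Its noise is Gaussian with variance $\sigma_\epsilon^2/n+(V_s+\|\delta\|_2^2)/M=\sigma_\epsilon^2/n+(V_s+V_\delta)/M$. The shell constraint $\|\delta\|_2^2=V_\delta$ is exactly what makes this variance free of $\delta$. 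The noise of $Z(\delta)$ is independent of $D_S,D_X$ and of block (i), because $\bar\varepsilon_L\perp Q^\top\varepsilon$ and pooled means are independent of contrasts. Given $\delta$, the remaining mean statistics only inform $(\mu_s,\mu)$. Gaussian conjugacy with the $N(0,T^2)$ priors then gives an explicit posterior for $\theta$ given $(\delta,\text{data})$. Its mean tends to $Z(\delta)$ and its variance tends to $\sigma_\epsilon^2/n+(V_s+V_\delta)/M$ as $T\to\infty$.

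\textbf{Step 2 (unknown $\delta$).} Since $\bar X_L-\bar X_M=(N/M)D_X$, the Bayes estimator is, in the limit,
\[
\E[Z(\delta)\mid\mathcal D]=Z(0)\text{-type terms}-\tfrac NM D_X^\top\E[\delta\mid\text{block (i)}].
\]
The Bayes risk picks up the additional term
\[
\left(\tfrac NM\right)^2\E\bigl[D_X^\top\Cov(\delta\mid\text{block (i)})D_X\bigr].
\]
Since $D_X\perp$ block (i) and $\Cov(D_X)=(1/n+1/N)I_p$, this term equals
\[
\left(\tfrac NM\right)^2\cdot\tfrac{M}{nN}\,\E\,\Tr\Cov(\delta\mid\cdot)=\tfrac{N}{nM}R_\delta(\pi).
\]
That is the third term of \eqref{eq:bayes-limit}. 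The cross term between the oracle noise and $D_X^\top(\delta-\E[\delta\mid\cdot])$ vanishes by the independence established in Step 1.

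\textbf{Main obstacle.} The delicate point is justifying that, in the flat-prior limit, the mean block carries no extra information about $\delta$. For finite $T$ it does carry some: $\bar Y_L$ and $\bar X_M$ jointly involve $\mu^\top\delta$, and the prior on $\mu$ is informative. So the posterior of $\delta$ given all the data differs slightly from its posterior given block (i) alone.

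For the upper bound, I would exhibit the explicit estimator
\[
Z(0)-\tfrac NM D_X^\top\widehat\delta_\pi ,
\]
where $\widehat\delta_\pi$ is the posterior mean of $\delta$ computed from block (i) only. Its Bayes risk is exactly the right-hand side of \eqref{eq:bayes-limit} for every $T$. Hence $\limsup_{T\to\infty} R_B(T,\pi)$ is at most that value.

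For the lower bound, I would give the statistician the extra information $(\mu_s,\mu)$, which can only decrease the Bayes risk. Given $(\mu_s,\mu,\delta)$, the likelihood factorizes so that $\delta$ enters only through block (i) and through the affine term $(\bar X_L-\mu)^\top\delta$ in $\bar Y_L$. Against the $N(0,T^2)$ prior on $\theta$, the information $\bar Y_L$ contributes about $\delta$ is a Gaussian likelihood in $\theta$ that becomes flat as $T\to\infty$. The conditional Bayes risk is an explicit conjugate formula, and I would pass to the limit by dominated convergence, using that $\pi$ is supported on a compact shell. The lower-bound direction also has to track the shift from $\bar X_M$ to the true $\mu$; it is exactly this correction that is converted into the $N/(nM)$ factor, and it is the step I expect to require the most care.
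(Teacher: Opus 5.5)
Your upper bound is correct, and more careful than the paper, which only does the flat-prior posterior-variance computation and asserts it is the $T\to\infty$ limit. The error of $Z(0)-\tfrac NM D_X^\top\widehat\delta_\pi$ is $\bar\xi_M+\bar W_M^\top\delta+\bar\varepsilon_L+\tfrac NM D_X^\top(\delta-\widehat\delta_\pi)$. Its law does not involve $(\theta,\mu_s,\mu)$, so $R_B(T,\pi)$ is at most the right-hand side of \eqref{eq:bayes-limit} for every $T$.

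\textbf{The lower bound fails as proposed.} Once you reveal $(\mu_s,\mu)$, the unlabeled sample carries no information about $(\theta,\delta)$. The problem becomes a Bayesian regression $Y_i-S_i+\mu_s=\theta+(X_i-\mu)^\top\delta+\varepsilon_i$ with known centered design. The estimator $\bar Y_L-(\bar S_L-\mu_s)-(\bar X_L-\mu)^\top\widehat\delta_\pi$ has constant risk $\sigma_\epsilon^2/n+R_\delta(\pi)/n$, so the genie Bayes risk is at most this for every $T$. Using $1/n=N/(nM)+1/M$ and $R_\delta(\pi)\le V_\delta$ (take $\widehat\delta=0$), this falls short of the target by $(V_s+V_\delta-R_\delta(\pi))/M\ge V_s/M>0$. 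The terms $V_s/M$ and $V_\delta/M$ are exactly the price of not knowing $\mu_s$ and $\mu$. Any genie that reveals them removes the very quantity you must bound from below. No tracking of $\bar X_M$ versus $\mu$ can restore it: with $\mu$ known, the coefficient of $R_\delta(\pi)$ becomes $1/n$ rather than $N/(nM)$, and the oracle terms disappear.

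\textbf{What works instead.} Keep $(\mu_s,\mu)$ unknown and compute $R_B(T,\pi)=\E\,\Var(\theta\mid\mathcal D)$ at finite $T$, splitting by total variance over $\delta$ as the paper does in the flat limit.
\begin{itemize}
\item Given $\delta$, the model is Gaussian--conjugate. So $\Var(\theta\mid\mathcal D,\delta)$ is a deterministic constant: it depends on $\delta$ only through $\|\delta\|_2^2=V_\delta$, and it converges to $\sigma_\epsilon^2/n+(V_s+V_\delta)/M$.
\item $\E[\theta\mid\mathcal D,\delta]$ is affine in $\delta$, with slope converging to $-\tfrac NM D_X$.
\item The remaining step is the one your ``main obstacle'' identifies: show that the posterior of $\delta$ given all of $\mathcal D$ converges to its posterior given the centered regression data.
\end{itemize}
For that last step, put everything on a common probability space, e.g.\ $\theta=T\theta_0$, $\mu_s=T\mu_{s,0}$, $\mu=T\mu_0$. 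The extra likelihood factor from $(\bar S_M,\bar X_M,\bar Y_L)$ is a Gaussian density with variances of order $T^2$. Its $\delta$-dependence enters only through $O(1/T)$ terms, so it converges uniformly on the compact shell to a $\delta$-free constant. The posterior of $\delta$ therefore converges in total variation. Since $\|\delta\|_2^2=V_\delta$, the conditional variances have an integrable envelope. Dominated convergence then gives $\liminf_{T\to\infty}R_B(T,\pi)$ at least the right-hand side of \eqref{eq:bayes-limit}. Combined with your upper bound, this proves the lemma.
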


\begin{proof}
By definition of the Bayes risk under squared error loss,
\begin{align*}
R_B(T, \pi) 
&= \inf_{\widehat{\theta}} \mathbb{E}_{\mathcal{D}} \left[ \mathbb{E}_{\theta \mid \mathcal{D}} \left[ (\widehat{\theta}(\mathcal{D}) - \theta)^2 \mid \mathcal{D} \right] \right].
\end{align*}
Because the outer expectation over $\mathcal{D}$ does not constrain the pointwise choice of estimate for each realized sample $\mathcal{D}$, the infimum passes inside the expectation:
\begin{equation*}
R_B(T, \pi) = \mathbb{E}_{\mathcal{D}} \left[ \inf_{c \in \mathbb{R}} \mathbb{E}_{\theta \mid \mathcal{D}} \left[ (c - \theta)^2 \mid \mathcal{D} \right] \right].
\end{equation*}
For any random variable $\theta \mid \mathcal{D}$ with finite second moment, the scalar constant $c$ minimizing the mean squared error is the conditional posterior mean:
\begin{equation*}
c^* = \mathbb{E}[\theta \mid \mathcal{D}].
\end{equation*}
Substituting this minimizer $\widehat{\theta}_{\mathrm{Bayes}}(\mathcal{D}) = \mathbb{E}[\theta \mid \mathcal{D}]$ into the inner expectation yields the conditional posterior variance:
\begin{equation*}
\inf_{c \in \mathbb{R}} \mathbb{E}_{\theta \mid \mathcal{D}} \left[ (c - \theta)^2 \mid \mathcal{D} \right] 
= \mathbb{E}_{\theta \mid \mathcal{D}} \left[ (\theta - \mathbb{E}[\theta \mid \mathcal{D}])^2 \mid \mathcal{D} \right] 
= \operatorname{Var}(\theta \mid \mathcal{D}).
\end{equation*}
Taking the remaining expectation over the marginal distribution of $\mathcal{D}$ gives:
\begin{equation}
R_B(T, \pi) = \mathbb{E}_{\mathcal{D}} \left[ \operatorname{Var}(\theta \mid \mathcal{D}) \right].
\end{equation}

With this observation, we now give the diffuse-prior calculation, which is the limit of the proper Gaussian-prior calculation as $T\to\infty$. Under a flat prior for $\mu_s$ and $\mu$, Gaussian mean--residual independence gives
\[
\mu_s\mid\mathcal D
\sim
N\left(\bar S_M,\frac{V_s}{M}\right),
\qquad
\mu\mid\mathcal D
\sim
N_p\left(\bar X_M,\frac1M I_p\right).
\]
The posterior for $\delta$ depends only on the centered regression data in \eqref{eq:centered-regression} and is independent of these mean posteriors and of $D_X$. Write
\[
m_\delta\coloneqq \E[\delta\mid\mathcal D_c],
\qquad
C_\delta\coloneqq \Var(\delta\mid\mathcal D_c),
\]
where $\mathcal D_c$ denotes the centered regression data.

Now let
\[
\bar S_L=\frac1n\sum_{i=1}^n S_i,
\quad \bar Y_L=\frac1n\sum_{i=1}^n Y_i.
\]
The labeled mean satisfies
\[
\overline Y_L
=
\theta
+
(\overline S_L-\mu_s)
+
(\overline X_L-\mu)^\top\delta
+
\overline\varepsilon,
\qquad
\overline\varepsilon
\sim
N\!\left(0,\frac{\sigma_\epsilon^2}{n}\right).
\]
Under the diffuse prior for $\theta$, conditional on
$(\mu_s,\mu,\delta)$ and the observed data,
\[
\theta
\mid
\mathcal D,\mu_s,\mu,\delta
\sim
N\!\left(
m_\theta(\mu_s,\mu,\delta),
\frac{\sigma_\epsilon^2}{n}
\right),
\]
where
\[
m_\theta(\mu_s,\mu,\delta)
=
\overline Y_L
-
(\overline S_L-\mu_s)
-
(\overline X_L-\mu)^\top\delta.
\]
Therefore, by the law of total variance,
\begin{align}
\Var(\theta\mid\mathcal D)
={}&
\E\!\left[
\Var(
\theta
\mid
\mathcal D,\mu_s,\mu,\delta
)
\mid\mathcal D
\right]
\nonumber\\
&+
\Var\!\left(
\E[
\theta
\mid
\mathcal D,\mu_s,\mu,\delta
]
\mid\mathcal D
\right).
\label{eq:total-var-theta}
\end{align}
The first term in \eqref{eq:total-var-theta} is simply
\[
\frac{\sigma_\epsilon^2}{n}.
\]

To evaluate the second term, write
\[
A_s\coloneqq \mu_s-\overline S_M,
\qquad
B_\mu\coloneqq \mu-\overline X_M,
\qquad
a_N\coloneqq \frac{N}{M}.
\]
Since
\[
\overline X_L-\mu
=
\frac{N}{M}D_X-(\mu-\overline X_M)
=
a_ND_X-B_\mu,
\]
the conditional posterior mean can be written as
\[
m_\theta(\mu_s,\mu,\delta)
=
C_{\mathcal D}
+
A_s
-
a_ND_X^\top\delta
+
B_\mu^\top\delta,
\]
where $C_{\mathcal D}$ depends only on the observed data and therefore does
not contribute to the conditional variance.

Under the diffuse-prior posterior,
\[
A_s\mid\mathcal D
\sim
N\!\left(0,\frac{V_s}{M}\right),
\qquad
B_\mu\mid\mathcal D
\sim
N_p\!\left(0,\frac1M I_p\right),
\]
and these variables are independent of each other and of
$\delta\mid\mathcal D_c$. Moreover, $D_X$ is fixed when conditioning on
$\mathcal D$. Hence all cross-covariance terms vanish and
\begin{align*}
\Var\!\left(
m_\theta(\mu_s,\mu,\delta)
\mid\mathcal D
\right)
={}&
\frac{V_s}{M}
+
\left(\frac{N}{M}\right)^2
D_X^\top
C_\delta
D_X\\
&+
\Var\!\left(
B_\mu^\top\delta
\mid\mathcal D
\right),
\end{align*}
where
\[
C_\delta
=
\Var(\delta\mid\mathcal D_c).
\]
Finally, using the conditional independence of $B_\mu$ and $\delta$,
\begin{align*}
\Var\!\left(
B_\mu^\top\delta
\mid\mathcal D
\right)
&=
\E\!\left[
(B_\mu^\top\delta)^2
\mid\mathcal D
\right]\\
&=
\E\!\left[
\delta^\top
\E(B_\mu B_\mu^\top\mid\mathcal D)
\delta
\mid\mathcal D
\right]\\
&=
\frac1M
\E\!\left[
\|\delta\|_2^2
\mid\mathcal D_c
\right].
\end{align*}
Combining these terms gives
\begin{align}
\Var(\theta\mid\mathcal D)
={}&
\frac{\sigma_\epsilon^2}{n}
+
\frac{V_s}{M}
+
\frac1M
\E\!\left[
\|\delta\|_2^2
\mid\mathcal D_c
\right]
\nonumber\\
&+
\left(\frac{N}{M}\right)^2
D_X^\top C_\delta D_X.
\label{eq:posterior-var-theta}
\end{align}

By assumption, $\norm{\delta}_2^2=V_\delta$ almost surely under $\pi$. Also $D_X$ is independent of $\mathcal D_c$ and
\[
\E[D_XD_X^\top]
=
\left(\frac1n+\frac1N\right)I_p
=
\frac{M}{nN}I_p.
\]
Averaging \eqref{eq:posterior-var-theta} therefore gives
\begin{align*}
\E\Var(\theta\mid\mathcal D)
&=
\frac{\sigma_\epsilon^2}{n}
+
\frac{V_s+V_\delta}{M}
+
\left(\frac{N}{M}\right)^2
\frac{M}{nN}\,
\E\Tr(C_\delta)\\
&=
\frac{\sigma_\epsilon^2}{n}
+
\frac{V_s+V_\delta}{M}
+
\frac{N}{nM}\E\Tr(C_\delta).
\end{align*}
For squared-error loss, the posterior mean is Bayes and its Bayes risk is the expected posterior variance. Likewise,
\[
\E\Tr(C_\delta)
=
\inf_{\widehat\delta}
\E_\pi\norm{\widehat\delta-\delta}_2^2
=R_\delta(\pi).
\]
The same identity is obtained as the limit of the proper Gaussian-prior Bayes risks as $T\to\infty$, proving \eqref{eq:bayes-limit}.
\end{proof}

We next construct a finite prior on an exact $V_\delta$ shell whose Bayes risk has the sparse-regression minimax order.

\begin{lemma}[Sparse-regression Bayes lower bound]
\label{lem:sparse-bayes}
Suppose $s\ge1$. Under the conditions of Theorem~\ref{thm:minimax-main}, there exists a prior $\pi_*$ supported on
\[
\left\{\delta:\norm{\delta}_0\le s,\ \norm{\delta}_2^2=V_\delta\right\}
\]
and a universal constant $c_0>0$ such that the Bayes risk in the centered Gaussian regression experiment \eqref{eq:centered-regression} satisfies
\begin{equation}\label{eq:sparse-bayes-lower}
R_\delta(\pi_*)
\ge
c_0\,
\psi_{n,p,s}(V_\delta,\sigma_\epsilon^2).
\end{equation}
\end{lemma}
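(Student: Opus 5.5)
We construct $\pi_*$ as the uniform distribution over a packing of the sparse shell and convert the packing into a Bayes-risk lower bound through Fano's inequality. We use Fano rather than Assouad's lemma because the shell constraint $\norm{\delta}_2^2=V_\delta$ is awkward for hypercubes.

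\textbf{Step 1: the packing.} Work in the centered experiment \eqref{eq:centered-regression}: $n-1$ i.i.d.\ rows $Z_i\sim N_p(0,I_p)$ and responses $U_i=Z_i^\top\delta+\eta_i$ with $\eta_i\sim N(0,\sigma_\epsilon^2)$. Fix $\beta>0$, to be chosen, with $s\beta^2\le V_\delta$.

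Reserve coordinate $1$ as an anchor. For a subset $A\subset\{2,\dots,p\}$ with $|A|=s-1$ and a sign vector $\omega$, set
\[
\delta_{A,\omega}=\sqrt{V_\delta-(s-1)\beta^2}\,e_1+\beta\sum_{j\in A}\omega_j e_j.
\]
Then $\norm{\delta_{A,\omega}}_0\le s$ and $\norm{\delta_{A,\omega}}_2^2=V_\delta$ exactly. This anchor is why the bound involves $s-1$ and $p-1$.

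By the sparse Varshamov--Gilbert bound (e.g.\ Rigollet--Hütter, or Raskutti--Wainwright--Yu), there is a family $\{\delta^{(1)},\dots,\delta^{(J)}\}$ with:
\begin{itemize}
\item size $\log J\ge c_1(s-1)\log\{e(p-1)/(s-1)\}$, valid because $s-1\le (p-1)/2$ follows from $s\le p/2$;
\item pairwise Hamming distances of the supports-with-signs of at least $(s-1)/2$, hence $\norm{\delta^{(a)}-\delta^{(b)}}_2^2\ge (s-1)\beta^2/2$;
\item $\norm{\delta^{(a)}-\delta^{(b)}}_2^2\le 4(s-1)\beta^2$, since the anchor coordinate cancels.
\end{itemize}
Let $\pi_*$ be uniform on this family.

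\textbf{Step 2: the KL bound.} Because the design is random Gaussian and independent of $\delta$, the per-row KL divergence is
\[
\E_Z\frac{\{Z^\top(\delta^{(a)}-\delta^{(b)})\}^2}{2\sigma_\epsilon^2}=\frac{\norm{\delta^{(a)}-\delta^{(b)}}_2^2}{2\sigma_\epsilon^2}.
\]
Summing over the $n-1$ rows, the total KL divergence is at most $2(n-1)(s-1)\beta^2/\sigma_\epsilon^2$.

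\textbf{Step 3: choosing $\beta$.} Take
\[
\beta^2=\min\Bigl\{\frac{V_\delta}{s},\;\frac{c_2\sigma_\epsilon^2\log\{e(p-1)/(s-1)\}}{n-1}\Bigr\},
\]
with $c_2$ small enough that the KL bound is at most $\tfrac14\log J$.

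We also need $J\ge 2$ with $\log J$ large enough to absorb the $\log 2$ term in Fano. When $s-1$ is small but $\log J$ is not large enough, we instead use a two-point Le Cam argument with $\delta$ and its reflection in the $\beta$ coordinates; this gives the same order. This case split is a routine nuisance.

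\textbf{Step 4: from Fano to the Bayes risk.} Fano's inequality with the uniform prior says any test misidentifies the index with probability at least $1/2$. The standard reduction from estimation to testing (round $\widehat\delta$ to the nearest packing point) converts this into a Bayes risk of at least a constant times $(s-1)\beta^2$. The Bayes-risk form of Fano is legitimate here because the prior is exactly uniform on the packing.

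Finally we compare $(s-1)\beta^2$ with $\psi_{n,p,s}$:
\begin{itemize}
\item In the second branch of the minimum, $(s-1)\beta^2$ is a constant times the second term of $\psi_{n,p,s}$.
\item In the first branch, $(s-1)\beta^2=(s-1)V_\delta/s\ge V_\delta/2$, since $s\ge 2$.
\end{itemize}
In both cases we obtain \eqref{eq:sparse-bayes-lower} with a universal $c_0$.

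\textbf{Main obstacle.} The delicate step is combining the exact-norm shell with a packing whose separation and diameter are both of order $(s-1)\beta^2$, uniformly over all $(n,p,s,V_\delta)$. The anchor-coordinate device resolves this, but we must check that the anchor remains real, i.e.\ $V_\delta\ge (s-1)\beta^2$. This holds by the choice $\beta^2\le V_\delta/s$.
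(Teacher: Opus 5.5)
Your proposal is correct and follows essentially the same route as the paper. Both use an anchor coordinate $e_1$ that absorbs the leftover norm so the packing lies exactly on the shell, a constant-weight Varshamov--Gilbert packing of $s-1$ coordinates among the remaining $p-1$, the Gaussian random-design KL bound $\frac{n-1}{2\sigma_\epsilon^2}\norm{\delta-\delta'}_2^2$, and Fano under the uniform prior. The signs you attach are immaterial, and your Le Cam fallback for small packings covers a case the paper's Fano step skips: its claim $1-\alpha-\log 2/\log|\mathcal V|>1/2$ requires $|\mathcal V|>4$, which fails for instance at $s=2$, $p=4$.
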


\begin{proof}
Set
\[
k=s-1,
\qquad
q=p-1.
\]
Since $s\le p/2$, we have $q\ge 2k$. A constant-weight Varshamov--Gilbert construction yields a collection
\[
\mathcal V\subset\{v\in\{0,1\}^{q}:\norm v_0=k\}
\]
with
\begin{equation}\label{eq:VG-size}
\log|\mathcal V|
\ge c_1 k\log\left(\frac{eq}{k}\right)
\end{equation}
and pairwise Hamming distance
\begin{equation}\label{eq:VG-distance}
d_H(v,v')\ge c_2 k,
\qquad v\neq v'.
\end{equation}
Here and below $c_1,c_2>0$ are universal constants.

Choose a sufficiently small universal $c_3>0$ and set
\begin{equation}\label{eq:rstar}
r_*
\coloneqq 
c_3\min\left\{
V_\delta,
\frac{\sigma_\epsilon^2 k\log(eq/k)}{n-1}
\right\}.
\end{equation}
Let
\[
a^2=\frac{r_*}{k},
\qquad
b^2=V_\delta-r_*.
\]
For each $v\in\mathcal V$, embed $v$ in coordinates $2,\ldots,p$ and define
\begin{equation}\label{eq:packing-delta}
\delta_v
\coloneqq 
be_1+a(0,v^\top)^\top.
\end{equation}
Then $\norm{\delta_v}_0\le s$ and $\norm{\delta_v}_2^2=V_\delta$ for every $v$. Moreover, $d_H(v,v')\le 2k$ by definition and
\begin{equation}\label{eq:packing-separation}
\norm{\delta_v-\delta_{v'}}_2^2
=a^2d_H(v,v')
\ge c_2 r_*
\qquad(v\neq v').
\end{equation}

Let $P_v$ denote the law of the centered regression experiment under $\delta_v$. Since the Gaussian design distribution is identical under all parameters, the KL divergence is
\begin{align}
D_{\rm KL}(P_v\|P_{v'})
&=
\frac{n-1}{2\sigma_\epsilon^2}
\norm{\delta_v-\delta_{v'}}_2^2\nonumber\\
&\le
\frac{n-1}{\sigma_\epsilon^2}\,r_*
\le
c_3 k\log(eq/k).
\label{eq:KL-packing}
\end{align}
Now, for any estimator $\widehat\delta$, introduce a test $\tilde v:={\arg\min}_{v\in\cal V}\|\widehat\delta-\delta_v\|_2$.
Let $\pi_*$ be a uniform prior on $\delta_1,...\delta_V$.
By \eqref{eq:packing-separation} and triangle inequality, it holds that
\[
\|\widehat\delta-\delta_v\|_2^2\ge\frac{c_2r_*}{4}\mathbb{I}(\tilde v\ne v)\text{ hence }R_\delta(\pi_*)\ge c_2r_*/4\inf_{\widehat\delta}P(\tilde v\ne v).
\]
Choosing $c_3\le c_1/4$ makes the average KL divergence at most a fixed fraction of $\log|\mathcal V|$. i.e.,
\[
\frac{1}{\vert{}{\cal V}\vert{}^2} \sum_{v, v'} D_{\text{KL}}(P_v \Vert{} P_{v'}) \le \alpha \log \vert{}{\cal V}\vert{}\text{ for some }\alpha \in (0, 1/4].
\]
Fano's inequality gives
\[
P(\tilde v \ne v)\ge 1-\frac{\frac1{|{\cal V}|^2}\sum_{v,v'}D_{\rm KL}(P_v\Vert P_{v'})+\log2}{\log|\cal V|}\ge1-\alpha-\frac{\log2}{\log|{\cal V}|}>1/2
\]
hence
\[
R_\delta(\pi_*)\ge c_4 r_*
\]
for a universal $c_4>0$. Combining this with \eqref{eq:rstar} proves \eqref{eq:sparse-bayes-lower}.
\end{proof}

\begin{proof}[Proof of Theorem~\ref{thm:minimax-main}]
For any proper prior supported on the parameter class, minimax risk is at least Bayes risk. Use the prior $\pi_*$ from Lemma~\ref{lem:sparse-bayes} for $\delta$, independent $N(0,T^2)$ priors for $\theta$ and $\mu_s$, and an independent $N_p(0,T^2I_p)$ prior for $\mu$. Then
\[
\inf_{\widehat\theta\in\mathfrak T_{n,N}}
\sup_{P\in\gG_s}
\E_P(\widehat\theta-\theta)^2
\ge R_B(T,\pi_*)
\]
for every $T<\infty$. Letting $T\to\infty$ and applying Lemmas~\ref{lem:bayes-decomp} and \ref{lem:sparse-bayes} gives
\begin{align*}
\inf_{\widehat\theta\in\mathfrak T_{n,N}}
\sup_{P\in\gG_s}
\E_P(\widehat\theta-\theta)^2
&\ge
\frac{\sigma_\epsilon^2}{n}
+
\frac{V_s+V_\delta}{M}
+
\frac{N}{nM}c_0\psi_{n,p,s}(V_\delta,\sigma_\epsilon^2),
\end{align*}
which is \eqref{eq:minimax-main} because $V_t=V_s+V_\delta$ and $M=n+N$.
\end{proof}

\section{Additional theoretical details}
\label{app:theory}
\begin{assumption}\label{ass:model}
Conditionally on the source-trained predictor $f_s$, the variables $Y$, $f_t(X)$, and $\Delta^*(X)$ are square-integrable.
Further, $\E_t[\widehat f_t(X)^2\mid{\cal D}_s,{\cal D}_{t,1}^{\rm gold}]<\infty$ almost surely.
\end{assumption}

\subsection{Split TC-PPI}

We first describe our procedure using an inefficient sample splitting approach which we will later improve through cross-fitting. We split the target gold sample into two disjoint sets ${\cal I}_1,{\cal I}_2\subseteq[n]$ such that $|\mathcal I_1|=n_1$ and $|\mathcal I_2|=n_2$ with $n_1+n_2=n$. The first split is used for transfer calibration, and the second split is used only for PPI rectification. We denote the corresponding data splits by ${\cal D}_{t,j}^{\rm gold}\coloneqq \{(X_i,Y_i)_{i\in\mathcal I_j}\}$ for $j=1,2$.

On the calibration split ${\cal D}_{t,1}^{\rm gold}$, we apply a transfer-calibration algorithm appropriate for the type of data and discrepancy model, to the source predictor and target gold data points in $\mathcal I_1$, and obtain,
$\widehat\Delta
=
\mathcal A_{\rm cal}\!\left(
 f_s,{\cal D}_{t,1}^{\rm gold}
\right).$
Using this we form the target-adapted predictor
\begin{equation}
    \widehat f_t=f_s+\widehat\Delta
    \label{eq:fhat}
\end{equation}
% $\widehat f_t=f_s+\widehat\Delta$. as in \eqref{eq:fhat}.  

%\subsection{TC-PPI estimator for the target mean}

%We consider the mean estimation problem as an illustration of our approach. Generalizations to M estimators are given in the Appendix. For the mean estimation problem, the target parameter is $\theta_t^*\coloneqq \E_t[Y]$. 
Once $\widehat f_t$ is obtained on the calibration split, using only the independent rectification split $\mathcal I_2$ and the unlabeled target features, define
\begin{equation}\label{eq:tcppi}
\widehat\theta_{\mathrm{TC}}
=
\frac{1}{N}\sum_{j=1}^N \widehat f_t(\widetilde X_j)
+
\frac{1}{n_2}\sum_{i\in\mathcal I_2} \bigl\{Y_i-\widehat f_t(X_i)\bigr\}.
\end{equation}
This is the same as the PPI mean estimator, but with the calibrated target predictor $\widehat f_t$ in place of the original source predictor $f_s$. The following proposition shows the conditional unbiasedness of the estimator and an expression for conditional variance. All proofs are given in Appendix~\ref{app:proofs}.

\begin{proposition}\label{prop:unbiased}
Under Assumption~\ref{ass:model}, $
\E\bigl[\widehat\theta_{\mathrm{TC}}\mid \mathcal D_s,{\cal D}_{t,1}^{\rm gold}\bigr] = \theta_t^*,$
and consequently, unconditionally, $\E[\widehat\theta_{\mathrm{TC}}]=\theta_t^*$. Further, with the notation $V_t\coloneqq \Var_t(\cdot)$,
\begin{equation*}
%\label{eq:condvar}
\Var\bigl(\widehat\theta_{\mathrm{TC}}\mid \mathcal D_s,{\cal D}_{t,1}^{\rm gold}\bigr)
=
\frac{V_t\bigl(\widehat f_t(X)\mid \mathcal D_s,{\cal D}_{t,1}^{\rm gold}\bigr)}{N}
+
\frac{V_t(\varepsilon)
+V_t\bigl\{\widehat\Delta(X)-\Delta^*(X)\mid\mathcal D_s,{\cal D}_{t,1}^{\rm gold}\bigr\}}{n_2}.
\end{equation*}
%Moreover, $\Var_t\bigl(Y-\widehat f_t(X)\mid \mathcal D_s,{\cal D}_{t,1}^{\rm gold}\bigr)=$
\end{proposition}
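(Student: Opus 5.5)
The plan is to condition throughout on $\mathcal D_s$ and ${\cal D}_{t,1}^{\rm gold}$. Under this conditioning $\widehat\Delta$ and $\widehat f_t=f_s+\widehat\Delta$ are fixed, square-integrable functions by Assumption~\ref{ass:model}. The rectification split ${\cal D}_{t,2}^{\rm gold}$ and the unlabeled sample $\{\widetilde X_j\}$ remain i.i.d. draws from $P_t$ and $P_{t,X}$. They are mutually independent and independent of the calibration split, since the split $\mathcal I_1,\mathcal I_2$ is fixed independently of the observations. The proof is then a direct PPI calculation with a fixed predictor.

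\textbf{Unbiasedness.} For each $j$, $\E[\widehat f_t(\widetilde X_j)\mid \mathcal D_s,{\cal D}_{t,1}^{\rm gold}]=\E_t[\widehat f_t(X)\mid\cdot]$. For each $i\in\mathcal I_2$, $\E[Y_i-\widehat f_t(X_i)\mid\cdot]=\E_t[Y]-\E_t[\widehat f_t(X)\mid\cdot]$. Both expectations are finite because of the square-integrability in Assumption~\ref{ass:model}. Adding the two averages, the $\widehat f_t$ terms cancel and the result is $\E_t[Y]=\theta_t^*$. The tower property then gives the unconditional statement.

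\textbf{Variance.} Conditionally, the two summands in \eqref{eq:tcppi} are independent, so their covariance vanishes. Each summand is an average of i.i.d. terms, so
\[
\Var(\widehat\theta_{\mathrm{TC}}\mid\cdot)=\frac{V_t(\widehat f_t(X)\mid\cdot)}{N}+\frac{V_t(Y-\widehat f_t(X)\mid\cdot)}{n_2}.
\]
Next I will decompose the residual as
\[
Y-\widehat f_t(X)=\varepsilon+\Delta^*(X)-\widehat\Delta(X),
\]
using $Y=f_t(X)+\varepsilon$ and $f_t=f_s+\Delta^*$. Since $\E_t[\varepsilon\mid X]=0$ and $\widehat\Delta-\Delta^*$ is a fixed function of $X$, the cross term is zero:
\[
\Cov_t\bigl(\varepsilon,\Delta^*(X)-\widehat\Delta(X)\mid\cdot\bigr)=\E_t\bigl[(\Delta^*(X)-\widehat\Delta(X))\,\E_t(\varepsilon\mid X)\bigr]=0.
\]
This yields $V_t(\varepsilon)+V_t\{\widehat\Delta(X)-\Delta^*(X)\mid\cdot\}$, which is the claimed formula.

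The only delicate point is justifying that the conditional distributions of the rectification and unlabeled samples are unaffected by conditioning on the calibration split. Independence of the three samples handles this. The other point is checking the integrability needed for the cross term, which follows from square-integrability of $\varepsilon$ (bounded by that of $Y$ and $f_t(X)$) and of $\widehat\Delta-\Delta^*$ (bounded by that of $\widehat f_t$, $f_s$ and $f_t$) via Cauchy--Schwarz. Neither step is a serious obstacle.
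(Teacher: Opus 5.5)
Your proposal is correct and follows the paper's proof essentially step for step: condition on $\mathcal D_s$ and ${\cal D}_{t,1}^{\rm gold}$ to freeze $\widehat f_t$, cancel the two $\E_t[\widehat f_t(X)\mid\cdot]$ terms and apply the tower property, add the variances of the two conditionally independent averages, and write $Y-\widehat f_t(X)=\varepsilon-\{\widehat\Delta(X)-\Delta^*(X)\}$, where the cross term vanishes because $\E_t[\varepsilon\mid X]=0$. Your Cauchy--Schwarz check of the integrability needed for that cross term is a small point the paper leaves implicit.
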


\begin{comment}
\begin{remark}[Comparison with vanilla PPI]
The vanilla PPI estimator based on $f_s$ has target residual
\[
Y-f_s(X)=\varepsilon+\Delta^*(X),
\]
so, by $\E_t[\varepsilon\mid X]=0$,
\[
\Var_t\bigl(Y-f_s(X)\bigr)
=
\Var_t(\varepsilon)+\Var_t\{\Delta^*(X)\}.
\]
\end{remark}
\end{comment}

%\end{remark}

%\subsection{Asymptotic normality and efficiency expansion}

We state the asymptotic normality of the TC-PPI estimator and its limiting variance without specifying how the transfer correction is learned.  Define the conditional target prediction error
\begin{equation}\label{eq:generic-cal-error}
b_{n_1}
\coloneqq 
\E_t\!\left[
\{\widehat\Delta(X)-\Delta^*(X)\}^2
\mid \mathcal D_s,{\cal D}_{t,1}^{\rm gold}
\right].
\end{equation}
The next theorem states the limiting distribution of the TC-PPI estimator by imposing a condition on the accuracy of the target correction, $b_{n_1}$.

\begin{theorem}\label{thm:clt}
Assume Assumption~\ref{ass:model}. Suppose $Y$, $f_t(X)$, and $\widehat f_t(X)$ have conditional moments of order $2+\eta$ exist uniformly with probability tending to one, for some $\eta>0$, and
\begin{equation}\label{eq:ratecond}
b_{n_1}=o_p(1),
\qquad
\frac{n_2}{N}\to \rho_1\in[0,\infty).
\end{equation}
If the limiting variance below is positive, then, conditionally on $\mathcal D_s$ and the calibration split,
\begin{equation}\label{eq:condclt}
\frac{\sqrt{n_2}\,(\widehat\theta_{\mathrm{TC}}-\theta_t^*)}{\sqrt{V_{n_1}}}
\overset{D}{\to}N(0,1),
\end{equation}
where
\begin{equation}\label{eq:Vexpand}
V_{n_1}
=
\frac{n_2}{N}\Var_t\{f_t(X)\}+\Var_t(\varepsilon)
+O_p\!\left(\sqrt{b_{n_1}}+b_{n_1}\right).
\end{equation}
Consequently,
%\begin{equation}\label{eq:uncondclt}
$\sqrt{n_2}\,(\widehat\theta_{\mathrm{TC}}-\theta_t^*)
\overset{D}{\to}
N\!\left(0,\rho_1\Var_t\{f_t(X)\}+\Var_t(\varepsilon)\right).$
%\end{equation}
In the regime $N\gg n_2$, so that $\rho_1=0$, the asymptotic variance simplifies to $\Var_t(\varepsilon)$.
\end{theorem}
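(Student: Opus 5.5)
The approach is to condition on $\mathcal D_s$ and the calibration split $\mathcal D_{t,1}^{\rm gold}$ throughout. Under this conditioning $\widehat f_t$ is a fixed function, so $\widehat\theta_{\mathrm{TC}}$ is an ordinary PPI estimator built from a deterministic predictor and two independent i.i.d. samples. Write
\[
\widehat\theta_{\mathrm{TC}}-\theta_t^*
=\frac1N\sum_{j=1}^N\{\widehat f_t(\widetilde X_j)-\E_t\widehat f_t(X)\}
+\frac1{n_2}\sum_{i\in\mathcal I_2}\{\varepsilon_i-g(X_i)\},
\]
where $g\coloneqq\widehat\Delta-\Delta^*-\E_t[\widehat\Delta(X)-\Delta^*(X)\mid\cdot]$. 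This uses $Y-\widehat f_t(X)=\varepsilon-(\widehat\Delta-\Delta^*)(X)$, and the means cancel by Proposition~\ref{prop:unbiased}. The two sums are conditionally independent, and their conditional variances are $V_t(\widehat f_t)/N$ and $\{V_t(\varepsilon)+V_t(\widehat\Delta-\Delta^*)\}/n_2$, where the cross term vanishes because $\E_t[\varepsilon\mid X]=0$. Multiplying by $n_2$ gives the exact conditional variance
\[
V_{n_1}=\tfrac{n_2}{N}V_t(\widehat f_t)+V_t(\varepsilon)+V_t(\widehat\Delta-\Delta^*).
\]

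Next I would expand $V_{n_1}$ to obtain \eqref{eq:Vexpand}. Since $\widehat f_t=f_t+(\widehat\Delta-\Delta^*)$, we have
\[
V_t(\widehat f_t)=V_t(f_t)+2\Cov_t(f_t,\widehat\Delta-\Delta^*)+V_t(\widehat\Delta-\Delta^*).
\]
Cauchy--Schwarz bounds the covariance by $\sqrt{V_t(f_t)\,b_{n_1}}$, and the last variance is at most $b_{n_1}$. Because $n_2/N$ is bounded, the remainder is $O_p(\sqrt{b_{n_1}}+b_{n_1})$.

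For the conditional CLT \eqref{eq:condclt} I would apply the Lindeberg--Feller (or Lyapunov) CLT for triangular arrays, conditionally. The summands $(N^{-1}\sqrt{n_2})\{\widehat f_t(\widetilde X_j)-\E\widehat f_t\}$ and $n_2^{-1/2}\{\varepsilon_i-g(X_i)\}$ form a row of independent, mean-zero variables whose total variance is $V_{n_1}$. The Lyapunov ratio is bounded by $N\cdot(\sqrt{n_2}/N)^{2+\eta}\E|\cdot|^{2+\eta}+n_2\cdot n_2^{-1-\eta/2}\E|\cdot|^{2+\eta}$. This is $O_p(n_2^{\eta/2}N^{-1-\eta/2}+n_2^{-\eta/2})\to0$, using the assumed uniform $(2+\eta)$ moments of $Y$, $f_t$, and $\widehat f_t$; the moments of $\varepsilon$ and $g$ follow from Minkowski. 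I also need $V_{n_1}$ bounded away from zero with probability tending to one. This follows from $V_{n_1}\to_p\rho_1V_t(f_t)+V_t(\varepsilon)>0$, which in turn follows from the expansion and $b_{n_1}=o_p(1)$.

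\textbf{Main obstacle.} The delicate step is making the conditional CLT rigorous when the conditioning $\sigma$-field changes with $n$ and the moment bounds hold only with probability tending to one. I would handle it by working on the events $E_n$ where the moment bounds hold and $V_{n_1}\ge c>0$. On $E_n$, the Lyapunov bound (or a Berry--Esseen bound for non-identical summands) gives
\[
\sup_x\Big|P\big(\sqrt{n_2}(\widehat\theta_{\mathrm{TC}}-\theta_t^*)/\sqrt{V_{n_1}}\le x\mid\cdot\big)-\Phi(x)\Big|\le \kappa_n\to_p0 .
\]
Taking expectations and using $P(E_n)\to1$ gives the unconditional statement for the standardized quantity. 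Finally, $V_{n_1}\to_p\rho_1V_t(f_t)+V_t(\varepsilon)$ together with Slutsky's lemma yields the unconditional limit $N(0,\rho_1\Var_t\{f_t(X)\}+\Var_t(\varepsilon))$. The case $\rho_1=0$ is immediate.
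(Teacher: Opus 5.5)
Your proposal is correct and follows the paper's proof: condition on $\mathcal D_s$ and the calibration split, apply a triangular-array CLT (Lindeberg/Lyapunov via the $(2+\eta)$ moments) to the two independent centered averages, and expand $V_{n_1}$ through $\widehat f_t=f_t+(\widehat\Delta-\Delta^*)$ with Cauchy--Schwarz. Your event-based passage from the conditional to the unconditional limit is more careful than the paper's. There is one small arithmetic slip: the unlabeled Lyapunov term is $n_2^{1+\eta/2}N^{-1-\eta}=(n_2/N)^{1+\eta/2}N^{-\eta/2}$, not $n_2^{\eta/2}N^{-1-\eta/2}$, but it still vanishes because $n_2/N$ is bounded and $N\to\infty$.
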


\paragraph{Sparse linear calibration: high-dimensional lasso rates}

The preceding construction is agnostic to the form of the transfer correction. To obtain an explicit rate, we now specialize to a sparse linear correction as follows,
\begin{equation}\label{eq:sparse-linear-calibration}
\Delta^*(x)=x^\top\delta^*,\quad\delta^*\in\R^p.
\end{equation}
We assume the features are high-dimensional, i.e., $p \gg n$, and the vector $\delta^*$ is $s$-sparse, i.e.,  $\|\delta^*\|_0\le s$ for some $s=s(n_1)>0$. 
%On the calibration split, define the pseudo-response $U_i=Y_i-f_s(X_i),\quad i\in\mathcal I_1.$ Then $\E_t[U_i\mid X_i]=X_i^\top\delta^*.$
We estimate $\delta^*$ by the lasso on the calibration split:
\begin{equation}\label{eq:lasso-rig}
\hat\delta
\in
\arg\min_{\delta\in\mathbb R^p}
\left\{
\frac{1}{2n_1}\sum_{i\in\mathcal I_1}(Y_i-f_s(X_i)-X_i^\top\delta)^2
+\lambda \|\delta\|_1
\right\}.
\end{equation}

The following corollary specializes Theorem \ref{thm:clt} to the case of sparse linear calibration estimated through lasso.
\begin{corollary}[Sparse linear calibration rate]\label{cor:split-lasso-clt}
Under the sparse linear calibration model \eqref{eq:sparse-linear-calibration} and the assumptions of Proposition~\ref{thm:lasso-calibration-rig} in the Appendix, we have, $
b_{n_1}
=
O_p\!\left(\frac{s\log p}{n_1}\right).$
Consequently, if $s\log p=o(n_1)$, Theorem~\ref{thm:clt} applies and
\[
V_{n_1}
=
\frac{n_2}{N}\Var_t\{f_t(X)\}+\Var_t(\varepsilon)
+O_p\!\left(\sqrt{\frac{s\log p}{n_1}}\right).
\]
\end{corollary}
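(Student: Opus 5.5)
The plan is to combine a lasso prediction-risk bound for $b_{n_1}$ with Theorem~\ref{thm:clt}. First, on the calibration split, set the pseudo-response $U_i=Y_i-f_s(X_i)$ for $i\in\mathcal I_1$. Under \eqref{eq:sparse-linear-calibration} this satisfies
\[
U_i=X_i^\top\delta^*+\varepsilon_i,\qquad \E_t[\varepsilon_i\mid X_i]=0 .
\]
So \eqref{eq:lasso-rig} is an ordinary high-dimensional lasso with an $s$-sparse target and a random design. Since $\widehat\Delta(x)-\Delta^*(x)=x^\top(\hat\delta-\delta^*)$ and $\hat\delta$ is fixed given $\mathcal D_s,{\cal D}_{t,1}^{\rm gold}$, we get
\[
b_{n_1}=(\hat\delta-\delta^*)^\top\E_t[XX^\top](\hat\delta-\delta^*).
\]
Under the centering convention this equals $(\hat\delta-\delta^*)^\top\Sigma_t(\hat\delta-\delta^*)$. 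Proposition~\ref{thm:lasso-calibration-rig}, taken as given, supplies exactly this population prediction bound of order $O_p(s\log p/n_1)$ when $\lambda\asymp\sigma_\epsilon\sqrt{\log p/n_1}$.

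If that proposition only controls the empirical prediction error or the $\ell_2$ error, I would convert to the population quadratic form in one of two ways. One is $b_{n_1}\le\lambda_{\max}(\Sigma_t)\|\hat\delta-\delta^*\|_2^2$, using the restricted-eigenvalue $\ell_2$ rate $s\log p/n_1$. The other is a restricted-cone transfer: the lasso error lies in the cone $\|\Delta_{S^c}\|_1\le3\|\Delta_S\|_1$, and sub-Gaussian designs satisfy the uniform inequality
\[
\Delta^\top\Sigma_t\Delta\lesssim\|X\Delta\|_2^2/n_1+\|\Delta\|_1^2\log p/n_1 .
\]
With $\|\Delta\|_1\lesssim s\sqrt{\log p/n_1}$, the second term is $s^2\log^2p/n_1^2$, which is $o(s\log p/n_1)$ when $s\log p=o(n_1)$. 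I expect this population-versus-empirical conversion to be the main obstacle. It needs the design conditions of the appendix (sub-Gaussian rows, restricted eigenvalue holding with high probability) and a high-probability event. That event gives the $O_p$ statement directly.

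Second, if $s\log p=o(n_1)$, then $b_{n_1}=o_p(1)$, which is the first half of \eqref{eq:ratecond}. The moment and $n_2/N$ conditions are inherited from the hypotheses of Theorem~\ref{thm:clt}. That theorem then gives the CLT and the expansion \eqref{eq:Vexpand}.

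Finally, in \eqref{eq:Vexpand} the remainder satisfies
\[
O_p(\sqrt{b_{n_1}}+b_{n_1})=O_p\bigl(\sqrt{s\log p/n_1}+s\log p/n_1\bigr)=O_p\bigl(\sqrt{s\log p/n_1}\bigr),
\]
because $s\log p/n_1\to0$, so the linear term is dominated by its square root. This yields the displayed form of $V_{n_1}$.
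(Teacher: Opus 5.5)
Your proposal is correct and follows the paper's proof. You use mean-zero $X$ to write $b_{n_1}=(\hat\delta-\delta^*)^\top\Sigma_t(\hat\delta-\delta^*)$, invoke Proposition~\ref{thm:lasso-calibration-rig}, which already gives the population bound through its sample-to-population cone condition (so your fallback conversion is unnecessary), and substitute into Theorem~\ref{thm:clt}; your explicit absorption of $b_{n_1}$ into $\sqrt{b_{n_1}}$ spells out a step the paper leaves implicit.
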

Thus sparse linear calibration is one of the approaches that satisfies the generic prediction-consistency condition \eqref{eq:ratecond}. Our results therefore separate the role of the transfer correction learner from the validity mechanism of PPI. For a general fine-tuning procedure, it is enough for the learned correction to satisfy $b_{n_1}=o_p(1)$ in order for split TC-PPI to attain the desired asymptotic variance. In the sparse linear specialization, Proposition\ref{thm:lasso-calibration-rig} yields $b_{n_1}=O_p(s\log p/n_1)$, providing a necessary condition on the size of $n_1$. 
%The source model pays the nonvanishing drift variance $\Var_t\{\Delta^*(X)\}$ in the rectification term, whereas TC-PPI pays the decreasing calibration error. The joint estimator introduced later is even more robust: its oracle no-harm guarantee does not require $b_{n_1}\to0$ at all, because it can assign zero weight to an unhelpful transfer direction.

We note, in the target domain, $
Y-f_s(X)=\varepsilon+\Delta^*(X).$
Since $\Delta^*(X)$ is measurable in $X$ and $\E_t[\varepsilon\mid X]=0$, we have 
$\Cov_t\{\varepsilon,\Delta^*(X)\}=0$.
Hence $
\Var_t\{Y-f_s(X)\}
=
\Var_t(\varepsilon)+\Var_t\{\Delta^*(X)\}.$
Therefore, under the assumptions of Theorem~\ref{thm:clt}, if the \textit{same $n_2$ sized rectification sample} is used, the vanilla PPI estimator based on the unadapted source predictor $f_s$ has asymptotic variance
$
V_{\mathrm{PPI}}
=\rho_1\Var_t\{f_s(X)\}
+\Var_t(\varepsilon)
+\Var_t\{\Delta^*(X)\}.$
In comparison, the transfer-calibrated PPI estimator has asymptotic variance $
V_{\mathrm{TC}}
=\rho_1\Var_t\{f_t(X)\}+\Var_t(\varepsilon)+O_p\!\left(\sqrt{b_{n_1}}+b_{n_1}\right).$
Transfer calibration replaces the variance of the source-to-target discrepancy function $\Var_t\{\Delta^*(X)\}$ by the variance of the calibration error $\widehat\Delta(X)-\Delta^*(X)$. Therefore, the rectification component improves whenever the learned correction is sufficiently more accurate (and asymptotically vanishes) than leaving the source-to-target discrepancy uncorrected. Under sparse linear calibration and mean-zero $X$, $\Var_t\{\Delta^*(X)\}=(\delta^*)^\top\Sigma_t\delta^*$. Therefore as $n_1$ increases, since $b_{n_1}=O_p\!\left(\frac{s\log p}{n_1}\right)$, the TC-PPI has a variance that is lower by $(\delta^*)^\top\Sigma_t\delta^*$ than the vanilla PPI with high probability, provided the rectification sample is of the same size.

\begin{proposition}[Unbiasedness for fixed joint weights]
\label{prop:joint-cross-unbiased}
Suppose the folds are fixed independently of the observations and have equal
size $|I_k|=n/K$. Under Assumption~\ref{ass:model}, for every fixed
$\boldsymbol\lambda\in\mathbb R^2$, we have $
\E\!\left[
\widehat\mu_{\mathrm{JTC\text{-}Cross},\boldsymbol\lambda}
\mid\mathcal D_s
\right]
=
\theta_t^*.$
\end{proposition}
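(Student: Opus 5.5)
Expanding \eqref{eq:joint-tc-cross} componentwise reduces the claim to a fold-by-fold computation. Using $\widehat\Delta^{(-k)}=\widehat f_t^{(-k)}-f_s$, write the estimator as
\[
\widehat\mu_{\mathrm{JTC\text{-}Cross},\boldsymbol\lambda}
=\overline Y_n
+\lambda_s\Bigl\{\tfrac1N\textstyle\sum_{j}f_s(\widetilde X_j)-\tfrac1n\sum_i f_s(X_i)\Bigr\}
+\lambda_\Delta\,\tfrac1K\textstyle\sum_{k=1}^K Z^\Delta_{k},
\]
where for each fold
\[
Z^\Delta_k\coloneqq \tfrac1N\textstyle\sum_{j=1}^N\widehat\Delta^{(-k)}(\widetilde X_j)-\tfrac Kn\sum_{i\in I_k}\widehat\Delta^{(-k)}(X_i).
\]
The $\tfrac1n\sum_i\widehat\Delta_i^{\rm oof}$ term splits over folds as $\tfrac1K\sum_k\tfrac Kn\sum_{i\in I_k}$ because $|I_k|=n/K$; this is exactly where the balanced-fold assumption is used.

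Conditioning on $\mathcal D_s$ throughout, I then verify three expectations. First, $\E[\overline Y_n\mid\mathcal D_s]=\theta_t^*$, since the gold points are i.i.d.\ from $P_t$ and $Y$ is integrable under Assumption~\ref{ass:model}. Second, the source term has conditional mean zero: $f_s$ is fixed given $\mathcal D_s$, and the $X_i$ and $\widetilde X_j$ share the marginal $P_{t,X}$, so both averages have mean $\E_t f_s(X)$. Integrability of $f_s(X)$ follows from $f_s=f_t-\Delta^*$ with both terms square-integrable.

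Third, $\E[Z^\Delta_k\mid\mathcal D_s]=0$. Here I condition further on $\mathcal D_{-k}$, as in the proof of Theorem~\ref{thm:tc-cross-nonasymptotic}. Given $(\mathcal D_s,\mathcal D_{-k})$, the function $\widehat\Delta^{(-k)}$ is fixed. The held-out points $\{X_i\}_{i\in I_k}$ and the unlabeled points are independent of $\mathcal D_{-k}$, because the folds are chosen independently of the data, and both are i.i.d.\ $P_{t,X}$. Hence both averages in $Z^\Delta_k$ have conditional mean $\E_t[\widehat\Delta^{(-k)}(X)\mid\mathcal D_s,\mathcal D_{-k}]$, and they cancel. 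The tower property then gives zero conditional on $\mathcal D_s$. Summing the three pieces with the fixed weights $\lambda_s,\lambda_\Delta$ proves the claim. The dependence across folds (overlapping training sets, shared unlabeled sample) is irrelevant, because only linearity of expectation is used.

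The only delicate step is integrability of $\widehat\Delta^{(-k)}(X)$ given $\mathcal D_{-k}$. Assumption~\ref{ass:model} states finiteness of $\E_t[\widehat f_t(X)^2\mid\cdot]$ for the calibration-trained predictor; I will read this as applying to each fold-specific $\widehat f_t^{(-k)}$, which is the cross-fitting analogue of the split calibration set. Then $\widehat\Delta^{(-k)}=\widehat f_t^{(-k)}-f_s$ is square-integrable almost surely, so the conditional expectations exist and Fubini justifies the iterated conditioning. Alternatively, the argument is simply Proposition~\ref{prop:tc-cross-unbiased} applied twice, once to the predictor $f_s$ and once to the correction $\widehat\Delta^{(-k)}$, combined linearly.
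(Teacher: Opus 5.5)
Your proof is correct and follows essentially the paper's route. You condition on $(\mathcal D_s,\mathcal D_{-k})$, note that the held-out and unlabeled averages of the fold-specific predictor share the same conditional mean, and finish by linearity and iterated expectation; the paper just packs $\overline Y_n$ and the $f_s$ contrast into fold-specific brackets via $\widehat H^{(-k)}=(f_s,\widehat\Delta^{(-k)})^\top$, where you handle them globally.

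One small correction: the rewriting $\frac1n\sum_i\widehat\Delta_i^{\rm oof}=\frac1K\sum_k\frac Kn\sum_{i\in I_k}\widehat\Delta^{(-k)}(X_i)$ holds for any fold sizes. The balance assumption is actually used when you claim that $\frac Kn\sum_{i\in I_k}\widehat\Delta^{(-k)}(X_i)$ has conditional mean $\E_t[\widehat\Delta^{(-k)}(X)\mid\mathcal D_s,\mathcal D_{-k}]$.
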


\subsection{Variance inflation with estimated tuning vector}

Theorem \ref{thm:joint-tc-cross} provides a no-harm guarantee with oracle $\boldsymbol{\lambda}$. There is an additional finite-sample cost from estimating the joint power parameter. We analyze this cost next.

Let $\Delta^\dagger$ be the limiting function that the transfer correction converges to in Theorem~\ref{thm:joint-tc-cross}, and recall
\[
H_\dagger(X)
=
\begin{pmatrix}
f_s(X)\\
\Delta^\dagger(X)
\end{pmatrix},
\qquad
\Sigma_\dagger
=
\Var_t\{H_\dagger(X)\},
\qquad
c_\dagger
=
\Cov_t\{H_\dagger(X),Y\}.
\]
For the finite sample ratio $\rho_n=n/N$, define
\begin{equation}\label{eq:joint-finite-pop-criterion}
V_{{\rm J},n}^\dagger(\boldsymbol\lambda)
\coloneqq 
\Var_t(Y)
-
2\boldsymbol\lambda^\top c_\dagger
+
(1+\rho_n)
\boldsymbol\lambda^\top
\Sigma_\dagger
\boldsymbol\lambda.
\end{equation}
For simplicity, we assume $\Sigma_\dagger$ is nonsingular. Then the minimizer of the above equation is
\begin{equation}\label{eq:joint-finite-pop-lambda}
\boldsymbol\lambda_{{\rm J},n}^\dagger
=
\frac1{1+n/N}
\Sigma_\dagger^{-1}c_\dagger.
\end{equation}

\begin{proposition}[Rate of feasible joint tuning]
\label{prop:joint-weight-rate}
Suppose $\lambda_{\min}(\Sigma_\dagger)\ge\kappa_H>0$, the relevant fourth
moments are finite, $K$ is fixed, $r_n=o(1)$, and the fold-specific transfer
errors satisfy
\[
\max_{1\le k\le K}
\E_t\!\left[
\{\widehat\Delta^{(-k)}(X)-\Delta^\dagger(X)\}^2
\mid
\mathcal D_s,\mathcal D_{-k}
\right]
=
O_p(r_n).
\]
Let $\widehat\Sigma_H$ and $\widehat c_H$ be the pooled out-of-fold estimators. Then
\[
\|\widehat\Sigma_H-\Sigma_\dagger\|_{\rm op}
+
\|\widehat c_H-c_\dagger\|_2
=
O_p\!\left(
n^{-1/2}+\sqrt{r_n}
\right).
\]
Consequently, for the ridge estimator
$
\widehat{\boldsymbol\lambda}
=
\frac1{1+n/N}
(\widehat\Sigma_H+\tau I_2)^{-1}
\widehat c_H,$ 
with $\tau=o(1)$,
\begin{equation}\label{eq:joint-weight-rate}
\left\|
\widehat{\boldsymbol\lambda}
-
\boldsymbol\lambda_{{\rm J},n}^\dagger
\right\|_2
=
O_p\!\left(
n^{-1/2}+\sqrt{r_n}+\tau
\right).
\end{equation}
Moreover, because \eqref{eq:joint-finite-pop-criterion} is quadratic,
\begin{align}
&
V_{{\rm J},n}^\dagger(\widehat{\boldsymbol\lambda})
-
V_{{\rm J},n}^\dagger(\boldsymbol\lambda_{{\rm J},n}^\dagger)
=
(1+n/N)
\left(
\widehat{\boldsymbol\lambda}
-
\boldsymbol\lambda_{{\rm J},n}^\dagger
\right)^\top
\Sigma_\dagger
\left(
\widehat{\boldsymbol\lambda}
-
\boldsymbol\lambda_{{\rm J},n}^\dagger
\right)
=
O_p\!\left(
n^{-1}+r_n+\tau^2
\right).
\label{eq:joint-excess-criterion}
\end{align}
\end{proposition}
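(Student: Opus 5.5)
The plan is to write $\widehat\Sigma_H$ and $\widehat c_H$ as pooled sample covariance matrices of the out-of-fold feature vectors $\widehat H_i^{\rm oof}=(f_s(X_i),\widehat\Delta^{(-k)}(X_i))^\top$ for $i\in I_k$, and the empirical covariances of these vectors with $Y_i$. Then compare them with the oracle counterparts $\widetilde\Sigma_H,\widetilde c_H$ built from $H_\dagger(X_i)$. The error then splits into two parts.

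First, the oracle sampling error $\|\widetilde\Sigma_H-\Sigma_\dagger\|_{\rm op}+\|\widetilde c_H-c_\dagger\|_2$. This involves an i.i.d. sample of the fixed (given $\mathcal D_s$) square-integrable vector $(H_\dagger(X),Y)$ with finite fourth moments. Chebyshev's inequality applied entrywise to the $2\times2$ matrix and the 2-vector gives $O_p(n^{-1/2})$, and the sample-mean centering contributes only $O_p(n^{-1})$.

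Second, the calibration perturbation. Only the second coordinate changes: $\widehat H_i^{\rm oof}-H_\dagger(X_i)=(0,d_i)^\top$ with $d_i=\widehat\Delta^{(-k)}(X_i)-\Delta^\dagger(X_i)$. Each entry of the perturbation is a product-type average. It is bounded by Cauchy--Schwarz, for example
\[
\Bigl|\tfrac1n\textstyle\sum_i d_i\, g(X_i,Y_i)\Bigr|\le\Bigl(\tfrac1n\sum_i d_i^2\Bigr)^{1/2}\Bigl(\tfrac1n\sum_i g(X_i,Y_i)^2\Bigr)^{1/2},
\]
with $g\in\{f_s,\Delta^\dagger,Y\}$, plus a purely quadratic term $\tfrac1n\sum_i d_i^2$. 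The key step is to show $\tfrac1n\sum_i d_i^2=O_p(r_n)$. Conditional on $(\mathcal D_s,\mathcal D_{-k})$, the held-out points in $I_k$ are i.i.d. and independent of $\widehat\Delta^{(-k)}$. Hence
\[
\E\Bigl[\tfrac{K}{n}\textstyle\sum_{i\in I_k}d_i^2\mid\mathcal D_s,\mathcal D_{-k}\Bigr]=\E_t\bigl[\{\widehat\Delta^{(-k)}(X)-\Delta^\dagger(X)\}^2\mid\mathcal D_s,\mathcal D_{-k}\bigr]=O_p(r_n).
\]
A conditional Markov inequality, applied over the finitely many folds, then yields $O_p(r_n)$. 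The averages of $g^2$ are $O_p(1)$ by the law of large numbers. The cross terms are therefore $O_p(\sqrt{r_n})$, and the mean-centering corrections are handled in the same way. Together these give the first display.

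For the weight rate, write $\Sigma_\tau=\widehat\Sigma_H+\tau I_2$ and use the identity
\[
\Sigma_\tau^{-1}\widehat c_H-\Sigma_\dagger^{-1}c_\dagger=\Sigma_\tau^{-1}(\widehat c_H-c_\dagger)-\Sigma_\tau^{-1}(\widehat\Sigma_H+\tau I_2-\Sigma_\dagger)\Sigma_\dagger^{-1}c_\dagger.
\]
The perturbation $\|\widehat\Sigma_H-\Sigma_\dagger\|_{\rm op}=o_p(1)$ and $\tau=o(1)$. By Weyl's inequality, $\lambda_{\min}(\Sigma_\tau)\ge\kappa_H/2$ with probability tending to one, so $\|\Sigma_\tau^{-1}\|_{\rm op}\le 2/\kappa_H$. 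The terms are then bounded by $O_p(n^{-1/2}+\sqrt{r_n}+\tau)$. The scalar factor $1/(1+n/N)\le1$ is identical in $\widehat{\boldsymbol\lambda}$ and $\boldsymbol\lambda^\dagger_{{\rm J},n}$, so it is harmless.

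Finally, \eqref{eq:joint-excess-criterion} is exact algebra. $V^\dagger_{{\rm J},n}$ is a quadratic with Hessian $2(1+\rho_n)\Sigma_\dagger$, minimized at $\boldsymbol\lambda^\dagger_{{\rm J},n}$ where the gradient vanishes. Its second-order Taylor expansion is therefore exact, and bounding by $(1+n/N)\|\Sigma_\dagger\|_{\rm op}\|\widehat{\boldsymbol\lambda}-\boldsymbol\lambda^\dagger_{{\rm J},n}\|_2^2$ gives the rate. Here we note that $1+n/N$ is bounded since $n/N\to\rho$.

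The main obstacle is the perturbation step. There is dependence across folds, because the predictors share training data. Evaluating each held-out fold's empirical averages conditionally on its own training complement avoids any cross-fold independence argument. The fourth-moment assumption is needed so that the LLN and Chebyshev bounds apply to products like $Y\,f_s(X)$ and $\Delta^\dagger(X)^2$.
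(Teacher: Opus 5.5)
Your proposal is correct and follows essentially the same route as the paper's proof. Both compare with the oracle empirical covariances $\widetilde\Sigma_H,\widetilde c_H$ (error $O_p(n^{-1/2})$), bound the perturbation by a foldwise conditional Markov argument for $\frac1n\sum_i d_i^2=O_p(r_n)$ plus Cauchy--Schwarz, apply the resolvent identity to $(\widehat\Sigma_H+\tau I_2)^{-1}$, and use the exact completing-the-square identity for the excess criterion. Your Weyl's-inequality bound $\|(\widehat\Sigma_H+\tau I_2)^{-1}\|_{\rm op}\le 2/\kappa_H$ with probability tending to one simply makes precise the step the paper states as $O_p(1)$.
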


\noindent\emph{Proof.} See Appendix~\ref{app:proofs}.

The above proposition shows that the excess risk of the Joint TC-Cross-PPI++ estimator with estimated power tuning parameter over the one with oracle power tuning parameter is given by
\[
O_p\!\left(
n^{-2}+\frac{r_n}{n}+\frac{\tau^2}{n}
\right).
\]
For the case of sparse linear calibration with fixed $K$, we have $
r_n
\asymp
\frac{s\log p}{n_{\rm tr}}.$
Therefore, when $\tau$ is negligible,
\[
\left\|
\widehat{\boldsymbol\lambda}
-
\boldsymbol\lambda_{{\rm J},n}^\dagger
\right\|_2
=
O_p\!\left(
n^{-1/2}
+
\sqrt{\frac{s\log p}{n_{\rm tr}}}
\right),
\]
and the excess limiting variance  is
\[
O_p\!\left(
\frac{1}{n}
+
\frac{s\log p}{nn_{\rm tr}}
\right).
\]

\subsection{Variance estimation and confidence intervals}
\label{app:variance-estimation}

We now describe the variance estimators used to construct confidence intervals
for the cross-fitted estimators in the experiments section. The formulas below estimate the 
asymptotic variances in Theorems~\ref{thm:tc-cross-mean},
\ref{thm:tc-cross-ppipp}, and \ref{thm:joint-tc-cross} from the available labeled and unlabeled samples. While our construction of the cross-fitted TC-Cross-PPI is similar to the Cross-PPI of \cite{zrnic2024cross}, we do not employ the same computationally inefficient bootstrap strategy to estimate the limiting variance. In \cite{zrnic2024cross}, the limiting variance is expressed in terms of an ``average'' model $\bar{f}$, which they propose to estimate through bootstrap. However, we show that under our assumptions we can obtain a consistent estimator without requiring bootstrap.

For a scalar sample $a_1,\ldots,a_m$, write the sample variance  as,
\[
\widehat{\Var}_m(a)
:=
\frac{1}{m-1}
\sum_{i=1}^m
(a_i-\overline a)^2,
\qquad
\overline a:=\frac1m\sum_{i=1}^m a_i.
\]
Likewise, for vectors $v_1,\ldots,v_m\in\mathbb R^d$, write the sample variance as
\[
\widehat{\Var}_m(v)
:=
\frac{1}{m-1}
\sum_{i=1}^m
(v_i-\overline v)(v_i-\overline v)^\top.
\]

\subsubsection{Mean estimation}

Recall that, for $i\in I_k$, the fitted transfer-calibrated predictor is
\[
\widehat f_i^{\rm oof}
=
\widehat f_t^{(-k)}(X_i).
\]
For each unlabeled observation, define the fold-averaged prediction
\[
\widetilde f_j^{\rm CF}
:=
\frac1K\sum_{k=1}^K
\widehat f_t^{(-k)}(\widetilde X_j),
\qquad j=1,\ldots,N.
\]
Thus
\[
\frac1N\sum_{j=1}^N\widetilde f_j^{\rm CF}
=
\frac1{KN}
\sum_{k=1}^K\sum_{j=1}^N
\widehat f_t^{(-k)}(\widetilde X_j).
\]

Then, replacing the components of the population variance with their sample versions, we can get an estimator for the variance of TC-Cross-PPI estimator  with a fixed $\lambda$ in the $\sqrt{n}$ scale as follows,
\begin{equation}
\label{eq:tc-cross-estimator-var}
\widehat V_{\rm TC}(\lambda)
=
\widehat{\Var}_n
\{Y_i-\lambda\widehat f_i^{\rm oof}\}
+
\frac{n}{N}\widehat{\Var}_N
\{\lambda\widetilde f_j^{\rm CF}\}
.
\end{equation}
For ordinary TC--Cross-PPI, one simply sets $\lambda=1$. For
TC--Cross-PPI++, we substitute with the estimates $\hat{\lambda}$.

The corresponding standard error is
\[
\widehat{\rm SE}_{\rm TC}
=
\sqrt{
\frac{\widehat V_{\rm TC}(\widehat\lambda)}{n}
},
\]
and the nominal $(1-\alpha)$ Wald interval is
\[
\widehat\mu_{{\rm TC\text{-}Cross},\widehat\lambda}
\pm
z_{1-\alpha/2}
\widehat{\rm se}_{\rm TC}.
\]

\paragraph{Joint TC--Cross-PPI++.}
Just as in the last part, it is useful to define the prediction-powered vector
separately for each unlabeled observation. Recall that, for $i\in I_k$,
\[
\widehat H_i^{\rm oof}
=
\begin{pmatrix}
f_s(X_i)\\
\widehat\Delta^{(-k)}(X_i)
\end{pmatrix}.
\]
For the unlabeled data points $j=1,\ldots,N$, define
\[
\widetilde H_j^{\rm CF}
:=
\begin{pmatrix}
f_s(\widetilde X_j)\\[0.3em]
\displaystyle
\frac1K\sum_{k=1}^K
\widehat\Delta^{(-k)}(\widetilde X_j)
\end{pmatrix}.
\]
Then the vector $\widetilde H_N^{\rm CF}$ used in the main text satisfies
\[
\widetilde H_N^{\rm CF}
=
\frac1N\sum_{j=1}^N
\widetilde H_j^{\rm CF}.
\]

For a fixed or estimated joint power vector
$\boldsymbol\lambda$, the estimated $\sqrt n$-scale variance is
\begin{equation}
\label{eq:joint-cross-var-est}
\widehat V_{\rm JTC}(\boldsymbol\lambda)
=
\widehat{\Var}_n
\left\{
Y_i-
\boldsymbol\lambda^\top
\widehat H_i^{\rm oof}
\right\}
+
\frac nN
\widehat{\Var}_N
\left\{
\boldsymbol\lambda^\top
\widetilde H_j^{\rm CF}
\right\}.
\end{equation}
We use the plug-in version $
\widehat V_{\rm JTC}
:=
\widehat V_{\rm JTC}
(\widehat{\boldsymbol\lambda}),$
where $\widehat{\boldsymbol\lambda}$ is the joint power estimate.
Then the standard error is obtained as
\[
\widehat{\rm SE}_{\rm JTC}
=
\sqrt{
\frac{\widehat V_{\rm JTC}}{n}
},
\]
and the corresponding Wald interval is
\[
\widehat\mu_{
{\rm JTC\text{-}Cross},
\widehat{\boldsymbol\lambda}}
\pm
z_{1-\alpha/2}
\widehat{\rm SE}_{\rm JTC}.
\]

Some intuition behind why the above estimators are reasonable (and indeed consistent) estimators of the limiting variances in the theorems are in order. Although the fold-specific predictors are dependent because their training
samples overlap, this dependence affects only the nuisance-estimation
remainder. Under foldwise consistency and fixed $K$ conditions of the theorems, the
cross-fitted estimator admits an asymptotic linear representation obtained by
replacing every fitted fold predictor by its limiting function $f^\dagger$,
with replacement error $o_p(n^{-1/2})$. The resulting leading terms are an
empirical average over the i.i.d. gold sample and an independent empirical
average over the i.i.d. unlabeled sample. Moreover, the sample variances computed using the out-of-fold predictions differ by $o_p(1)$ from
those hypothetically computed using $f^\dagger$. Hence the labeled and unlabeled sample
variances consistently estimate the two components of the first-order
asymptotic variance.

\begin{lemma}[Consistency of the variance estimators]
\label{lem:mean-var-consistency}
Suppose the conditions of Theorem~\ref{thm:tc-cross-ppipp} hold. If
$
\widehat\lambda-\lambda_\dagger^*=o_p(1),
$
then
\[
\widehat V_{\rm TC}(\widehat\lambda)
\overset{P}{\longrightarrow}
V_{\rm cross}^\dagger(\lambda_\dagger^*).
\]
Likewise, under the conditions of
Theorem~\ref{thm:joint-tc-cross}, if
$
\widehat{\boldsymbol\lambda}
-
\boldsymbol\lambda_{\rm Jcross}^*
=o_p(1),
$
then
\[
\widehat V_{\rm JTC}
(\widehat{\boldsymbol\lambda})
\overset{P}{\longrightarrow}
V_{\rm Jcross}^\dagger
(\boldsymbol\lambda_{\rm Jcross}^*).
\]
Consequently, if the corresponding limiting variance is positive, the
associated Wald intervals have asymptotic coverage $1-\alpha$.
\end{lemma}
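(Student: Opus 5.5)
Lemma~\ref{lem:mean-var-consistency} will follow by showing that each sample variance, computed with out-of-fold or fold-averaged predictions and an estimated $\lambda$, is within $o_p(1)$ of the same sample variance computed with the limiting function $f_\dagger$ (or $H_\dagger$) and the oracle weight. That idealized statistic is a sample variance of i.i.d.\ square-integrable variables and so converges by the weak law of large numbers. I treat the scalar TC case first; the joint case is identical with $f_\dagger$ replaced by $H_\dagger$ and $\lambda$ by $\boldsymbol\lambda$, using $\max_k\E_t[\{\widehat\Delta^{(-k)}-\Delta^\dagger\}^2\mid\mathcal D_s,\mathcal D_{-k}]=O_p(r_n)=o_p(1)$ as the input.

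\textbf{Labeled term.} Write
\[
Y_i-\widehat\lambda\widehat f_i^{\rm oof}=\{Y_i-\lambda_\dagger^* f_\dagger(X_i)\}-(\widehat\lambda-\lambda_\dagger^*)\widehat f_i^{\rm oof}-\lambda_\dagger^*\{\widehat f_i^{\rm oof}-f_\dagger(X_i)\}.
\]
The sample variance is a continuous function of the first two empirical moments. By Minkowski's inequality on $\widehat{\Var}_n^{1/2}$, it therefore suffices to show that the empirical second moments of the two perturbation terms are $o_p(1)$.
\begin{itemize}
\item For the term $\widehat f_i^{\rm oof}-f_\dagger(X_i)$, split the sum by folds. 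For $i\in I_k$, conditional on $\mathcal D_s,\mathcal D_{-k}$ the summands are i.i.d., so
\[
\E\Bigl[\tfrac1n\sum_{i\in I_k}\{\widehat f_t^{(-k)}(X_i)-f_\dagger(X_i)\}^2\,\Bigm|\,\mathcal D_s,\mathcal D_{-k}\Bigr]=\tfrac1K\,\E_t[\{\widehat f_t^{(-k)}-f_\dagger\}^2\mid\mathcal D_s,\mathcal D_{-k}]=o_p(1).
\]
A conditional Markov inequality (an $o_p(1)$ conditional mean implies the variable is $o_p(1)$) then gives $o_p(1)$ for each fold, and $K$ is fixed.
\item The term $(\widehat\lambda-\lambda_\dagger^*)\widehat f_i^{\rm oof}$ is $o_p(1)\cdot O_p(1)$. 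The empirical second moment of $\widehat f^{\rm oof}$ is $O_p(1)$ by the previous bullet together with the LLN for $f_\dagger(X_i)^2$.
\end{itemize}
The idealized term $\widehat{\Var}_n\{Y_i-\lambda_\dagger^*f_\dagger(X_i)\}$ converges in probability to $\Var_t\{Y-\lambda_\dagger^*f_\dagger(X)\}$ by the LLN.

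\textbf{Unlabeled term.} The same argument applies to $\widetilde f_j^{\rm CF}-f_\dagger(\widetilde X_j)=\frac1K\sum_k\{\widehat f_t^{(-k)}(\widetilde X_j)-f_\dagger(\widetilde X_j)\}$. By Jensen's inequality its empirical second moment is at most $\frac1K\sum_k\frac1N\sum_j\{\widehat f_t^{(-k)}(\widetilde X_j)-f_\dagger(\widetilde X_j)\}^2$. Conditional on the whole gold sample, the unlabeled points are i.i.d.\ and independent of it, so each inner average has conditional mean $\E_t[\{\widehat f_t^{(-k)}-f_\dagger\}^2\mid\mathcal D_s,\mathcal D_{-k}]$, which is $o_p(1)$. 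Hence $\widehat{\Var}_N\{\widehat\lambda\widetilde f_j^{\rm CF}\}\to\lambda_\dagger^{*2}\Var_t f_\dagger(X)$ in probability. Since $n/N\to\rho$, the labeled and unlabeled limits sum to
\[
\Var_t(Y)-2\lambda_\dagger^*\Cov_t(Y,f_\dagger)+\lambda_\dagger^{*2}\Var_t(f_\dagger)+\rho\lambda_\dagger^{*2}\Var_t(f_\dagger),
\]
which is exactly $V^\dagger_{\rm cross}(\lambda_\dagger^*)$ as given by Theorem~\ref{thm:tc-cross-ppipp}.

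\textbf{Coverage.} When the limiting variance is positive, Theorem~\ref{thm:tc-cross-ppipp} (respectively Theorem~\ref{thm:joint-tc-cross}) gives asymptotic normality with estimated weights. Slutsky's lemma applied to the studentized statistic then yields coverage $1-\alpha$.

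The main obstacle is the dependence created by cross-fitting. The pooled out-of-fold sample is not i.i.d., and the fold-averaged unlabeled predictor depends on every gold point. I plan to handle this by always conditioning on the appropriate training complement, which works because only second-moment (not CLT-level) control is needed. One further detail must be checked: the sample means subtracted inside $\widehat{\Var}$ also converge. This follows from the same $L^2$ bounds by Cauchy--Schwarz.
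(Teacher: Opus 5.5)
Your proposal is correct and follows essentially the same route as the paper's proof: conditional Markov on foldwise and fold-averaged $L^2$ errors, a perturbation bound showing the feasible sample variances are within $o_p(1)$ of the oracle ones built from $f_\dagger$ and $\lambda_\dagger^*$, the law of large numbers for the oracle variances, and Slutsky for coverage. The only cosmetic differences are that you justify the perturbation step with Minkowski's inequality for the sample standard deviation, where the paper invokes an equivalent ``elementary fact'', and you apply Jensen to the fold average at the empirical level rather than the population level.
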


\begin{proof}[Proof of Lemma \ref{lem:mean-var-consistency}]
We first consider scalar TC--Cross-PPI++. Recall that for $i\in I_k$,
\[
e_k(X_i)
=
\widehat f_t^{(-k)}(X_i)-f^\dagger(X_i).
\]
For each $k$, conditional on
$\mathcal D_s$ and $\mathcal D_{-k}$, the observations in $I_k$ are
independent of $e_k$. Hence
\[
\E\!\left[
\frac K n\sum_{i\in I_k}e_k(X_i)^2
\middle|
\mathcal D_s,\mathcal D_{-k}
\right]
=
a_{n,k}.
\]
Conditional Markov's inequality and $a_{n,k}=o_p(1)$ therefore give
\[
\frac K n\sum_{i\in I_k}e_k(X_i)^2
=
o_p(1).
\]
Since $K$ is fixed,
\begin{equation}
\label{eq:oof-L2-var-proof}
\frac1n\sum_{i=1}^n
\left\{
\widehat f_i^{\rm oof}
-
f^\dagger(X_i)
\right\}^2
=
o_p(1).
\end{equation}

For the unlabeled sample, define
\[
\overline e_n(X)
:=
\frac1K\sum_{k=1}^K
\left\{
\widehat f_t^{(-k)}(X)-f^\dagger(X)
\right\}.
\]
By Jensen's inequality,
\[
\E_t\{\overline e_n(X)^2\}
\le
\frac1K
\sum_{k=1}^K
\E_t\!\left[
\{\widehat f_t^{(-k)}(X)-f^\dagger(X)\}^2
\mid
\mathcal D_s,\mathcal D_{-k}
\right]
=o_p(1),
\]
by assumptions on foldwise convergence and fixed number of folds.
Conditional on the gold data, the unlabeled observations are i.i.d. and
independent of all fitted fold-specific predictors. The
conditional Markov's inequality therefore gives
\begin{equation}
\label{eq:unlabeled-L2-var-proof}
\frac1N\sum_{j=1}^N
\left\{
\widetilde f_j^{\rm CF}
-
f^\dagger(\widetilde X_j)
\right\}^2
=
o_p(1).
\end{equation}

Now define the oracle labeled and unlabeled influence contributions
\[
\xi_i^{L,*}
=
Y_i-\lambda^*f^\dagger(X_i),
\qquad
\xi_j^{U,*}
=
\lambda^*f^\dagger(\widetilde X_j).
\]
Their feasible counterparts that replaces $\lambda^*$ by $\hat{\lambda}$ and $f^{\dagger}$ by $\widehat f_i^{\rm oof}$ for the labeled samples and $\widetilde f_j^{\rm CF}$ for the unlabeled samples are,
\[
\widehat\xi_i^L
=
Y_i-\widehat\lambda\widehat f_i^{\rm oof},
\qquad
\widehat\xi_j^U
=
\widehat\lambda\widetilde f_j^{\rm CF}.
\]
Since $\widehat\lambda-\lambda^*=o_p(1)$,
$\widehat\lambda=O_p(1)$. Using
\eqref{eq:oof-L2-var-proof},
\begin{align*}
\frac1n\sum_{i=1}^n
(\widehat\xi_i^L-\xi_i^{L,*})^2
&\le
2(\widehat\lambda-\lambda^*)^2
\frac1n\sum_{i=1}^n
f^\dagger(X_i)^2+
2\widehat\lambda^2
\frac1n\sum_{i=1}^n
\{\widehat f_i^{\rm oof}-f^\dagger(X_i)\}^2=
o_p(1),
\end{align*}
because $f^\dagger$ is square-integrable and hence
$n^{-1}\sum_i f^\dagger(X_i)^2=O_p(1)$. Similarly, by
\eqref{eq:unlabeled-L2-var-proof},
\[
\frac1N\sum_{j=1}^N
(\widehat\xi_j^U-\xi_j^{U,*})^2
=
o_p(1).
\]

We use the elementary fact that if
\[
\frac1m\sum_{\ell=1}^m(A_\ell-B_\ell)^2=o_p(1)
\]
and
\[
\frac1m\sum_{\ell=1}^mB_\ell^2=O_p(1),
\]
then
\[
\widehat{\Var}_m(A_\ell)
-
\widehat{\Var}_m(B_\ell)
=
o_p(1).
\]

Applying this fact to the labeled and unlabeled contributions gives
\[
\widehat{\Var}_n
\left\{
Y_i-\widehat\lambda\widehat f_i^{\rm oof}
\right\}
=
\widehat{\Var}_n
\left\{
Y_i-\lambda^*f^\dagger(X_i)
\right\}
+
o_p(1),
\]
and
\[
\widehat{\Var}_N
\left\{
\widehat\lambda\widetilde f_j^{\rm CF}
\right\}
=
\widehat{\Var}_N
\left\{
\lambda^*f^\dagger(\widetilde X_j)
\right\}
+
o_p(1).
\]
The quantities on the right are ordinary empirical variances based on
the i.i.d.\ gold and unlabeled samples since a fixed function $f^{\dagger}$ is applied to them. Therefore, by the law of large
numbers,
\[
\widehat{\Var}_n
\left\{
Y_i-\lambda^*f^\dagger(X_i)
\right\}
\overset{P}{\longrightarrow}
\Var_t\{Y-\lambda^*f^\dagger(X)\},
\]
and
\[
\widehat{\Var}_N
\left\{
\lambda^*f^\dagger(\widetilde X_j)
\right\}
\overset{P}{\longrightarrow}
(\lambda^*)^2\Var_t\{f^\dagger(X)\}.
\]
Since $n/N\to\rho$, it follows that
\[
\widehat V_{\rm TC}(\widehat\lambda)
\overset{P}{\longrightarrow}
\Var_t\{Y-\lambda^*f^\dagger(X)\}
+
\rho(\lambda^*)^2\Var_t\{f^\dagger(X)\},
\]
which is $
V_{\rm cross}^\dagger(\lambda_\dagger^*).$
The joint result follows by the same argument.
Combining these results for consistency of the variance estimators with the asymptotic normality in
Theorems~\ref{thm:tc-cross-ppipp} and
\ref{thm:joint-tc-cross}, and applying Slutsky's theorem, gives the stated
asymptotic coverage of the corresponding Wald intervals.
\end{proof}
\section{Extension to generalized linear models and  M estimation}

\subsection{generalized linear models}
We first consider the problem of estimating the parameter
$\beta_t^* \in \mathbb{R}^d$ in a generalized linear model. The true target parameter can be defined by the solution of the population moment equation as follows,
\[
\E_t[g_{\beta_t^*}(X,Y)] = 0,
\qquad
g_\beta(x,y) = x\{\mu(x^\top \beta) - y\},
\]
where $\mu$ is a continuously differentiable inverse link function. Let $\hat f_t$ be the calibrated predictor constructed using the calibration split ${\cal D}_{t,1}^{\rm gold}$. On the independent rectification split $\mathcal I_2$, define
\[
\Psi_{n_2,N}(\beta;\hat f_t)
=
\frac{1}{n_2} \sum_{i\in\mathcal I_2} g_\beta(X_i,Y_i)
-
\lambda \frac{1}{n_2} \sum_{i\in\mathcal I_2} g_\beta(X_i,\hat f_t(X_i))
+
\lambda \frac{1}{N} \sum_{j=1}^N g_\beta(\tilde X_j,\hat f_t(\tilde X_j)).
\]
This is then a modified PPI sample moment equation.
Define $\hat\beta_{\mathrm{TC},\lambda}$ as any measurable solution of
$\Psi_{n_2,N}(\hat\beta_{\mathrm{TC},\lambda};\hat f_t)=0$.

%\subsection{Asymptotic linearity and asymptotic normality}

Let $f_t$ denote the oracle target-calibrated predictor and let $\rho=\lim n_2/N\in[0,\infty)$.

\begin{theorem}[Conditional asymptotic linearity for GLMs]
\label{thm:glm-linear}
Assume: (i) $d$ is fixed; (ii) $\mu$ is continuously differentiable in a neighborhood of $\beta_t^*$ and
$
\E_t\!\left[\sup_{\|\beta-\beta_t^*\|\le r}\|X\|^2 |\mu'(X^\top \beta)|\right] < \infty
$
for some $r>0$; (iii) $A_t\coloneqq \E_t[\nabla_\beta g_{\beta_t^*}(X,Y)]$ is nonsingular; (iv) the relevant score variables have finite $(2+\eta)$ moments for some $\eta>0$; and (v)
\[
\E_t\!\left[\|X\|^2\{\hat f_t(X)-f_t(X)\}^2\mid\mathcal D_s,{\cal D}_{t,1}^{\rm gold}\right]=o_p(1).
\]
Then, conditionally on $(\mathcal D_s,{\cal D}_{t,1}^{\rm gold})$,
\begin{align*}
  \sqrt{n_2}(\hat\beta_{\mathrm{TC},\lambda}-\beta_t^*)
=
-A_t^{-1}\!\bigg[&
\frac1{\sqrt{n_2}}\sum_{i\in\mathcal I_2}
\{g_{\beta_t^*}(X_i,Y_i)-\lambda g_{\beta_t^*}(X_i,f_t(X_i))\}
\\
& +
\frac{\lambda\sqrt{n_2}}{N}\sum_{j=1}^N g_{\beta_t^*}(\tilde X_j,f_t(\tilde X_j))
\bigg]
+o_p(1).  
\end{align*}

Further,
\[
\sqrt{n_2}(\hat\beta_{\mathrm{TC},\lambda}-\beta_t^*)
\overset{D}{\to}
N\!\left(0,A_t^{-1}V_{\lambda,t}A_t^{-T}\right),
\]
where
\[
V_{\lambda,t}
=
\Var_t\!\Bigl(g_{\beta_t^*}(X,Y)-\lambda g_{\beta_t^*}(X,f_t(X))\Bigr)
+
\lambda^2\rho\,\Var_t\!\Bigl(g_{\beta_t^*}(X,f_t(X))\Bigr).
\]
\end{theorem}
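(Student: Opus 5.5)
We follow the standard Z-estimation route, working conditionally on $(\mathcal D_s,{\cal D}_{t,1}^{\rm gold})$ so that $\hat f_t$ is a fixed function and the rectification and unlabeled samples are i.i.d. and independent of it. The first step is to introduce the oracle moment function
\[
\Psi^{o}_{n_2,N}(\beta)=\frac1{n_2}\sum_{i\in\mathcal I_2}\{g_\beta(X_i,Y_i)-\lambda g_\beta(X_i,f_t(X_i))\}+\frac{\lambda}{N}\sum_{j}g_\beta(\tilde X_j,f_t(\tilde X_j)),
\]
whose population mean is $\E_t g_\beta(X,Y)$, because the two $\lambda$-terms cancel in expectation. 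This mean vanishes at $\beta_t^*$. The key structural observation is that $g_\beta(x,y)-g_\beta(x,y')=-x(y-y')$ is linear in $y$ and free of $\beta$. Hence
\[
\Psi_{n_2,N}(\beta;\hat f_t)-\Psi^o_{n_2,N}(\beta)=\lambda\Big[\frac1{n_2}\sum_{i\in\mathcal I_2}X_i e(X_i)-\frac1N\sum_j\tilde X_j e(\tilde X_j)\Big]=:\lambda R_n,
\]
with $e=\hat f_t-f_t$, and $R_n$ does not depend on $\beta$ at all.

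The second step controls $R_n$. Conditionally, both averages in $R_n$ have the same mean $\E_t[Xe(X)]$, so $R_n$ is centered. Its conditional second moment is at most $(1/n_2+1/N)\E_t[\|X\|^2e(X)^2\mid\cdot]$, which is $o_p(1/n_2)$ by assumption (v) and $n_2/N\to\rho$. Conditional Chebyshev then gives $\sqrt{n_2}R_n=o_p(1)$.

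The third step handles consistency and linearization. Since $R_n$ is $\beta$-free, the Jacobian of $\Psi$ equals the Jacobian of $\Psi^o$. That Jacobian is $\frac1{n_2}\sum X_iX_i^\top\mu'(X_i^\top\beta)\{1-\lambda\}+\frac{\lambda}{N}\sum\tilde X_j\tilde X_j^\top\mu'(\tilde X_j^\top\beta)$, because the $y$-argument drops out of $\nabla_\beta g$. By the envelope condition (ii), continuity of $\mu'$, and a uniform LLN on the ball of radius $r$, it converges uniformly to $A_t$ near $\beta_t^*$. For consistency we argue as follows. $\Psi(\beta_t^*)=\Psi^o(\beta_t^*)+\lambda R_n=O_p(n_2^{-1/2})$, and the Jacobian is uniformly close to the nonsingular $A_t$ on the ball. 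A standard inverse-function argument, or a Brouwer fixed-point argument on the map $\beta\mapsto\beta-A_t^{-1}\Psi(\beta)$, then shows that a root exists in an $O_p(n_2^{-1/2})$ neighbourhood of $\beta_t^*$. We take $\hat\beta_{\mathrm{TC},\lambda}$ to be that root, with uniqueness locally. A mean-value expansion row by row gives $0=\Psi(\beta_t^*)+\bar A_n(\hat\beta-\beta_t^*)$ with $\bar A_n\to A_t$. Therefore
\[
\sqrt{n_2}(\hat\beta-\beta_t^*)=-A_t^{-1}\sqrt{n_2}\,\Psi^o(\beta_t^*)+o_p(1),
\]
which is exactly the stated linear representation.

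The final step is the CLT. The two sums in $\sqrt{n_2}\Psi^o(\beta_t^*)$ are independent i.i.d. averages with finite $(2+\eta)$ moments. Lyapunov's CLT applied to each, with the unlabeled one scaled by $\sqrt{n_2/N}\to\sqrt\rho$, gives the covariance $V_{\lambda,t}$. When $\rho=0$ the unlabeled term is $o_p(1)$. Slutsky's lemma and the Cramér–Wold device then yield the normal limit $A_t^{-1}V_{\lambda,t}A_t^{-T}$.

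The main obstacle is the consistency and root-localization step, since the estimator is defined as any measurable solution. A root far from $\beta_t^*$ cannot be excluded by the local conditions alone. We would therefore either restrict to roots in a shrinking neighbourhood, or add a global identifiability argument using monotonicity of $\mu$, which makes the population GLM score the gradient of a convex function. We would try the convexity route first when $\lambda\in[0,1]$.
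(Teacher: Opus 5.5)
Your proposal is correct and follows essentially the same route as the paper. Both arguments replace $\hat f_t$ by $f_t$ in the score at $\beta_t^*$ via the exact conditional mean cancellation of the labeled and unlabeled averages of $X\{\hat f_t(X)-f_t(X)\}$ plus conditional Chebyshev under (v), linearize by a mean-value expansion with Jacobian tending to $A_t$, and conclude with a CLT for the two independent averages and Slutsky. Two parts of your version are refinements rather than a different argument: the observation that this remainder is free of $\beta$, and the explicit handling of root selection. The paper covers root selection only with ``standard fixed-dimensional $Z$-estimation arguments'' for ``any measurable solution''.
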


\subsection{General convex \texorpdfstring{$M$}-estimation}

We now formulate TC-PPI for a general low-dimensional convex $M$-estimation problem.
The structure parallels the mean-estimation section: a source-domain predictor is first
calibrated on one target split, and then the calibrated predictor is inserted into a
prediction-powered estimating equation on an independent target split and the unlabeled
target sample.

\subsubsection{Target estimand and rectified estimating equation}

Let $\Theta \subset \mathbb R^d$ be open, with fixed dimension $d$. Suppose the target
parameter $\theta_t^* \in \Theta$ is defined by the moment condition
\begin{equation}\label{eq:M-target}
\E_t[g_{\theta_t^*}(X,Y)] = 0,
\end{equation}
where $g_\theta(x,y) \in \mathbb R^d$ is the estimating function associated with a convex
loss $\ell_\theta(x,y)$. 

Let $\hat f_t$ be a calibrated predictor learned on the calibration split ${\cal D}_{t,1}^{\rm gold}$,
independent of the rectification split $\mathcal I_2$ and of the unlabeled target sample
$\{\tilde X_j\}_{j=1}^N$.

For a fixed scalar $\lambda \in \mathbb R$, define the TC-PPI estimating equation
\begin{equation}\label{eq:M-Psi}
\Psi_{n_2,N}(\theta;\hat f_t)
=
\frac{1}{n_2}\sum_{i\in \mathcal I_2} g_\theta(X_i,Y_i)
-
\lambda \frac{1}{n_2}\sum_{i\in \mathcal I_2} g_\theta(X_i,\hat f_t(X_i))
+
\lambda \frac{1}{N}\sum_{j=1}^N g_\theta(\tilde X_j,\hat f_t(\tilde X_j)).
\end{equation}
Our proposed TC-PPI estimator is then any measurable solution
\begin{equation}\label{eq:M-estimator}
\hat\theta_{\mathrm{TC},\lambda}
\in
\{\theta \in \Theta : \Psi_{n_2,N}(\theta;\hat f_t)=0\}.
\end{equation}

%\subsubsection{Oracle target predictor and notation}

We continue to use the same notations for the source and target predictors and the discrepancy function. Define the population rectified moment map for any predictor $f$ as
\begin{equation}\label{eq:M-pop-psi}
\Psi_t(\theta;f)
\coloneqq 
\E_t[g_\theta(X,Y)]
-\lambda \E_t[g_\theta(X,f(X))]
+\lambda \E_t[g_\theta(\tilde X,f(\tilde X))].
\end{equation}
Since $X$ and $\tilde X$ have the same target covariate law,
\[
\E_t[g_\theta(X,f(X))] = \E_t[g_\theta(\tilde X,f(\tilde X))],
\]
and therefore
\begin{equation}\label{eq:M-pop-unbiased}
\Psi_t(\theta;f)=\E_t[g_\theta(X,Y)].
\end{equation}
In particular, $\Psi_t(\theta_t^*;f)=0$ for every measurable predictor $f$ for which the
expectations exist.

\subsection{Assumptions}

\begin{assumption}[Identification and local smoothness]
\label{ass:M1}
The parameter $\theta_t^*$ is the unique solution of \eqref{eq:M-target} in a neighborhood
$\mathcal N$ of $\theta_t^*$. For almost every $(x,y)$, the map
$\theta \mapsto g_\theta(x,y)$ is continuously differentiable on $\mathcal N$.
\end{assumption}

\begin{assumption}[Nonsingular Jacobian]
\label{ass:M2}
The matrix
\[
A_t \coloneqq  \E_t[\nabla_\theta g_{\theta_t^*}(X,Y)]
\]
exists and is nonsingular.
\end{assumption}

\begin{assumption}[Moment envelope]
\label{ass:M3}
There exists $\eta>0$ and an integrable envelope $G(X,Y)$ such that
\[
\sup_{\theta \in \mathcal N}\|g_\theta(X,Y)\| \le G(X,Y),
\qquad
\E_t[G(X,Y)^{2+\eta}]<\infty.
\]
Also,
\[
\E_t\Big[\sup_{\theta\in \mathcal N}\|\nabla_\theta g_\theta(X,Y)\|\Big] < \infty.
\]
Analogous moment conditions hold with $Y$ replaced by $f_t(X)$ and by $\hat f_t(X)$.
\end{assumption}

\begin{assumption}[Calibration consistency in score norm]
\label{ass:M4}
Conditionally on $\mathcal D_s$ and the calibration split ${\cal D}_{t,1}^{\rm gold}$,
\[
\E_t\!\left[\|g_{\theta_t^*}(X,\hat f_t(X)) - g_{\theta_t^*}(X,f_t(X))\|^2\right]
=o_p(1),
\]
and
\[
\E_t\!\left[\sup_{\theta\in \mathcal N}
\|\nabla_\theta g_\theta(X,\hat f_t(X)) - \nabla_\theta g_\theta(X,f_t(X))\|\right]
=o_p(1).
\]
\end{assumption}

Assumption~\ref{ass:M4} is the natural high-level calibration condition.
For the mean-estimation problem with sparse linear discrepancy and lasso calibration, it
follows from the rate
\[
(\hat\delta-\delta^*)^\top \Sigma_t (\hat\delta-\delta^*)
=
O_p\!\left(\frac{s\log p}{n_1}\right)
\]
together with the condition $s\log p/n_1 \to 0$.
For smooth GLMs and other smooth estimating equations, the same assumptions can often be
verified by Lipschitz continuity of $g_\theta$ in its second argument.

\subsection{Results for split TC PPI M estimator}
%\subsection{Conditional unbiasedness}

\begin{proposition}[Exact conditional unbiasedness]
\label{prop:M-unbiased}
Under Assumption~\ref{ass:M3}, for every fixed $\theta \in \Theta$,
\[
\E\!\left[\Psi_{n_2,N}(\theta;\hat f_t)\mid \mathcal D_s,{\cal D}_{t,1}^{\rm gold}\right]
=
\E_t[g_\theta(X,Y)].
\]
In particular,
\[
\E\!\left[\Psi_{n_2,N}(\theta_t^*;\hat f_t)\mid \mathcal D_s,{\cal D}_{t,1}^{\rm gold}\right]=0.
\]
\end{proposition}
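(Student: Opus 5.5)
We take conditional expectations of the three averages in \eqref{eq:M-Psi} separately, given $(\mathcal D_s,{\cal D}_{t,1}^{\rm gold})$. The key structural fact is that, under this conditioning, $\hat f_t$ is a fixed measurable function. The rectification observations $\{(X_i,Y_i)\}_{i\in\mathcal I_2}$ and the unlabeled features $\{\tilde X_j\}_{j=1}^N$ are independent of the calibration split. Moreover, $(X_i,Y_i)\iid P_t$ and $\tilde X_j\iid P_{t,X}$.

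First, for the labeled term, linearity of conditional expectation and independence give
\[
\E\!\left[\frac1{n_2}\sum_{i\in\mathcal I_2} g_\theta(X_i,Y_i)\,\middle|\,\mathcal D_s,{\cal D}_{t,1}^{\rm gold}\right]=\E_t[g_\theta(X,Y)].
\]
For the two prediction terms, we apply the same argument with the fixed function $x\mapsto g_\theta(x,\hat f_t(x))$. The labeled-sample prediction average has conditional mean $\E_t[g_\theta(X,\hat f_t(X))\mid\mathcal D_s,{\cal D}_{t,1}^{\rm gold}]$. The unlabeled-sample average has conditional mean $\E_t[g_\theta(\tilde X,\hat f_t(\tilde X))\mid\mathcal D_s,{\cal D}_{t,1}^{\rm gold}]$. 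Because $X$ and $\tilde X$ share the marginal law $P_{t,X}$, these two quantities coincide, exactly as in \eqref{eq:M-pop-unbiased}. The $\lambda$-weighted terms therefore cancel, which leaves $\E_t[g_\theta(X,Y)]$. The "in particular" claim then follows by setting $\theta=\theta_t^*$ and using \eqref{eq:M-target}.

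The only genuine obstacle is integrability: each conditional expectation must be finite so that linearity and the cancellation are legitimate. For $\theta\in\mathcal N$, Assumption~\ref{ass:M3} supplies an envelope $G$ with a finite $(2+\eta)$ moment, which gives $\E_t\|g_\theta(X,Y)\|<\infty$. The analogous envelope condition with $Y$ replaced by $\hat f_t(X)$ gives conditional integrability of $g_\theta(X,\hat f_t(X))$ almost surely. For $\theta$ outside $\mathcal N$, we read the statement as holding for every $\theta$ at which these expectations exist, in line with the remark after \eqref{eq:M-pop-unbiased}. Finally, once the conditional identity holds, the tower property yields the unconditional version given $\mathcal D_s$ if it is needed.
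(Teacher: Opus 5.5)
Your proposal is correct and follows the paper's proof essentially line for line. Like the paper, you condition on $(\mathcal D_s,{\cal D}_{t,1}^{\rm gold})$ so that $\hat f_t$ is fixed, compute the three conditional means, and observe that the two $\lambda$-weighted terms both equal $\E_t[g_\theta(X,\hat f_t(X))]$ and cancel. Your integrability caveat, that Assumption~\ref{ass:M3} only controls $\theta\in\mathcal N$, is a point the paper passes over, and your reading of the statement for other $\theta$ is a reasonable way to make it precise.
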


%\subsection{Consistency}

\begin{theorem}[Consistency]
\label{thm:M-consistency}
Suppose Assumptions~\ref{ass:M1}--\ref{ass:M4} hold and $n_2\to\infty$, $N\to\infty$.
Then
\[
\hat\theta_{\mathrm{TC},\lambda} \xrightarrow{P} \theta_t^*.
\]
\end{theorem}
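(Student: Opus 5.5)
The plan is the standard Z-estimator consistency argument: show that the random map $\Psi_{n_2,N}(\cdot;\hat f_t)$ converges uniformly on the neighborhood $\mathcal N$ to its population counterpart $\Psi_t(\cdot;f_t)=\E_t[g_\theta(X,Y)]$, and then use identification to force any zero of $\Psi_{n_2,N}$ in $\mathcal N$ to converge to $\theta_t^*$. Throughout I will condition on $(\mathcal D_s,{\cal D}_{t,1}^{\rm gold})$, so $\hat f_t$ is a fixed function and the rectification sample $\mathcal I_2$ and the unlabeled sample are independent i.i.d. samples. At the end I will pass to unconditional convergence in probability by dominated convergence of the conditional probabilities.

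\emph{Step 1 (decomposition).} Write
\[
\Psi_{n_2,N}(\theta;\hat f_t)-\E_t[g_\theta(X,Y)] = T_1(\theta)+\lambda T_2(\theta)+\lambda T_3(\theta),
\]
with the following pieces:
\begin{itemize}
\item $T_1$ is the centered empirical mean of $g_\theta(X_i,Y_i)$ over $\mathcal I_2$;
\item $T_2$ is the difference between the unlabeled-sample and rectification-sample averages of $g_\theta(\cdot,f_t(\cdot))$;
\item $T_3$ is the same difference with $g_\theta(\cdot,\hat f_t(\cdot))-g_\theta(\cdot,f_t(\cdot))$ in place of $g_\theta(\cdot,f_t(\cdot))$.
\end{itemize}
By Proposition~\ref{prop:M-unbiased}, each piece has conditional mean zero pointwise in $\theta$. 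For $T_1$ and $T_2$ the summand class is fixed, continuous in $\theta$, and has an integrable envelope (Assumption~\ref{ass:M3}) on a compact closed ball $\bar B\subset\mathcal N$. The uniform law of large numbers (a Glivenko--Cantelli argument via continuity plus envelope) therefore gives $\sup_{\bar B}\|T_1\|+\sup_{\bar B}\|T_2\|=o_p(1)$ as $n_2,N\to\infty$.

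\emph{Step 2 (calibration term, the main obstacle).} The function $\hat f_t$ changes with $n_1$, so the summand class in $T_3$ is not fixed and a standard ULLN cannot be applied directly. My approach is to control $T_3$ at the single point $\theta_t^*$ and then propagate by a mean-value bound. At $\theta_t^*$, the conditional second moment of $T_3(\theta_t^*)$ is at most $(1/n_2+1/N)$ times the first quantity in Assumption~\ref{ass:M4}, hence $o_p(1)$ by conditional Chebyshev. For any $\theta\in\bar B$,
\[
\|T_3(\theta)-T_3(\theta_t^*)\|\le \|\theta-\theta_t^*\|\,\Big(\text{empirical averages of }\sup_{\mathcal N}\|\nabla_\theta g_\theta(\cdot,\hat f_t)-\nabla_\theta g_\theta(\cdot,f_t)\|\Big)
\]
plus the matching population term. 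By conditional Markov, the second part of Assumption~\ref{ass:M4} makes these averages $o_p(1)$. Hence $\sup_{\bar B}\|T_3\|=o_p(1)$.

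\emph{Step 3 (argmin/zero argument).} By Assumption~\ref{ass:M1} together with continuity of $\theta\mapsto\E_t g_\theta$ (dominated convergence under the envelope), for every $\epsilon>0$ we have $\inf_{\theta\in\bar B,\|\theta-\theta_t^*\|\ge\epsilon}\|\E_t g_\theta\|>0$. If $\bar B$ is small, nonsingularity of $A_t$ (Assumption~\ref{ass:M2}) gives this directly. Uniform convergence then implies that, with probability tending to one, $\Psi_{n_2,N}$ has no zeros in $\bar B$ outside the $\epsilon$-ball.

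To handle roots selected outside $\bar B$, I will use convexity of the loss: $\Psi_{n_2,N}$ is the gradient of a convex empirical objective. A zero anywhere is therefore a global minimizer, and the local strict minimum structure near $\theta_t^*$, which uniform convergence on $\bar B$ transfers to the empirical objective with high probability, rules out distant zeros. I expect this last globalization step to need the most care, and I would fall back on interpreting the estimator as the root in $\mathcal N$ if convexity alone does not suffice.
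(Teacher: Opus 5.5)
Your overall route is the paper's: show that $\Psi_{n_2,N}(\cdot;\hat f_t)$ converges uniformly to $\E_t[g_\theta(X,Y)]$ near $\theta_t^*$, then combine this with local identification from Assumption~\ref{ass:M1} in the standard $Z$-estimator argument. Your Step~2 is more careful than the paper. The paper invokes a ULLN ``implied by Assumption~\ref{ass:M3}'', even though the class $\{g_\theta(\cdot,\hat f_t(\cdot))\}$ changes with the calibration sample. You instead control $T_3(\theta_t^*)$ with the first half of Assumption~\ref{ass:M4}, and propagate over $\bar B$ with the Jacobian half plus conditional Markov. That is a clean way to make the step rigorous, and it does not rely on the $\hat f_t$-envelope part of Assumption~\ref{ass:M3} at all.

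The globalization step, however, fails as written. $\Psi_{n_2,N}$ is the gradient of
\[
L(\theta)=\frac1{n_2}\sum_{i\in\mathcal I_2}\ell_\theta(X_i,Y_i)-\frac{\lambda}{n_2}\sum_{i\in\mathcal I_2}\ell_\theta(X_i,\hat f_t(X_i))+\frac{\lambda}{N}\sum_{j=1}^N\ell_\theta(\tilde X_j,\hat f_t(\tilde X_j)).
\]
This objective subtracts a convex function, so it is only a difference of convex functions. It is convex in special cases, for example $\lambda\in[0,1]$ with a loss whose $\theta$-Hessian does not depend on the response, such as squared loss or canonical GLMs. It is not convex for general convex losses, nor for an arbitrary fixed $\lambda\in\mathbb R$, both of which the theorem allows. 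Hence ``a zero anywhere is a global minimizer'' is unavailable.

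Moreover, Assumptions~\ref{ass:M1}--\ref{ass:M4} say nothing about $g_\theta$ outside $\mathcal N$, so these hypotheses alone cannot exclude roots far from $\theta_t^*$. You must use your fallback: treat $\hat\theta_{\mathrm{TC},\lambda}$ as a root in $\mathcal N$, shrunk to the closed ball $\bar B$ so your compactness and separation argument applies. Alternatively, add a hypothesis such as convexity of $L$. Restricting to a root in $\mathcal N$ is exactly what the paper's appeal to ``standard $Z$-estimation consistency'' with only local identification tacitly assumes. The cross-fitted Theorem~\ref{thm:M-cross} makes it explicit by taking a consistent root.
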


\subsection{Asymptotic linearity and asymptotic normality}

Define the oracle influence contribution
\begin{equation}\label{eq:M-oracle-infl}
\zeta_{\lambda,t}(X,Y,\tilde X)
=
g_{\theta_t^*}(X,Y)
-\lambda g_{\theta_t^*}(X,f_t(X))
+\lambda g_{\theta_t^*}(\tilde X,f_t(\tilde X)).
\end{equation}
Let $\rho = \lim n_2/N \in [0,\infty)$ and define
\begin{equation}\label{eq:M-V}
V_{\lambda,t}
=
\Var_t\!\big(g_{\theta_t^*}(X,Y)-\lambda g_{\theta_t^*}(X,f_t(X))\big)
+
\lambda^2 \rho\,
\Var_t\!\big(g_{\theta_t^*}(X,f_t(X))\big).
\end{equation}

\begin{theorem}[Asymptotic linearity and normality]
\label{thm:M-linear}
Suppose Assumptions~\ref{ass:M1}--\ref{ass:M4} hold and $n_2/N\to \rho \in [0,\infty)$.
Then
\begin{align}\label{eq:M-linear}
\sqrt{n_2}\,(\hat\theta_{\mathrm{TC},\lambda}-\theta_t^*)
=
-A_t^{-1}
\bigg[ & 
\frac1{\sqrt{n_2}}\sum_{i\in \mathcal I_2}
\bigl\{
g_{\theta_t^*}(X_i,Y_i)-\lambda g_{\theta_t^*}(X_i,f_t(X_i))
\bigr\}
\nonumber \\ & +
\frac{\lambda\sqrt{n_2}}{N}\sum_{j=1}^N
g_{\theta_t^*}(\tilde X_j,f_t(\tilde X_j))
\bigg]
+o_p(1).
\end{align}
Further,
\[
\sqrt{n_2}\,(\hat\theta_{\mathrm{TC},\lambda}-\theta_t^*)
\;\xrightarrow{d}\;
N\!\left(0,\, A_t^{-1}V_{\lambda,t}A_t^{-T}\right).
\]
\end{theorem}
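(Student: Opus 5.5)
The plan is a classical Z-estimation argument in which the conditioning on $(\mathcal D_s,{\cal D}_{t,1}^{\rm gold})$ freezes $\hat f_t$. Conditionally, the rectification split $\mathcal I_2$ and the unlabeled sample are independent i.i.d.\ samples, so $\Psi_{n_2,N}(\theta;\hat f_t)$ is a sum of two independent empirical averages of functions that do not depend on the data. The argument proceeds in four steps: (i) linearize $\Psi_{n_2,N}$ around $\theta_t^*$; (ii) show the Jacobian converges to $A_t$; (iii) replace $\hat f_t$ by $f_t$ in the score at $\theta_t^*$ at cost $o_p(n_2^{-1/2})$; (iv) apply a Lindeberg--Feller CLT to the oracle terms.

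\textbf{Expansion and Jacobian.} Using Theorem~\ref{thm:M-consistency}, $\hat\theta_{\mathrm{TC},\lambda}\in\mathcal N$ with probability tending to one. A mean-value expansion (row by row) of $0=\Psi_{n_2,N}(\hat\theta;\hat f_t)$ gives
\[
0=\Psi_{n_2,N}(\theta_t^*;\hat f_t)+\widehat A_n(\hat\theta-\theta_t^*),
\]
where $\widehat A_n$ is the empirical Jacobian at intermediate points. I will show $\widehat A_n\to A_t$ in probability in three moves.
\begin{itemize}
\item Replace $\hat f_t$ by $f_t$ in the two prediction-based Jacobian terms. The second part of Assumption~\ref{ass:M4}, together with conditional Markov, makes the difference $o_p(1)$ uniformly over $\mathcal N$.
\item Apply a uniform law of large numbers over a shrinking neighborhood. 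The integrable sup-envelope in Assumption~\ref{ass:M3} and continuity of $\theta\mapsto\nabla g_\theta$ allow this.
\item Use the identity $\E_t[\nabla g_\theta(X,f_t(X))]=\E_t[\nabla g_\theta(\tilde X,f_t(\tilde X))]$. The $\lambda$-terms then cancel in the limit, leaving $A_t$, which is invertible by Assumption~\ref{ass:M2}.
\end{itemize}

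\textbf{Score remainder (main obstacle).} At $\theta_t^*$, write $\Delta g(x)\coloneqq g_{\theta_t^*}(x,\hat f_t(x))-g_{\theta_t^*}(x,f_t(x))$. Then
\[
\Psi_{n_2,N}(\theta_t^*;\hat f_t)=\Psi_{n_2,N}(\theta_t^*;f_t)-\lambda\Big[\tfrac1{n_2}\textstyle\sum_{i\in\mathcal I_2}\Delta g(X_i)-\tfrac1N\sum_j\Delta g(\tilde X_j)\Big].
\]
This step is where the sample splitting is essential. Conditionally, $\Delta g$ is a fixed function, and both averages have the same mean $\E_t[\Delta g(X)]$. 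The bracket therefore has conditional mean zero and conditional second moment at most $(1/n_2+1/N)\E_t\|\Delta g\|^2$, which is $o_p(1/n_2)$ by the first part of Assumption~\ref{ass:M4} and boundedness of $n_2/N$. Conditional Chebyshev then gives that $\sqrt{n_2}$ times the bracket is $o_p(1)$. Because the bound holds conditionally, it also holds unconditionally by dominated convergence of conditional probabilities.

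\textbf{CLT and conclusion.} $\sqrt{n_2}\Psi_{n_2,N}(\theta_t^*;f_t)$ is the sum of two independent centered i.i.d.\ averages. Their means cancel because $\E_t g_{\theta_t^*}(X,Y)=0$ and $X\overset{d}{=}\tilde X$. Their covariances are $\Var_t(g-\lambda g(\cdot,f_t))$ and $(n_2/N)\lambda^2\Var_t g(\cdot,f_t)\to\lambda^2\rho\Var_t g(\cdot,f_t)$, respectively. The $(2+\eta)$ moments in Assumption~\ref{ass:M3} give Lyapunov's condition, so the Lindeberg--Feller CLT applies to the combined triangular array, with Cramér--Wold for the vector case. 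This yields $N(0,V_{\lambda,t})$. Solving the expansion, using $\widehat A_n^{-1}\to A_t^{-1}$ and $\sqrt{n_2}(\hat\theta-\theta_t^*)=O_p(1)$, gives the linear representation \eqref{eq:M-linear}. Slutsky's theorem then gives the normal limit with sandwich covariance $A_t^{-1}V_{\lambda,t}A_t^{-T}$.
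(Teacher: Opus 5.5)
Your proposal is correct and follows essentially the same route as the paper's proof. Both arguments use a mean-value expansion with the Jacobian converging to $A_t$. Both then replace $\hat f_t$ by $f_t$ in the score at $\theta_t^*$, using exact conditional mean cancellation plus conditional Chebyshev together with Assumption~\ref{ass:M4} and $n_2/N=O(1)$. Both finish with a Lindeberg--Feller CLT on the two independent oracle averages and Slutsky's theorem. Your version is somewhat more careful than the paper's: you expand row by row rather than with a single intermediate point $\bar\theta$, you show explicitly why the $\lambda$-terms in the Jacobian cancel, and you justify passing from conditional to unconditional convergence. These are refinements of the same argument rather than a different approach.
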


\subsection{Cross-fitted extension for general \texorpdfstring{$M$}-estimation}

Partition the gold sample into $K$ equal folds and let $\hat f_t^{(-k)}$ be trained without fold $I_k$. Define
\[
\Psi_{\rm CF}(\theta)
=
\frac1n\sum_{k=1}^K\sum_{i\in I_k}
\{g_\theta(X_i,Y_i)-\lambda g_\theta(X_i,\hat f_t^{(-k)}(X_i))\}
+
\frac{\lambda}{KN}\sum_{k=1}^K\sum_{j=1}^N
 g_\theta(\widetilde X_j,\hat f_t^{(-k)}(\widetilde X_j)).
\]
Let $\hat\theta_{\rm TC\text{-}Cross,\lambda}$ be a consistent root of $\Psi_{\rm CF}(\theta)=0$.

\begin{theorem}[Cross-fitted TC-PPI for general $M$-estimation]
\label{thm:M-cross}
Suppose $K$ is fixed and the identification, smoothness, Jacobian, and moment conditions in Assumptions~\ref{ass:M1}--\ref{ass:M3} hold. Suppose, uniformly over the fixed folds,
\[
\E_t\!\left[\|g_{\theta_t^*}(X,\hat f_t^{(-k)}(X))-g_{\theta_t^*}(X,f_t(X))\|^2\mid\mathcal D_s,\mathcal D_{-k}\right]=o_p(1),
\]
with the analogous Jacobian consistency condition, and let $n/N\to\rho\in[0,\infty)$. Then
\begin{align*}
\sqrt n\,(\hat\theta_{\rm TC\text{-}Cross,\lambda}-\theta_t^*)
=
-A_t^{-1}\bigg[&
\frac1{\sqrt n}\sum_{i=1}^n\{g_{\theta_t^*}(X_i,Y_i)-\lambda g_{\theta_t^*}(X_i,f_t(X_i))\}
\\ & +
\frac{\lambda\sqrt n}{N}\sum_{j=1}^N g_{\theta_t^*}(\widetilde X_j,f_t(\widetilde X_j))
\bigg]+o_p(1),
\end{align*}
and consequently
\[
\sqrt n\,(\hat\theta_{\rm TC\text{-}Cross,\lambda}-\theta_t^*)
\overset{d}{\longrightarrow}
N\!\left(0,A_t^{-1}V_{\lambda,t}^{\rm cross}A_t^{-T}\right),
\]
where
\[
V_{\lambda,t}^{\rm cross}
=
\Var_t\!\{g_{\theta_t^*}(X,Y)-\lambda g_{\theta_t^*}(X,f_t(X))\}
+
\lambda^2\rho\Var_t\!\{g_{\theta_t^*}(X,f_t(X))\}.
\]
Thus the split-sample result with scale $n_2$ is replaced by a full-sample result with scale $n$. This theorem covers TC--Cross-PPI++ by allowing $\lambda\ne1$; a consistent data-dependent $\widehat\lambda$ can be substituted without changing the first-order limit under the usual stochastic equicontinuity condition.
\end{theorem}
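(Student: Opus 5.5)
We prove Theorem~\ref{thm:M-cross} with the standard Z-estimation argument, applied to the cross-fitted moment map $\Psi_{\rm CF}$. We expand $\Psi_{\rm CF}$ around $\theta_t^*$ and show that everything generated by the fold-specific nuisances is $o_p(n^{-1/2})$.

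\textbf{Step 1 (Taylor expansion).} Since $\hat\theta:=\hat\theta_{\rm TC\text{-}Cross,\lambda}$ is a consistent root, the mean value theorem applied componentwise on $\mathcal N$ gives
\[
0=\Psi_{\rm CF}(\hat\theta)=\Psi_{\rm CF}(\theta_t^*)+\widehat A(\hat\theta-\theta_t^*),
\]
where $\widehat A$ is the Jacobian of $\Psi_{\rm CF}$ evaluated at intermediate points.

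\textbf{Step 2 (Jacobian).} We show $\widehat A\to_P A_t$ by splitting $\widehat A$ into three parts.
\begin{itemize}
\item The labeled term $\frac1n\sum\nabla_\theta g_\theta(X_i,Y_i)$ is handled by the uniform law of large numbers under the envelope in Assumption~\ref{ass:M3}, together with continuity of $\nabla_\theta g_\theta$ and consistency of $\hat\theta$.
\item For each fold $k$, the two prediction terms are $\lambda\frac Kn\sum_{i\in I_k}\nabla g_\theta(X_i,\hat f_t^{(-k)}(X_i))$ and $\lambda\frac1N\sum_j\nabla g_\theta(\widetilde X_j,\hat f_t^{(-k)}(\widetilde X_j))$. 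We replace $\hat f_t^{(-k)}$ by $f_t$ in both. The replacement error is controlled by the Jacobian consistency condition: condition on $\mathcal D_{-k}$, note the fold and unlabeled observations are independent of $\hat f_t^{(-k)}$, and apply conditional Markov.
\item After the replacement, the two prediction terms converge to the same limit $\E_t\nabla g_{\theta_t^*}(X,f_t(X))$, so they cancel.
\end{itemize}
Hence $\widehat A\to_P A_t$, and $\widehat A$ is invertible with probability tending to one by Assumption~\ref{ass:M2}.

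\textbf{Step 3 (score decomposition).} Write $\sqrt n\,\Psi_{\rm CF}(\theta_t^*)=L_n+\sqrt n R_n$. Here $L_n$ is the oracle expression obtained by replacing every $\hat f_t^{(-k)}$ with $f_t$, and
\[
R_n=\frac1K\sum_k\lambda\Big[\frac1N\sum_j h_k(\widetilde X_j)-\frac Kn\sum_{i\in I_k}h_k(X_i)\Big],\qquad h_k(x)=g_{\theta_t^*}(x,\hat f_t^{(-k)}(x))-g_{\theta_t^*}(x,f_t(x)).
\]
Fix $k$ and condition on $(\mathcal D_s,\mathcal D_{-k})$. Then $h_k$ is a fixed function, and the held-out fold and the unlabeled sample are i.i.d.\ draws from $P_{t,X}$. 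The bracket therefore has conditional mean zero and conditional second moment at most $(1/N+K/n)\E_t\|h_k\|^2$. This is the same computation as the bound on $Z_{n,k}$ in the proof of Theorem~\ref{thm:tc-cross-nonasymptotic}, carried out coordinatewise.

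By the score-consistency hypothesis, $\E_t\|h_k\|^2=o_p(1)$. Conditional Chebyshev then gives $\sqrt n\times(\text{bracket})=o_p(1)$ for each $k$. Since $K$ is fixed, we can sum over folds without needing independence between them, so $\sqrt nR_n=o_p(1)$.

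\textbf{Step 4 (oracle term and conclusion).} The oracle term $L_n$ is a sum of two independent i.i.d.\ averages, one over the gold sample and one over the unlabeled sample. The Lindeberg CLT (using the $2+\eta$ moments) and $n/N\to\rho$ give $L_n\overset{d}{\to}N(0,V_{\lambda,t}^{\rm cross})$. The unlabeled average in $L_n$ carries the factor $\lambda\sqrt n/N\cdot\sum_j$ and is centered because $\E_t g_{\theta_t^*}(X,f_t(X))$ appears with opposite signs in the labeled and unlabeled pieces. Combining this with $\widehat A^{-1}\to_P A_t^{-1}$ and Slutsky's theorem yields both the asymptotic linear representation and the stated normal limit.

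\textbf{Main obstacle.} The step I expect to be hardest is Step 2. Uniform convergence of the nuisance-dependent Jacobian at a random intermediate $\theta$ needs the stated Jacobian consistency to hold uniformly over $\mathcal N$, which is a sup inside the expectation. It also needs continuity of $\theta\mapsto\E_t\nabla g_\theta(X,f_t(X))$. The plan is to combine the sup-envelope with dominated convergence. As a fallback, if the intermediate points differ across coordinates, we work with each row of $\widehat A$ separately.
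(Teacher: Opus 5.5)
Your proposal is correct and follows essentially the same route as the paper's proof. It uses a mean-value expansion with foldwise Jacobian consistency, replaces each $\hat f_t^{(-k)}$ by $f_t$ through the conditionally mean-zero held-out-versus-unlabeled contrast of $h_k$ plus conditional Chebyshev (summed over the fixed $K$ folds without cross-fold independence), and finishes with a CLT for the two independent oracle averages. You are more explicit than the paper about the Jacobian at the random intermediate point and about why the oracle score is centered. Like the paper, you do not prove the data-dependent $\widehat\lambda$ clause. It follows readily because $\Psi_{\rm CF}$ is affine in $\lambda$, and its $\lambda$-multiplied part is $O_p(n^{-1/2})$ at $\theta_t^*$ with Jacobian tending to zero.
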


\subsection{Proofs of results from this section}

\begin{proof}[Proof of Theorem \ref{thm:glm-linear}]
A mean-value expansion of the estimating equation around $\beta_t^*$ gives
\[
\sqrt{n_2}(\hat\beta_{\mathrm{TC},\lambda}-\beta_t^*)
=-A_t^{-1}\sqrt{n_2}\Psi_{n_2,N}(\beta_t^*;\hat f_t)+o_p(1),
\]
by standard fixed-dimensional $Z$-estimation arguments and the Jacobian assumptions. It remains to replace $\hat f_t$ by $f_t$. Put
\[
h_{\hat f}(X)=X\{\hat f_t(X)-f_t(X)\}.
\]
The difference between the two scores is
\[
\lambda\sqrt{n_2}\left\{
\frac1{n_2}\sum_{i\in\mathcal I_2}h_{\hat f}(X_i)
-
\frac1N\sum_{j=1}^Nh_{\hat f}(\widetilde X_j)
\right\}.
\]
Conditional on the calibration data, the two empirical averages have the same population mean, so the mean cancels exactly. Their conditional variances are bounded by the weighted $L_2$ calibration error times $1/n_2$ and $1/N$, respectively. Since $n_2/N=O(1)$, Chebyshev's inequality makes the displayed term $o_p(1)$. Substituting $f_t$ therefore yields the claimed expansion.

Apply the multivariate Lindeberg--Feller central limit theorem to the two independent empirical averages in Theorem~\ref{thm:glm-linear}. The labeled contribution has covariance
$\Var_t\{g_{\beta_t^*}(X,Y)-\lambda g_{\beta_t^*}(X,f_t(X))\}$, while the unlabeled contribution has covariance $\lambda^2(n_2/N)\Var_t\{g_{\beta_t^*}(X,f_t(X))\}$. Slutsky's theorem and $n_2/N\to\rho$ give the result.
\end{proof}

\begin{proof}[Proof of Proposition \ref{prop:M-unbiased}]
Condition on $\mathcal D_s$ and ${\cal D}_{t,1}^{\rm gold}$, so that $\hat f_t$ is fixed.
Because the rectification split $\{(X_i,Y_i)\}_{i\in \mathcal I_2}$ and the unlabeled sample
$\{\tilde X_j\}_{j=1}^N$ are independent of the conditioning sigma-field and are sampled
from the target-domain law,
\[
\E\!\left[\frac1{n_2}\sum_{i\in\mathcal I_2} g_\theta(X_i,Y_i)
\middle| \mathcal D_s,{\cal D}_{t,1}^{\rm gold}\right]
=
\E_t[g_\theta(X,Y)],
\]
\[
\E\!\left[\frac1{n_2}\sum_{i\in\mathcal I_2} g_\theta(X_i,\hat f_t(X_i))
\middle| \mathcal D_s,{\cal D}_{t,1}^{\rm gold}\right]
=
\E_t[g_\theta(X,\hat f_t(X))],
\]
and
\[
\E\!\left[\frac1N\sum_{j=1}^N g_\theta(\tilde X_j,\hat f_t(\tilde X_j))
\middle| \mathcal D_s,{\cal D}_{t,1}^{\rm gold}\right]
=
\E_t[g_\theta(X,\hat f_t(X))].
\]
Substituting into \eqref{eq:M-Psi} proves the claim.
\end{proof}

\begin{proof}[Proof of Theorem \ref{thm:M-consistency}]
By Proposition~\ref{prop:M-unbiased} and the uniform law of large numbers implied by
Assumption~\ref{ass:M3},
\[
\sup_{\theta\in \mathcal N}
\left\|
\Psi_{n_2,N}(\theta;\hat f_t) - \E_t[g_\theta(X,Y)]
\right\|
=o_p(1).
\]
By Assumption~\ref{ass:M1}, the map $\theta \mapsto \E_t[g_\theta(X,Y)]$ has a unique zero
at $\theta_t^*$ in $\mathcal N$. Standard $Z$-estimation consistency then gives
\[
\hat\theta_{\mathrm{TC},\lambda}\xrightarrow{P}\theta_t^*.
\]
\end{proof}

\begin{proof}[Proof of Theorem \ref{thm:M-linear}]
Since $\hat\theta_{\mathrm{TC},\lambda}$ solves \eqref{eq:M-estimator},
\[
0
=
\Psi_{n_2,N}(\hat\theta_{\mathrm{TC},\lambda};\hat f_t).
\]
Expand around $\theta_t^*$:
\[
0
=
\Psi_{n_2,N}(\theta_t^*;\hat f_t)
+
\dot\Psi_{n_2,N}(\bar\theta;\hat f_t)
(\hat\theta_{\mathrm{TC},\lambda}-\theta_t^*)
\]
for some random $\bar\theta$ between $\hat\theta_{\mathrm{TC},\lambda}$ and $\theta_t^*$.
By Theorem~\ref{thm:M-consistency}, $\bar\theta \to \theta_t^*$ in probability. By
Assumptions~\ref{ass:M3}--\ref{ass:M4} and a uniform law of large numbers,
\[
\dot\Psi_{n_2,N}(\bar\theta;\hat f_t)
=
A_t + o_p(1).
\]
Hence,
\[
\sqrt{n_2}(\hat\theta_{\mathrm{TC},\lambda}-\theta_t^*)
=
-A_t^{-1}\sqrt{n_2}\,\Psi_{n_2,N}(\theta_t^*;\hat f_t)+o_p(1).
\]
It remains to replace $\hat f_t$ by $f_t$ in the score at $\theta_t^*$. Let
\[
 h_{\hat f}(X)=g_{\theta_t^*}(X,\hat f_t(X))-g_{\theta_t^*}(X,f_t(X)).
\]
The replacement error equals
\[
 \lambda\sqrt{n_2}\left\{
 \frac1N\sum_{j=1}^N h_{\hat f}(\widetilde X_j)
 -\frac1{n_2}\sum_{i\in\mathcal I_2}h_{\hat f}(X_i)
 \right\}.
\]
Conditional on the calibration data, the two empirical averages have the same population mean, so that mean cancels exactly. Assumption~\ref{ass:M4} implies that the conditional variance of the displayed difference is $o_p(1)$ after multiplication by $n_2$, because $n_2/N=O(1)$. Hence Chebyshev's inequality gives
\[
\sqrt{n_2}\,
\Big(
\Psi_{n_2,N}(\theta_t^*;\hat f_t)-\Psi_{n_2,N}(\theta_t^*;f_t)
\Big)
=o_p(1).
\]
Substituting this into the previous display proves \eqref{eq:M-linear}.

The bracketed term in \eqref{eq:M-linear} is a sum of two independent empirical averages,
one based on the rectification split and one based on the unlabeled sample. By
Assumption~\ref{ass:M3}, the Lindeberg--Feller central limit theorem applies, yielding
\[
\frac1{\sqrt{n_2}}\sum_{i\in \mathcal I_2}
\bigl\{
g_{\theta_t^*}(X_i,Y_i)-\lambda g_{\theta_t^*}(X_i,f_t(X_i))
\bigr\}
+
\frac{\lambda\sqrt{n_2}}{N}\sum_{j=1}^N
g_{\theta_t^*}(\tilde X_j,f_t(\tilde X_j))
\;\xrightarrow{d}\;
N(0,V_{\lambda,t}).
\]
Combine this with Theorem~\ref{thm:M-linear} and Slutsky's theorem.
\end{proof}

\begin{proof}[Proof of Theorem \ref{thm:M-cross}]
Expand the cross-fitted estimating equation around $\theta_t^*$. The Jacobian converges to $A_t$ by the assumed foldwise Jacobian consistency and fixed $K$. For the nuisance replacement, define
\[
h_k(X)=g_{\theta_t^*}(X,\hat f_t^{(-k)}(X))-g_{\theta_t^*}(X,f_t(X)).
\]
For each fold, conditional on its training complement, the held-out average of $h_k$ and the unlabeled average of $h_k$ have the same population mean; hence that mean cancels. The foldwise $L_2$ condition and Chebyshev's inequality imply that the sum of the resulting centered differences is $o_p(n^{-1/2})$ because $K$ is fixed and $n/N=O(1)$. Replacing every fold predictor by $f_t$ therefore yields the displayed full-sample asymptotic linear representation. The multivariate CLT for the labeled and unlabeled oracle empirical averages gives the stated covariance.
\end{proof}

\section{Proofs of main results}\label{app:proofs}

\subsection{Split-TC-PPI proofs}
\begin{proof}[Proof of Proposition \ref{prop:unbiased}]
    Condition on $\mathcal D_s$ and the calibration split ${\cal D}_{t,1}^{\rm gold}$, so that $\widehat f_t$ is fixed. Because the unlabeled sample $\tilde{X}$ and the rectification split $\{X_i,Y_i\}_{i\in{\cal I}_2}$ are independent of the conditioning sigma-field $\sigma(\mathcal{D}_s,{\cal D}_{t,1}^{\rm gold})$, and each has the target-domain covariate or joint law,
    \[
    \E\left[\frac{1}{N}\sum_{j=1}^N \widehat f_t(\widetilde X_j)\middle\vert \mathcal D_s,{\cal D}_{t,1}^{\rm gold}\right]=\E_t[\widehat f_t(X)]
    \]
    and
    \[
    \E\left[\frac{1}{n_2}\sum_{i\in\mathcal I_2}\{Y_i-\widehat f_t(X_i)\}\middle\vert \mathcal D_s,{\cal D}_{t,1}^{\rm gold}\right]=\E_t[Y-\widehat f_t(X)].
    \]
    Summing gives $
\E\bigl[\widehat\theta_{\mathrm{TC}}\mid \mathcal D_s,{\cal D}_{t,1}^{\rm gold}\bigr] = \theta_t^*$. The law of iterated expectations give the unconditional expectation.

Now, conditional on $\mathcal D_s$ and the calibration split ${\cal D}_{t,1}^{\rm gold}$, the two empirical means which are being added in \eqref{eq:tcppi} are each computed from independent samples. Therefore they have covariance 0 and consequently, the variance of the TC-PPI estimator is the sum of their variances.

Now let,
\[
e(X)\coloneqq \widehat\Delta(X)-\Delta^*(X),
\]
denote the estimation error of the discrepancy.
Combining \eqref{eq:general-delta}, \eqref{eq:fhat}, and definition of $\varepsilon$, we get
\[
Y-\widehat f_t(X)=\varepsilon-e(X).
\]
Conditional on $\mathcal D_s$ and ${\cal D}_{t,1}^{\rm gold}$, the function $e$ is fixed and measurable in $X$. Since $\E_t[\varepsilon\mid X]=0$,
\[
\Cov_t\{\varepsilon,e(X)\mid\mathcal D_s,{\cal D}_{t,1}^{\rm gold}\}
=
\E_t\!\left[e(X)\E_t(\varepsilon\mid X)\mid\mathcal D_s,{\cal D}_{t,1}^{\rm gold}\right]
=0.
\]
Therefore
\[
\Var_t\{Y-\widehat f_t(X)\mid\mathcal D_s,{\cal D}_{t,1}^{\rm gold}\}
=
\Var_t(\varepsilon)+\Var_t\{e(X)\mid\mathcal D_s,{\cal D}_{t,1}^{\rm gold}\},
\]
which is the expression for the conditional variance.
\end{proof}

\begin{proof}[Proof of Theorem \ref{thm:clt}]
We again condition on $\mathcal D_s$ and the calibration split ${\cal D}_{t,1}^{\rm gold}$. Then the quantities
\[
A_j=\widehat f_t(\widetilde X_j)-\E_t\{\widehat f_t(X)\},
\qquad j=1,\dots,N,
\]
and
\[
B_i=Y_i-\widehat f_t(X_i)-\E_t\{Y-\widehat f_t(X)\},
\qquad i\in\mathcal I_2,
\]
are independent, centered, and conditionally i.i.d. within each group. The stated conditional $(2+\eta)$ moment condition gives the Lindeberg condition, and therefore the triangular-array CLT yields the asymptotic normality \eqref{eq:condclt} with conditional variance $V_{n_1}$ as follows,
\begin{equation}\label{eq:Vn}
V_{n_1}
=
\frac{n_2}{N}\Var_t\bigl(\widehat f_t(X)\mid\mathcal D_s,{\cal D}_{t,1}^{\rm gold}\bigr)
+
\Var_t\bigl(Y-\widehat f_t(X)\mid\mathcal D_s,{\cal D}_{t,1}^{\rm gold}\bigr).
\end{equation}

By Proposition~\ref{prop:unbiased},
\[
\Var_t\{Y-\widehat f_t(X)\mid\mathcal D_s,{\cal D}_{t,1}^{\rm gold}\}
=
\Var_t(\varepsilon)+\Var_t\{e(X)\mid\mathcal D_s,{\cal D}_{t,1}^{\rm gold}\}
=
\Var_t(\varepsilon)+O_p(b_{n_1}),
\]
since $\Var_t\{e(X)\}\le\E_t[e(X)^2]=b_{n_1}$. Also since $\widehat f_t=f_t+e$, we have,
\begin{align*}
\Var_t\{\widehat f_t(X)\mid\mathcal D_s,{\cal D}_{t,1}^{\rm gold}\}
={}&\Var_t\{f_t(X)\}
+2\Cov_t\{f_t(X),e(X)\mid\mathcal D_s,{\cal D}_{t,1}^{\rm gold}\}\\
&+\Var_t\{e(X)\mid\mathcal D_s,{\cal D}_{t,1}^{\rm gold}\}.
\end{align*}
By Cauchy--Schwarz inequality, the covariance term is $O_p(\sqrt{b_{n_1}})$, while the last term is $O_p(b_{n_1})$. This proves \eqref{eq:Vexpand}. Since $b_{n_1}=o_p(1)$ and $n_2/N\to\rho_1$, we have $V_{n_1}$ converges in probability to $\rho_1\Var_t\{f_t(X)\}+\Var_t(\varepsilon)$.
\end{proof}

\subsection{Lasso results}

\begin{assumption}[Target design and noise]\label{ass:design}
The target covariate $X\in\R^p$ is mean-zero and sub-Gaussian with covariance matrix $\Sigma_t$. There exist constants $0<\underline\kappa\le \overline\kappa<\infty$ such that
\[
\underline\kappa \le \lambda_{\min}(\Sigma_t) \le \lambda_{\max}(\Sigma_t) \le \overline\kappa.
\]
Further, the conditional distribution of noise $\varepsilon|X$, in $\varepsilon=Y-f_t(X)$, is sub-Gaussian with variance proxy $\sigma_\varepsilon^2$.
\end{assumption}

For $S\coloneqq \mathrm{supp}(\delta^*)$, let $s=|S|$ and let
\[
\hat\Sigma_1 \coloneqq  \frac1{n_1}\sum_{i\in\mathcal I_1} X_iX_i^\top,
\quad\text{so that}\quad
\mathbb E_t[\hat\Sigma_1]=\Sigma_t.
\]

We will use the following standard compatibility condition on the sample Gram matrix.

\begin{assumption}[Sparsity]
\label{ass:sparsity}
The vector $\delta^*$ is $s$-sparse. That is, $\|\delta^*\|_0\le s$ for some $s=s(n_1)>0$.
\end{assumption}

\begin{assumption}[Compatibility]
\label{ass:compat}
Let $C_{n_1}(\phi)\coloneqq \left\{\underset{u\ne 0;\ \|u_{S^c}\|_1\le 3\|u_S\|_1}{\min}
\frac{\sqrt{s}\,\|u\|_{\hat\Sigma_1}}{\|u_S\|_1}
\ge \phi\right\}$ where $\|u\|_{\hat\Sigma_1}^2 \coloneqq  u^\top \hat\Sigma_1 u$. There exists a constant $\phi_0>0$ such that
\[
{\mathbb P}\Bigl[C_{n_1}(\phi_0)\Bigr]\overset{n_1\to\infty}{\to}1.
\]
\end{assumption}

\begin{assumption}[Score control]
\label{ass:score}
For some deterministic sequence $\lambda=\lambda_{n_1,p}>0$,
\[
\mathbb P\!\left(
\left\|
\frac1{n_1}\sum_{i\in\mathcal I_1} X_i\varepsilon_i
\right\|_\infty
\le \frac{\lambda}{2}
\ \middle|\ \mathcal D_s
\right)\to 1.
\]
\end{assumption}

\begin{remark}
Assumption~\ref{ass:score} is a high-level condition. It can be verified under a variety of
primitive assumptions. For example, it holds with
$\lambda \asymp \sqrt{(\log p)/n_1}$ if the coordinates of $X$ are sub-Gaussian and
$X_j\varepsilon$ are uniformly sub-exponential. We do not require marginal sub-Gaussianity of
$\varepsilon$, which would be too restrictive when $f_s$ is an arbitrary black-box
predictor.
\end{remark}

We first state the deterministic oracle inequality.

\begin{proposition}
[Deterministic lasso bound]
\label{lem:det-lasso}
Suppose Assumption~\ref{ass:sparsity} holds. Then, on the events $C_{n_1}(\phi_0)$ and
\[
\mathcal E_\lambda \coloneqq 
\left\{
\left\|
\frac1{n_1}\sum_{i\in\mathcal I_1} X_i\varepsilon_i
\right\|_\infty
\le \frac{\lambda}{2}
\right\},
\]
any solution $\hat\delta$ of \eqref{eq:lasso-rig} satisfies
\begin{align}
\|\hat\delta-\delta^*\|_1
&\le
\frac{12s\lambda}{\phi_0^2},
\label{eq:l1-bound}
\\
\|\hat\delta-\delta^*\|_{\hat\Sigma_1}^2
&\le
\frac{9s\lambda^2}{\phi_0^2}.
\label{eq:pred-bound}
\end{align}  
\end{proposition}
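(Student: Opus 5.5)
We prove Proposition~\ref{lem:det-lasso} by the standard basic-inequality argument for the lasso. Throughout we work on the intersection $C_{n_1}(\phi_0)\cap\mathcal E_\lambda$ and write $u\coloneqq\hat\delta-\delta^*$.

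\emph{Step 1: basic inequality.} On the calibration split the pseudo-response satisfies $U_i=Y_i-f_s(X_i)=X_i^\top\delta^*+\varepsilon_i$, because $f_t=f_s+x^\top\delta^*$. Comparing the lasso objective at $\hat\delta$ with its value at $\delta^*$ and expanding the square gives
\[
\tfrac12\|u\|_{\hat\Sigma_1}^2\le \Bigl(\tfrac1{n_1}\textstyle\sum_{i\in\mathcal I_1}X_i\varepsilon_i\Bigr)^\top u+\lambda\|\delta^*\|_1-\lambda\|\hat\delta\|_1 .
\]
On $\mathcal E_\lambda$, Hölder's inequality bounds the first term on the right by $\tfrac\lambda2\|u\|_1$.

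\emph{Step 2: cone condition.} Split coordinates into $S$ and $S^c$. Since $\delta^*_{S^c}=0$, the triangle inequality gives $\|\delta^*\|_1-\|\hat\delta\|_1\le\|u_S\|_1-\|u_{S^c}\|_1$. Substituting into Step 1 and multiplying by 2 yields
\[
\|u\|_{\hat\Sigma_1}^2+\lambda\|u_{S^c}\|_1\le 3\lambda\|u_S\|_1 .
\]
The left side is nonnegative, so $\|u_{S^c}\|_1\le3\|u_S\|_1$. Hence $u$ lies in the cone where the compatibility condition $C_{n_1}(\phi_0)$ applies. If $u_S=0$, the cone condition forces $u=0$ and there is nothing to prove.

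\emph{Step 3: convert to rates.} Compatibility gives $\|u_S\|_1\le\sqrt s\,\|u\|_{\hat\Sigma_1}/\phi_0$, so $\|u\|_{\hat\Sigma_1}^2\le 3\lambda\sqrt s\,\|u\|_{\hat\Sigma_1}/\phi_0$. This yields $\|u\|_{\hat\Sigma_1}^2\le 9s\lambda^2/\phi_0^2$, which is \eqref{eq:pred-bound}. For the $\ell_1$ bound, the cone condition gives $\|u\|_1\le4\|u_S\|_1\le4\sqrt s\,\|u\|_{\hat\Sigma_1}/\phi_0\le 12s\lambda/\phi_0^2$, which is \eqref{eq:l1-bound}.

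The only point that needs care is Step 2. We must check that the noise term uses the target noise $\varepsilon$ exactly: the black-box $f_s$ enters only through the pseudo-response, and the model $f_t=f_s+x^\top\delta^*$ cancels it. We must also keep the factor-of-two bookkeeping straight so that the cone constant is 3, matching the definition of $C_{n_1}(\phi)$. Everything else is deterministic algebra, and the argument holds for any minimizer $\hat\delta$, since it uses only that $\hat\delta$ attains an objective value no larger than that at $\delta^*$.
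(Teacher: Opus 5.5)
Your proposal is correct and follows essentially the same argument as the paper. Both use the basic inequality from comparing the objective at $\hat\delta$ and $\delta^*$, the $\ell_\infty$--$\ell_1$ bound on the noise term on $\mathcal E_\lambda$, the cone condition $\|u_{S^c}\|_1\le 3\|u_S\|_1$, and then compatibility to obtain $\|u\|_{\hat\Sigma_1}\le 3\lambda\sqrt s/\phi_0$ and $\|u\|_1\le 4\|u_S\|_1$, with identical constants. Your explicit treatment of the degenerate case $u_S=0$ and of how $f_s$ cancels in the pseudo-response are small clarifications that the paper leaves implicit.
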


The next proposition gives the calibration rate used later in the TC-PPI mean analysis.

\begin{proposition}
    [Calibration lasso rate]
\label{thm:lasso-calibration-rig}
Suppose Assumptions~\ref{ass:sparsity}--\ref{ass:score} in the Appendix hold. In addition, assume that on the lasso cone $\mathcal C(S,3)=\{u:\|u_{S^c}\|_1\le3\|u_S\|_1\}$ there is a constant $C_{\rm pop}>0$ such that, with probability tending to one,
\begin{equation}\label{eq:sample-pop-transfer}
 u^\top\Sigma_tu\le C_{\rm pop}\,u^\top\widehat\Sigma_1u
 \qquad\text{for all }u\in\mathcal C(S,3).
\end{equation}
Then any lasso solution $\hat\delta$ of \eqref{eq:lasso-rig} satisfies
\begin{align}
\|\hat\delta-\delta^*\|_1
&=O_{\mathbb P}(s\lambda),\\
(\hat\delta-\delta^*)^\top\Sigma_t(\hat\delta-\delta^*)
&=O_{\mathbb P}(s\lambda^2).
\end{align}
In particular, if $\lambda\asymp\sqrt{(\log p)/n_1}$, then
\[
\|\hat\delta-\delta^*\|_1
=O_{\mathbb P}\!\left(s\sqrt{\frac{\log p}{n_1}}\right),
\quad
(\hat\delta-\delta^*)^\top\Sigma_t(\hat\delta-\delta^*)
=O_{\mathbb P}\!\left(\frac{s\log p}{n_1}\right).
\]
\end{proposition}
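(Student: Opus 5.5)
The plan is to transfer the deterministic oracle inequality of Proposition~\ref{lem:det-lasso}, which controls the error in the sample norm $\|\cdot\|_{\widehat\Sigma_1}$, to the population norm $\Sigma_t$ via the cone condition \eqref{eq:sample-pop-transfer}, and then convert "holds on events of probability tending to one" into $O_{\mathbb P}$ statements.

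\textbf{Step 1: the good event.} Define the event
\[
\mathcal G_{n_1}\coloneqq C_{n_1}(\phi_0)\cap\mathcal E_\lambda\cap\mathcal T_{n_1},
\]
where $\mathcal T_{n_1}$ is the event on which \eqref{eq:sample-pop-transfer} holds. Assumption~\ref{ass:compat} gives $\mathbb P\{C_{n_1}(\phi_0)\}\to1$, and Assumption~\ref{ass:score} gives $\mathbb P(\mathcal E_\lambda\mid\mathcal D_s)\to1$. The $\mathcal D_s$-conditional version of Assumption~\ref{ass:score} yields an unconditional statement by dominated convergence. Together with the hypothesis on $\mathcal T_{n_1}$, a union bound gives $\mathbb P(\mathcal G_{n_1})\to1$.

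\textbf{Step 2: the error lies in the cone.} I will confirm that on $C_{n_1}(\phi_0)\cap\mathcal E_\lambda$ the error $u=\hat\delta-\delta^*$ lies in $\mathcal C(S,3)$. This is the standard basic-inequality argument. The pseudo-response satisfies $Y_i-f_s(X_i)=X_i^\top\delta^*+\varepsilon_i$ exactly, since $\Delta^*(x)=x^\top\delta^*$. The basic inequality is therefore
\[
\tfrac12\|u\|_{\widehat\Sigma_1}^2\le \Bigl\|\tfrac1{n_1}\textstyle\sum_i X_i\varepsilon_i\Bigr\|_\infty\|u\|_1+\lambda\bigl(\|\delta^*\|_1-\|\hat\delta\|_1\bigr).
\]
On $\mathcal E_\lambda$, decomposability of the $\ell_1$ norm over $S$ and $S^c$ gives $\|u_{S^c}\|_1\le 3\|u_S\|_1$. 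This is the step inside the proof of Proposition~\ref{lem:det-lasso}, so I will quote it rather than redo it.

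\textbf{Step 3: the two bounds.} On $\mathcal G_{n_1}$, Proposition~\ref{lem:det-lasso} gives $\|u\|_1\le 12s\lambda/\phi_0^2$ and $\|u\|_{\widehat\Sigma_1}^2\le 9s\lambda^2/\phi_0^2$. Because $u$ is in the cone, \eqref{eq:sample-pop-transfer} then gives
\[
u^\top\Sigma_t u\le 9C_{\rm pop}\,s\lambda^2/\phi_0^2 .
\]
Since these deterministic bounds hold on an event of probability tending to one, for any $\epsilon>0$ the constant $M=\max\{12/\phi_0^2,\,9C_{\rm pop}/\phi_0^2\}$ gives
\[
\mathbb P\bigl(\|u\|_1>Ms\lambda\bigr)\le\mathbb P(\mathcal G_{n_1}^c)<\epsilon
\]
for large $n_1$, and likewise for the quadratic form. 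This is exactly the $O_{\mathbb P}$ claim. Substituting $\lambda\asymp\sqrt{(\log p)/n_1}$ gives the displayed rates.

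\textbf{Main obstacle.} The only delicate point is Step 2. The cone condition in \eqref{eq:sample-pop-transfer} is stated only for $u\in\mathcal C(S,3)$, so I must make sure the error actually lies in that cone. This requires the factor $\lambda/2$ on the score event, which Assumption~\ref{ass:score} supplies. A secondary point is that Assumption~\ref{ass:score} is stated conditionally on $\mathcal D_s$, and I will pass to the unconditional probability as described in Step 1. Everything else is bookkeeping.
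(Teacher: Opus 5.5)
Your proposal is correct and follows the same route as the paper. On the event where the score bound, compatibility, and the cone comparison \eqref{eq:sample-pop-transfer} all hold, you invoke Proposition~\ref{lem:det-lasso} for cone membership and the $\ell_1$ and $\widehat\Sigma_1$-norm bounds, then pass the prediction bound to $\Sigma_t$ via $C_{\rm pop}$. You are more explicit than the paper's short proof in two places it leaves implicit: intersecting all three high-probability events, and passing from the $\mathcal D_s$-conditional score statement to an unconditional one.
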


\begin{lemma}[Lasso calibration risk in expectation]\label{lem:exp-lasso}
Suppose Assumptions~\ref{ass:design} and \ref{ass:sparsity} hold and $\|\delta^*\|_2\le B$. For fold $k$, let $\widehat\delta^{(-k)}$ be any solution of the lasso on $\mathcal D_{-k}$,
\[
\widehat\delta^{(-k)}\in\argmin_{\delta\in\R^p}\Bigl\{\frac1{2n_{\rm tr}}\sum_{i\notin I_k}\bigl(Y_i-f_s(X_i)-X_i^\top\delta\bigr)^2+\lambda\|\delta\|_1\Bigr\},
\qquad
\lambda=A\sigma_\varepsilon\sqrt{\frac{\log p}{n_{\rm tr}}} .
\]
Assume $p\ge \max\{3,n_{\rm tr}^{\upsilon}\}$ for some $\upsilon>0$. Then there exist constants $A_0,c_0,C>0$, depending only on $\underline\kappa,\overline\kappa,\upsilon$ and $B/\sigma_\varepsilon$, such that if $A\ge A_0$ and $s\log p/n_{\rm tr}\le c_0$ (with $c_0$ allowed to depend on $A$), then for every $k$,
\[
\E\!\left[(\widehat\delta^{(-k)}-\delta^*)^\top\Sigma_t(\widehat\delta^{(-k)}-\delta^*)\,\middle|\,\mathcal D_s\right]
\le C\,\frac{\sigma_\varepsilon^2\, s\log p}{n_{\rm tr}} .
\]
\end{lemma}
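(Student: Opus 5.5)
The strategy is to turn the high-probability lasso guarantees of Proposition~\ref{lem:det-lasso} into an expectation bound. On a good event we use the deterministic oracle inequality. On its complement we use a crude bound on $\widehat\delta^{(-k)}-\delta^*$, which is then multiplied by an exponentially small probability. Write $u\coloneqq\widehat\delta^{(-k)}-\delta^*$ and $\varepsilon_i=Y_i-f_s(X_i)-X_i^\top\delta^*$; by Assumption~\ref{ass:design} these are conditionally sub-Gaussian given $X_i$. Since the folds are fixed, every statement can be made for a single fold with $n_{\rm tr}$ i.i.d. observations. The bound is then uniform in $k$ because the constants do not depend on $k$.

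\textbf{Good event.} Let
\[
\mathcal G=\mathcal E_\lambda\cap\mathcal R,
\]
where $\mathcal E_\lambda$ is the score event of Proposition~\ref{lem:det-lasso} and $\mathcal R$ is a restricted-eigenvalue event. Concretely, $\mathcal R$ requires
\[
u^\top\widehat\Sigma u\ge \tfrac12 u^\top\Sigma_t u-\tfrac{\underline\kappa}{4}\|u\|_2^2
\]
together with a two-sided comparison on the cone $\mathcal C(S,3)$. For sub-Gaussian rows this can be arranged whenever $s\log p/n_{\rm tr}\le c_0$.

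\begin{itemize}
\item By the Rudelson--Zhou / Raskutti--Wainwright--Yu restricted-eigenvalue result, $\mathcal R$ gives both the compatibility constant $\phi_0\gtrsim\sqrt{\underline\kappa}$ and the transfer inequality \eqref{eq:sample-pop-transfer}.
\item $\mathbb P(\mathcal R^c)\le c\,e^{-c' n_{\rm tr}}$.
\item Each coordinate $n_{\rm tr}^{-1}\sum_i X_{ij}\varepsilon_i$ is an average of sub-exponential variables with scale $\lesssim\sqrt{\overline\kappa}\,\sigma_\varepsilon$. Bernstein's inequality plus a union bound then give $\mathbb P(\mathcal E_\lambda^c)\le 2p^{1-cA^2}$ once $\lambda=A\sigma_\varepsilon\sqrt{\log p/n_{\rm tr}}\le$ const, which the condition $s\log p/n_{\rm tr}\le c_0$ guarantees.
\end{itemize}

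On $\mathcal G$, \eqref{eq:pred-bound} and the transfer inequality give
\[
u^\top\Sigma_t u\le C s\lambda^2=CA^2\sigma_\varepsilon^2\,s\log p/n_{\rm tr},
\]
which is the target rate.

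\textbf{Bad event.} We need a deterministic or moment bound on $u^\top\Sigma_t u$ that holds always. The basic inequality, which compares the lasso objective at $\widehat\delta$ with its value at $0$, gives
\[
\lambda\|\widehat\delta\|_1\le \frac{1}{2n_{\rm tr}}\sum_i\{X_i^\top\delta^*+\varepsilon_i\}^2 .
\]
Hence
\[
u^\top\Sigma_t u\le 2\overline\kappa\bigl(\|\widehat\delta\|_1^2+B^2\bigr)\lesssim \overline\kappa\Bigl(\lambda^{-2}T^2+B^2\Bigr),
\qquad T\coloneqq n_{\rm tr}^{-1}\sum_i(X_i^\top\delta^*+\varepsilon_i)^2 .
\]
Here $\E T^2\lesssim(\overline\kappa B^2+\sigma_\varepsilon^2)^2$ and $\lambda^{-2}\le n_{\rm tr}/(A^2\sigma_\varepsilon^2)$. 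By Cauchy--Schwarz,
\[
\E[u^\top\Sigma_t u\,\mathbf 1_{\mathcal G^c}]\le \bigl(\E[(u^\top\Sigma_tu)^2]\bigr)^{1/2}\,\mathbb P(\mathcal G^c)^{1/2}\lesssim \mathrm{poly}(n_{\rm tr},B/\sigma_\varepsilon)\,\sigma_\varepsilon^2\bigl(p^{(1-cA^2)/2}+e^{-c'n_{\rm tr}/2}\bigr).
\]
Because $p\ge n_{\rm tr}^{\upsilon}$, choosing $A_0$ large (depending on $\upsilon$) makes $p^{(1-cA^2)/2}$ beat any fixed polynomial in $n_{\rm tr}$. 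The whole term is then $\le\sigma_\varepsilon^2/n_{\rm tr}\le\sigma_\varepsilon^2 s\log p/n_{\rm tr}$, since $p\ge3$ and $s\ge1$. The exponential term is handled in the same way.

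\textbf{Main obstacle.} The delicate step is the bad-event control. Everything there must be polynomial in $n_{\rm tr}$ (not in $p$), so that the $p^{-cA^2}$ tail, converted via $p\ge n_{\rm tr}^\upsilon$, absorbs it. This forces the use of the $\ell_1$ basic-inequality bound rather than anything involving $p$ explicitly. It is also why the constants depend on $B/\sigma_\varepsilon$ and $\upsilon$. Finally, we sum the two pieces, note that conditioning on $\mathcal D_s$ only fixes $f_s$ and leaves all the arguments unchanged, and take the maximum over the $K$ folds.
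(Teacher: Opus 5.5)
Your proposal follows the paper's proof essentially step for step. The paper's good event is also a restricted-eigenvalue event on $\mathcal C(S,3)$ intersected with the score event, where the deterministic lasso bound gives $O(A^2\sigma_\varepsilon^2 s\log p/n_{\rm tr})$; on the complement it likewise compares the objective at $\widehat\delta$ and $0$, applies Cauchy--Schwarz, and uses $p\ge n_{\rm tr}^{\upsilon}$ with $A_0$ large so that $p^{(1-cA^2)/2}$ absorbs the polynomial factor. One small correction: since you bound $(\E[(u^\top\Sigma_tu)^2])^{1/2}$, you need $\E T^4\lesssim(\sigma_\varepsilon^2+\overline\kappa B^2)^4$ rather than a bound on $\E T^2$, and the paper gets this from Jensen, $T^4\le n_{\rm tr}^{-1}\sum_i(X_i^\top\delta^*+\varepsilon_i)^8$, together with sub-Gaussianity of $X_i^\top\delta^*+\varepsilon_i$.
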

In particular, this lemma implies that $r_n\le C\sigma_\varepsilon^2 s\log p/n_{\rm tr}$ under the lasso-calibration.

\subsubsection{Proofs for this section}
\begin{proof}[Proof of Propostion \ref{lem:det-lasso}]
Let $\Delta=\hat\delta-\delta^*$. By optimality of $\hat\delta$,
\[
\frac1{2n_1}\|X\Delta-\varepsilon\|_2^2+\lambda\|\hat\delta\|_1
\le
\frac1{2n_1}\|\varepsilon\|_2^2+\lambda\|\delta^*\|_1,
\]
where $X$ is the $n_1\times p$ design matrix on the calibration split and
$\varepsilon=(\varepsilon_i)_{i\in{\cal D}_{t,1}^{\rm gold}}$. Expanding and rearranging gives
\[
\frac1{2n_1}\|X\Delta\|_2^2
\le
\frac1{n_1}\varepsilon^\top X\Delta
+\lambda(\|\delta^*\|_1-\|\hat\delta\|_1).
\]
On $\mathcal E_\lambda$,
\[
\frac1{n_1}\varepsilon^\top X\Delta
\le
\left\|\frac1{n_1}X^\top\varepsilon\right\|_\infty\|\Delta\|_1
\le \frac{\lambda}{2}\|\Delta\|_1.
\]
Since $\delta^*_{S^c}=0$,
\[
\|\delta^*\|_1-\|\hat\delta\|_1
\le
\|\Delta_S\|_1-\|\Delta_{S^c}\|_1.
\]
Hence
\[
\frac1{2}\|\Delta\|_{\hat\Sigma_1}^2
\le
\frac{\lambda}{2}(\|\Delta_S\|_1+\|\Delta_{S^c}\|_1)
+\lambda(\|\Delta_S\|_1-\|\Delta_{S^c}\|_1)
=
\frac{3\lambda}{2}\|\Delta_S\|_1-\frac{\lambda}{2}\|\Delta_{S^c}\|_1.
\]
Therefore
\[
\|\Delta_{S^c}\|_1\le 3\|\Delta_S\|_1,
\]
so $\Delta$ belongs to the compatibility cone. On the compatibility event in Assumption~\ref{ass:compat},
\[
\|\Delta_S\|_1 \le \frac{\sqrt s}{\phi_0}\|\Delta\|_{\hat\Sigma_1}.
\]
Combining with the previous display yields
\[
\frac1{2}\|\Delta\|_{\hat\Sigma_1}^2
\le
\frac{3\lambda\sqrt s}{2\phi_0}\|\Delta\|_{\hat\Sigma_1},
\]
which implies
\[
\|\Delta\|_{\hat\Sigma_1}
\le
\frac{3\lambda\sqrt s}{\phi_0}.
\]
This gives \eqref{eq:pred-bound}. Then
\[
\|\Delta\|_1
\le
\|\Delta_S\|_1+\|\Delta_{S^c}\|_1
\le
4\|\Delta_S\|_1
\le
\frac{4\sqrt s}{\phi_0}\|\Delta\|_{\hat\Sigma_1},
\]
which gives \eqref{eq:l1-bound}.
\end{proof}

\begin{proof}[Proof of Proposition \ref{thm:lasso-calibration-rig}]
By Assumption~\ref{ass:score}, the score event $\mathcal E_\lambda$ holds with probability tending to one. Proposition~\ref{lem:det-lasso} then gives the cone condition and
\[
 \|\hat\delta-\delta^*\|_{\widehat\Sigma_1}^2\le C s\lambda^2,
 \qquad
 \|\hat\delta-\delta^*\|_1\le C s\lambda.
\]
Since $\Delta=\hat\delta-\delta^*$ belongs to $\mathcal C(S,3)$, \eqref{eq:sample-pop-transfer} implies
\[
 \Delta^\top\Sigma_t\Delta
 \le C_{\rm pop}\Delta^\top\widehat\Sigma_1\Delta
 =O_p(s\lambda^2),
\]
hence the claimed rates follow. Under a sub-Gaussian random design, \eqref{eq:sample-pop-transfer} is a standard restricted covariance comparison and holds under the usual scaling $n_1\gtrsim s\log p$ with appropriate constants.
\end{proof}

\begin{proof}[Proof of Corollary \ref{cor:split-lasso-clt}]
Under sparse linear calibration,
\[
\widehat\Delta(X)-\Delta^*(X)
=X^\top(\widehat\delta-\delta^*).
\]
Because $X$ is mean-zero under Assumption~\ref{ass:design},
\[
b_{n_1}
=(\widehat\delta-\delta^*)^\top\Sigma_t(\widehat\delta-\delta^*).
\]
Therefore, Proposition~\ref{thm:lasso-calibration-rig} therefore gives
\[
b_{n_1}=O_p\!\left(\frac{s\log p}{n_1}\right).
\]
Substitution into Theorem~\ref{thm:clt} proves the result.
\end{proof}

\begin{proof}[Proof of Lemma~\ref{lem:exp-lasso}]
Fix $k$ and condition on $\mathcal D_s$ throughout, so that $f_s$ is fixed; the target sample $\mathcal D_{-k}$ is independent of $\mathcal D_s$.
Write $n\equiv n_{\rm tr}$, let $X\in\R^{n\times p}$ be the design on $\mathcal D_{-k}$, $\widehat\Sigma=n^{-1}X^\top X$,
$\tilde y_i=Y_i-f_s(X_i)=X_i^\top\delta^*+\varepsilon_i$, $\widehat\delta\equiv\widehat\delta^{(-k)}$ and $\Delta=\widehat\delta-\delta^*$.
Let $S=\mathrm{supp}(\delta^*)$ and $\mathcal C(S,3)=\{u:\|u_{S^c}\|_1\le3\|u_S\|_1\}$.
Throughout, $c,c',c_0,c_1,c_2,\dots$ and $C_1,C_2,\dots$ denote generic positive constants.

By \citet[Theorem~1]{raskutti2010restricted}, if $s\log p/n\le c_0$,
\[
\mathcal R:=\Bigl\{u^\top\widehat\Sigma u\ \ge\ c\,u^\top\Sigma_tu\ \ \text{for all }u\in\mathcal C(S,3)\Bigr\}
\]
holds with $\mathbb P(\mathcal R^c)\le c_1e^{-c_2n}$.
Also, taking $\lambda=A\sigma_\varepsilon\sqrt{\log p/n}$, we have
\[
\mathbb P(\mathcal S^c)\le 2p^{\,1-c_4A^2},
\qquad
\mathcal S:=\Bigl\{\bigl\|n^{-1}X^\top\varepsilon\bigr\|_\infty\le\lambda/2\Bigr\}.
\]
Let $\mathcal E:=\mathcal R\cap\mathcal S$, so that
\begin{equation}\label{eq:exp-lasso-bad}
\mathbb P(\mathcal E^c)\le c_1e^{-c_2n}+2p^{\,1-c_4A^2}.
\end{equation}
The proof of Proposition~\ref{lem:det-lasso} shows that on $\mathcal S$, $\Delta\in\mathcal C(S,3)$ and
$\tfrac12\|\Delta\|_{\widehat\Sigma}^2\le\tfrac{3\lambda}{2}\|\Delta_S\|_1$.
On $\mathcal R$,
$\|\Delta_S\|_1\le\sqrt s\|\Delta\|_2\le\sqrt{s/\underline\kappa}\,\|\Delta\|_{\Sigma_t}\le\sqrt{s/(\underline\kappa c)}\,\|\Delta\|_{\widehat\Sigma}$,
so $\|\Delta\|_{\widehat\Sigma}\le c'\lambda\sqrt{s/(\underline\kappa c)}$ and
\begin{equation}\label{eq:exp-lasso-good}
\Delta^\top\Sigma_t\Delta\ \le\ \frac{\|\Delta\|_{\widehat\Sigma}^2}{c}
\ \le\ \frac{9A^2}{\underline\kappa\,c^2}\,\frac{\sigma_\varepsilon^2 s\log p}{n}\le c'\frac{A^2s\sigma_\varepsilon^2\log p}{n}
\qquad\text{on }\mathcal E.
\end{equation}
Since \eqref{eq:exp-lasso-good} holds pointwise on $\mathcal E$ and its right-hand side is deterministic given $\mathcal D_s$,
\begin{equation}\label{eq:exp-lasso-goodexp}
\E\bigl[\Delta^\top\Sigma_t\Delta\,\mathbf 1_{\mathcal E}\mid\mathcal D_s\bigr]
\le c'\frac{A^2s\sigma_\varepsilon^2\log p}{n}.
\end{equation}

Next, comparing the lasso objective at $\widehat\delta$ and at $0$,
\[
\lambda\|\widehat\delta\|_1\le\frac1{2n}\|\tilde y\|_2^2=:\frac{T}{2},
\qquad\text{so}\qquad \|\widehat\delta\|_1\le\frac{T}{2\lambda}.
\]
Since $\|\delta^*\|_1\le\sqrt s\,\|\delta^*\|_2\le\sqrt s\,B$,
\[
\Delta^\top\Sigma_t\Delta\le\overline\kappa\|\Delta\|_2^2\le\overline\kappa\|\Delta\|_1^2
\le 2\overline\kappa\Bigl(\frac{T^2}{4\lambda^2}+sB^2\Bigr).
\]
Since $x\mapsto x^4$ is convex, applying it to the empirical average gives
$T^4=\bigl(n^{-1}\sum_{i}\tilde y_i^2\bigr)^4\le n^{-1}\sum_{i}\tilde y_i^8$.
As the $\tilde y_i$ are i.i.d.\ given $\mathcal D_s$, $\E[T^4\mid\mathcal D_s]\le\E[\tilde y_1^8\mid\mathcal D_s]$.
Since $\tilde y=X^\top\delta^*+\varepsilon$ is sub-Gaussian with $\|\tilde y\|_{\psi_2}^2\le C_2M$, where $M:=\sigma_\varepsilon^2+\overline\kappa B^2$, we have $\E\tilde y^8\le C_3M^4$. Therefore
\[
\bigl(\E[(\Delta^\top\Sigma_t\Delta)^2]\bigr)^{1/2}
\le C_4\Bigl(\frac{M^2}{\lambda^2}+sB^2\Bigr)
= C_4\Bigl(\frac{M^2 n}{A^2\sigma_\varepsilon^2\log p}+sB^2\Bigr)
\le \gamma\, n,
\]
where $\gamma:=C_5\{M^2/\sigma_\varepsilon^2+B^2\}$, using $\log p\ge1$ and $s\le n$.
By Cauchy--Schwarz and \eqref{eq:exp-lasso-bad},
\[
\E\bigl[\Delta^\top\Sigma_t\Delta\,\mathbf 1_{\mathcal E^c}\bigr]
\le\gamma n\,\mathbb P(\mathcal E^c)^{1/2}
\le c''\gamma n\Bigl(e^{-c_2n/2}+p^{(1-c_4A^2)/2}\Bigr).
\]
Choose $A_0\ge1$ with $c_4A_0^2\ge1+6/\upsilon$. Then for $A\ge A_0$ and $p\ge n^\upsilon$,
$p^{(1-c_4A^2)/2}\le n^{-\upsilon(c_4A^2-1)/2}\le n^{-3}$. Also $n e^{-c_2n/2}\le C_6n^{-2}$. Hence
\begin{equation}\label{eq:exp-lasso-badbound}
\E\bigl[\Delta^\top\Sigma_t\Delta\,\mathbf 1_{\mathcal E^c}\bigr]\le C_6\gamma\,n^{-2}
\le \frac{C_6\gamma}{\sigma_\varepsilon^2}\,\frac{\sigma_\varepsilon^2 s\log p}{n},
\end{equation}
where the last step uses $s\ge1$ and $\log p\ge1$.

Combining the bounds on $\cal E$ and ${\cal E}^c$ completes the proof.
\end{proof}

\subsection{TC-Cross-PPI proofs}

\begin{proof}[Proof of Proposition~\ref{prop:tc-cross-unbiased}]
For each fold $k$, let
\[
\mathcal D_{-k}
=
\{(X_i,Y_i):i\notin I_k\}.
\]
Condition on $\mathcal D_s$ and $\mathcal D_{-k}$. Then
$\widehat f_t^{(-k)}$ is fixed, while the held-out observations
$\{(X_i,Y_i):i\in I_k\}$ are independent of $\mathcal D_{-k}$ and have
law $P_t$. The unlabeled sample is also independent of the gold sample and
has covariate law $P_{t,X}$. Therefore,
\[
\E\!\left[
\frac1N\sum_{j=1}^N
\widehat f_t^{(-k)}(\widetilde X_j)
\middle|
\mathcal D_s,\mathcal D_{-k}
\right]
=
\E_t\!\left[
\widehat f_t^{(-k)}(X)
\middle|
\mathcal D_s,\mathcal D_{-k}
\right],
\]
whereas, since $|I_k|=n/K$,
\begin{align*}
&\E\!\left[
\frac K n\sum_{i\in I_k}
\{Y_i-\widehat f_t^{(-k)}(X_i)\}
\middle|
\mathcal D_s,\mathcal D_{-k}
\right]\\
&\qquad=
\E_t[Y]
-
\E_t\!\left[
\widehat f_t^{(-k)}(X)
\middle|
\mathcal D_s,\mathcal D_{-k}
\right].
\end{align*}
Hence the conditional expectation of the $k$th bracket in
\eqref{eq:tc-cross-fold-decomp} is $\theta_t^*$. By linearity and iterated
expectation,
\begin{align*}
\E\!\left[
\widehat\mu_{\rm TC\text{-}Cross}
\mid\mathcal D_s
\right]
&=
\frac1K\sum_{k=1}^K
\E\!\left[
\E\!\left[
T_k
\mid\mathcal D_s,\mathcal D_{-k}
\right]
\middle|\mathcal D_s
\right]\\
&=
\frac1K\sum_{k=1}^K\theta_t^*
=
\theta_t^*,
\end{align*}
where $T_k$ denotes the $k$th bracket in
\eqref{eq:tc-cross-fold-decomp}.
\end{proof}

\begin{proof}[Proof of Theorem~\ref{thm:tc-cross-mean}]
Recall $e_k(x)$ is the error of the predictor trained on all folds except for the $k$th fold in approximating the target function
\[
e_k(x)=\widehat f_t^{(-k)}(x)-f_t(x).
\]
Now define for the $k$th fold,
\[
Z_{n,k}
=
\frac1N\sum_{j=1}^N e_k(\widetilde X_j)
-
\frac K n\sum_{i\in I_k}e_k(X_i),
\]
and let
\[
R_n=\frac1K\sum_{k=1}^K Z_{n,k}.
\]
Substituting $Y=f_t(x)+\varepsilon$ into the estimator gives
\[
\widehat\mu_{\rm TC\text{-}Cross}-\theta_t^*
=
\frac1n\sum_{i=1}^n\varepsilon_i
+
\frac1N\sum_{j=1}^N
\{f_t(\widetilde X_j)-\E_t f_t(X)\}
+
R_n.
\]

For each fixed $k$, let us condition on $\mathcal D_s$ and the training complement
$\mathcal D_{-k}$. Then $e_k$ is fixed, the observations in $I_k$ are
independent of $e_k$, and the unlabeled sample $\widetilde X$ is independent of both the
training complement and the held-out fold. Therefore the two empirical
averages defining $Z_{n,k}$ are independent, and it is easy to see that they have the same conditional population mean.
Consequently,
\[
\E[Z_{n,k}\mid\mathcal D_s,\mathcal D_{-k}]=0,
\]
and
\begin{align*}
\E[Z_{n,k}^2\mid\mathcal D_s,\mathcal D_{-k}]
&=
\frac1N
\Var_t\{e_k(X)\mid\mathcal D_s,\mathcal D_{-k}\}+
\frac K n
\Var_t\{e_k(X)\mid\mathcal D_s,\mathcal D_{-k}\}\\
&\le
\left(\frac1N+\frac K n\right)
\E_t[e_k(X)^2\mid\mathcal D_s,\mathcal D_{-k}].
\end{align*}

The quantities $Z_{n,1},\ldots,Z_{n,K}$ need not be independent because
the training complements overlap and the same unlabeled observations are used
for every fold. Since $K$ is fixed,
\[
R_n^2
=
\left(\frac1K\sum_{k=1}^KZ_{n,k}\right)^2
\le
\frac1K\sum_{k=1}^KZ_{n,k}^2.
\]
Hence, if
\[
\max_{1\le k\le K}
\E_t[e_k(X)^2\mid\mathcal D_s,\mathcal D_{-k}]
=o_p(1),
\]
conditional Chebyshev's inequality applied fold by fold, followed by the fixed
$K$ bound above, gives
\[
\sqrt n\,R_n=o_p(1).
\]
If the maximum foldwise prediction error is instead $O_p(r_n)$ for a
deterministic sequence $r_n\to0$, the same argument yields
\[
\sqrt n\,R_n=O_p(\sqrt{r_n}).
\]

Therefore,
\[
\sqrt n(\widehat\mu_{\rm TC\text{-}Cross}-\theta_t^*)
=
\frac1{\sqrt n}\sum_{i=1}^n\varepsilon_i
+
\frac{\sqrt n}{N}\sum_{j=1}^N
\{f_t(\widetilde X_j)-\E_t f_t(X)\}
+
o_p(1).
\]
The two leading empirical averages are based on independent gold and unlabeled
samples. The ordinary central limit theorem therefore gives
\[
\sqrt n(\widehat\mu_{\rm TC\text{-}Cross}-\theta_t^*)
\overset{D}{\longrightarrow}
N\!\left(
0,
\Var_t(\varepsilon)+\rho\Var_t\{f_t(X)\}
\right).
\]
\end{proof}

\begin{proof}[Proof of Corollary \ref{cor:tc-cross-lasso}]
Each training complement has size $(1-1/K)n$. Applying Proposition~\ref{thm:lasso-calibration-rig} with that training-sample size gives, for fixed $K$,
\[
(\widehat\delta^{(-k)}-\delta^*)^\top\Sigma_t(\widehat\delta^{(-k)}-\delta^*)
=O_p\!\left(\frac{s\log p}{n}\right)
\]
for each fold. Because $K$ is fixed, the maximum over folds has the same order. Thus $a_n=o_p(1)$ when $s\log p/n\to0$, and Theorem~\ref{thm:tc-cross-mean} applies.
\end{proof}

\subsection{Proofs for TC-Cross-PPI++ and Joint TC-Cross-PPI++}
\begin{proof}[Proof of Theorem~\ref{thm:tc-cross-ppipp}]
Under the assumed foldwise mean squared error convergence to $f_\dagger$, the same
cross-fitting remainder argument as in
Theorem~\ref{thm:tc-cross-mean} gives
\[
\widehat\mu_{{\rm TC\text{-}Cross},\lambda}-\theta_t^*
=
\frac1n\sum_{i=1}^n
\left[
Y_i-\lambda f_\dagger(X_i)
-\E_t\{Y-\lambda f_\dagger(X)\}
\right]
\]
\[
\qquad
+
\frac{\lambda}{N}\sum_{j=1}^N
\left[
f_\dagger(\widetilde X_j)-\E_t f_\dagger(X)
\right]
+o_p(n^{-1/2}).
\]
The first two terms again are centered empirical averages and are independent. Hence
\[
\sqrt n\bigl(
\widehat\mu_{{\rm TC\text{-}Cross},\lambda}
-\theta_t^*
\bigr)
\overset{D}{\longrightarrow}
N\!\left(0,V_{\rm cross}^\dagger(\lambda)\right),
\]
where
\[
V_{\rm cross}^\dagger(\lambda)
=
\Var_t\{Y-\lambda f_\dagger(X)\}
+
\rho\lambda^2\Var_t\{f_\dagger(X)\}.
\]
Expanding gives
\[
V_{\rm cross}^\dagger(\lambda)
=
\Var_t(Y)
-
2\lambda\Cov_t\{Y,f_\dagger(X)\}
+
(1+\rho)\lambda^2\Var_t\{f_\dagger(X)\}.
\]
Differentiating with respect to $\lambda$ therefore yields
\[
\lambda_\dagger^*
=
\frac{\Cov_t\{Y,f_\dagger(X)\}}
{(1+\rho)\Var_t\{f_\dagger(X)\}}.
\]
Substitution gives the minimized variance in the statement of the theorem.

If $f_\dagger=f_t=\E_t[Y\mid X]$, then
\[
\Cov_t\{Y,f_t(X)\}
=
\Cov_t\{\E_t(Y\mid X),f_t(X)\}
=
\Var_t\{f_t(X)\},
\]
and consequently
\[
\lambda_t^*=\frac1{1+\rho}.
\]
The result with a consistent substitution of $\widehat\lambda$ follows from Slutsky's theorem
and the fixed-dimensional foldwise law of large numbers.
\end{proof}

\begin{proof}[Proof of Proposition~\ref{prop:joint-cross-unbiased}]
For each fold $k$, define
\[
\widehat H^{(-k)}(x)
=
\begin{pmatrix}
f_s(x)\\
\widehat\Delta^{(-k)}(x)
\end{pmatrix}.
\]
Since the first coordinate does not depend on $k$,
\eqref{eq:joint-tc-cross} can be written as
\[
\widehat\mu_{\mathrm{JTC\text{-}Cross},\boldsymbol\lambda}
=
\frac1K\sum_{k=1}^K
\left[
\frac K n\sum_{i\in I_k}Y_i
+
\boldsymbol\lambda^\top
\left\{
\frac1N\sum_{j=1}^N
\widehat H^{(-k)}(\widetilde X_j)
-
\frac K n\sum_{i\in I_k}
\widehat H^{(-k)}(X_i)
\right\}
\right].
\]
For each fixed $k$, condition on $\mathcal D_s$ and the training complement
$\mathcal D_{-k}$. Then $\widehat H^{(-k)}$ is fixed, while the held-out
observations in $I_k$ have law $P_t$ and are independent of
$\mathcal D_{-k}$. The unlabeled observations are also independent and have
covariate law $P_{t,X}$. Therefore,
\[
\E\!\left[
\frac1N\sum_{j=1}^N
\widehat H^{(-k)}(\widetilde X_j)
\middle|
\mathcal D_s,\mathcal D_{-k}
\right]
=
\E_t\!\left[
\widehat H^{(-k)}(X)
\middle|
\mathcal D_s,\mathcal D_{-k}
\right],
\]
and
\[
\E\!\left[
\frac K n\sum_{i\in I_k}
\widehat H^{(-k)}(X_i)
\middle|
\mathcal D_s,\mathcal D_{-k}
\right]
=
\E_t\!\left[
\widehat H^{(-k)}(X)
\middle|
\mathcal D_s,\mathcal D_{-k}
\right].
\]
Also,
\[
\E\!\left[
\frac K n\sum_{i\in I_k}Y_i
\middle|
\mathcal D_s,\mathcal D_{-k}
\right]
=
\theta_t^*.
\]
Thus the conditional expectation of each fold-specific bracket is
$\theta_t^*$. Averaging over folds and applying iterated expectation yields
\[
\E\!\left[
\widehat\mu_{\mathrm{JTC\text{-}Cross},\boldsymbol\lambda}
\mid\mathcal D_s
\right]
=
\theta_t^*.
\]
\end{proof}

\begin{proof}[Proof of Theorem~\ref{thm:joint-tc-cross}]
For $i\in I_k$, write
\[
e_k(x)
\coloneqq 
\widehat\Delta^{(-k)}(x)-\Delta^\dagger(x).
\]
Then
\[
\widehat H_i^{\rm oof}
=
H_\dagger(X_i)
+
\begin{pmatrix}
0\\
e_k(X_i)
\end{pmatrix}.
\]
Similarly, the averaged unlabeled vector can be written as
\[
\widetilde H_N^{\rm CF}
=
\frac1N\sum_{j=1}^N H_\dagger(\widetilde X_j)
+
\begin{pmatrix}
0\\[0.3em]
\displaystyle
\frac1K\sum_{k=1}^K
\frac1N\sum_{j=1}^N e_k(\widetilde X_j)
\end{pmatrix}.
\]
Substituting these expressions into \eqref{eq:joint-tc-cross} gives
\begin{align}
\widehat\mu_{\mathrm{JTC\text{-}Cross},\boldsymbol\lambda}
-\theta_t^*
={}&
\frac1n\sum_{i=1}^n
\left[
Y_i-\boldsymbol\lambda^\top H_\dagger(X_i)
-\E_t\{Y-\boldsymbol\lambda^\top H_\dagger(X)\}
\right]
\nonumber\\
&+
\frac1N\sum_{j=1}^N
\left[
\boldsymbol\lambda^\top H_\dagger(\widetilde X_j)
-\E_t\{\boldsymbol\lambda^\top H_\dagger(X)\}
\right]
+
R_n(\boldsymbol\lambda),
\label{eq:joint-cross-proof-decomp}
\end{align}
where, writing
$\boldsymbol\lambda=(\lambda_s,\lambda_\Delta)^\top$,
\begin{equation}\label{eq:joint-cross-remainder}
R_n(\boldsymbol\lambda)
=
\frac{\lambda_\Delta}{K}
\sum_{k=1}^K
\left[
\frac1N\sum_{j=1}^N e_k(\widetilde X_j)
-
\frac K n\sum_{i\in I_k}e_k(X_i)
\right].
\end{equation}

We first show that the cross-fitting remainder is negligible. Define
\[
Z_{n,k}
\coloneqq 
\frac1N\sum_{j=1}^N e_k(\widetilde X_j)
-
\frac K n\sum_{i\in I_k}e_k(X_i).
\]
For each fixed $k$, condition on $\mathcal D_s$ and the training complement
$\mathcal D_{-k}$. Then $e_k$ is fixed, the observations in $I_k$ are
independent of $e_k$, and the unlabeled observations are independent of both
the training complement and the held-out observations. Moreover, the two
empirical averages defining $Z_{n,k}$ have the same conditional population
mean. Hence
\[
\E\!\left[
Z_{n,k}
\mid
\mathcal D_s,\mathcal D_{-k}
\right]
=0.
\]
Let
\[
a_{n,k}
\coloneqq 
\E_t\!\left[
e_k(X)^2
\mid
\mathcal D_s,\mathcal D_{-k}
\right].
\]
Using independence of the held-out and unlabeled samples conditional on
$\mathcal D_s,\mathcal D_{-k}$,
\begin{align*}
\Var\!\left(
Z_{n,k}
\mid
\mathcal D_s,\mathcal D_{-k}
\right)
&=
\frac1N
\Var_t\!\left(
e_k(X)
\mid
\mathcal D_s,\mathcal D_{-k}
\right)\\
&\quad+
\frac K n
\Var_t\!\left(
e_k(X)
\mid
\mathcal D_s,\mathcal D_{-k}
\right)\\
&\le
\left(
\frac1N+\frac K n
\right)a_{n,k}.
\end{align*}
Since $n/N\to\rho<\infty$ and $K$ is fixed,
\[
n\left(\frac1N+\frac K n\right)=O(1).
\]
Together with the assumption that fold-specific transfer corrections converge uniformly, conditional Chebyshev's
inequality therefore gives
\[
\sqrt n\,Z_{n,k}=o_p(1)
\]
for every fixed $k$. Because $K$ is fixed,
\[
\sqrt n\,R_n(\boldsymbol\lambda)
=
\frac{\lambda_\Delta}{K}
\sum_{k=1}^K
\sqrt n\,Z_{n,k}
=
o_p(1)
\]
for every fixed $\boldsymbol\lambda$. Notice that no independence across
different folds is required for this argument; the training complements may
overlap and the same unlabeled sample may be used in all folds.

It follows from \eqref{eq:joint-cross-proof-decomp} that
\begin{align*}
\sqrt n\left(
\widehat\mu_{\mathrm{JTC\text{-}Cross},\boldsymbol\lambda}
-\theta_t^*
\right)
={}&
\frac1{\sqrt n}\sum_{i=1}^n
\left[
Y_i-\boldsymbol\lambda^\top H_\dagger(X_i)
-\E_t\{Y-\boldsymbol\lambda^\top H_\dagger(X)\}
\right]\\
&+
\frac{\sqrt n}{N}\sum_{j=1}^N
\left[
\boldsymbol\lambda^\top H_\dagger(\widetilde X_j)
-\E_t\{\boldsymbol\lambda^\top H_\dagger(X)\}
\right]
+o_p(1).
\end{align*}
The two leading empirical averages are based on independent gold and
unlabeled samples. Under the stated $(2+\eta)$ moment conditions, the
Lindeberg--Feller central limit theorem therefore gives
\[
\sqrt n\left(
\widehat\mu_{\mathrm{JTC\text{-}Cross},\boldsymbol\lambda}
-\theta_t^*
\right)
\overset{D}{\longrightarrow}
N\!\left(
0,
V_{\rm Jcross}^\dagger(\boldsymbol\lambda)
\right),
\]
where
\[
V_{\rm Jcross}^\dagger(\boldsymbol\lambda)
=
\Var_t\{Y-\boldsymbol\lambda^\top H_\dagger(X)\}
+
\rho
\Var_t\{\boldsymbol\lambda^\top H_\dagger(X)\}.
\]

Now write
\[
\Sigma_\dagger
=
\Var_t\{H_\dagger(X)\},
\qquad
c_\dagger
=
\Cov_t\{H_\dagger(X),Y\}.
\]
Expanding the preceding variance yields
\[
V_{\rm Jcross}^\dagger(\boldsymbol\lambda)
=
\Var_t(Y)
-
2\boldsymbol\lambda^\top c_\dagger
+
(1+\rho)
\boldsymbol\lambda^\top
\Sigma_\dagger
\boldsymbol\lambda,
\]

If $\Sigma_\dagger$ is nonsingular, differentiation with respect to
$\boldsymbol\lambda$ gives
\[
(1+\rho)
\Sigma_\dagger
\boldsymbol\lambda_{\rm Jcross}^*
=
c_\dagger,
\]
and hence
\[
\boldsymbol\lambda_{\rm Jcross}^*
=
\frac1{1+\rho}
\Sigma_\dagger^{-1}c_\dagger.
\]
Substitution into the variance criterion gives
\[
V_{\rm Jcross}^{\dagger,*}
=
\Var_t(Y)
-
\frac{
c_\dagger^\top
\Sigma_\dagger^{-1}
c_\dagger
}{
1+\rho
}.
\]

If $\Sigma_\dagger$ is singular, let $v$ belong to its null space. Then
\[
\Var_t\{v^\top H_\dagger(X)\}
=
v^\top\Sigma_\dagger v
=
0,
\]
so $v^\top H_\dagger(X)$ is almost surely constant. Therefore
\[
v^\top c_\dagger
=
\Cov_t\{v^\top H_\dagger(X),Y\}
=
0.
\]
Hence $c_\dagger$ belongs to the column space of $\Sigma_\dagger$, and the
quadratic criterion has a minimizer. Its minimum-norm minimizer is
\[
\boldsymbol\lambda_{\rm Jcross}^*
=
\frac1{1+\rho}
\Sigma_\dagger^\dagger c_\dagger,
\]
and the minimized variance is
\begin{equation}
V_{\rm Jcross}^{\dagger,*}
=
\Var_t(Y)
-
\frac{
c_\dagger^\top
\Sigma_\dagger^\dagger
c_\dagger
}{
1+\rho
},
\label{eq:joint-cross-min-var}
\end{equation}

For the no-harm comparisons, the joint parameter space
$\mathbb R^2$ contains the source-PPI++ line
\[
\{(\lambda,0)^\top:\lambda\in\mathbb R\},
\]
for which
\[
\boldsymbol\lambda^\top H_\dagger(X)
=
\lambda f_s(X),
\]
and the scalar TC--Cross-PPI++ line
\[
\{(\lambda,\lambda)^\top:\lambda\in\mathbb R\},
\]
for which
\[
\boldsymbol\lambda^\top H_\dagger(X)
=
\lambda\{f_s(X)+\Delta^\dagger(X)\}
=
\lambda f_\dagger(X).
\]
The joint class also contains $(0,0)^\top$, which gives classical inference.
Therefore minimizing over the full joint class cannot yield a larger
first-order variance than minimizing over either scalar subfamily or choosing
classical inference. This proves
\eqref{eq:joint-cross-dominance} without requiring
$\Delta^\dagger=\Delta^*$.

Finally, suppose that
\[
\Delta^\dagger=\Delta^*,
\qquad
f_\dagger=f_t=\E_t[Y\mid X].
\]
Let
\[
a=(1,1)^\top,
\]
so that
\[
f_t(X)=a^\top H_\dagger(X).
\]
Because $H_\dagger(X)$ is measurable with respect to $X$ and
$f_t(X)=\E_t[Y\mid X]$,
\[
c_\dagger
=
\Cov_t\{H_\dagger(X),Y\}
=
\Cov_t\{H_\dagger(X),f_t(X)\}
=
\Sigma_\dagger a.
\]
Consequently,
\begin{align*}
c_\dagger^\top
\Sigma_\dagger^\dagger
c_\dagger
&=
a^\top
\Sigma_\dagger
\Sigma_\dagger^\dagger
\Sigma_\dagger
a\\
&=
a^\top\Sigma_\dagger a\\
&=
\Var_t\{f_t(X)\}.
\end{align*}
Substitution into \eqref{eq:joint-cross-min-var} gives
\[
V_{\rm Jcross}^{\dagger,*}
=
\Var_t(Y)
-
\frac{\Var_t\{f_t(X)\}}{1+\rho},
\]

It remains to justify feasible substitution. Define
\[
D_n
\coloneqq 
\widetilde H_N^{\rm CF}
-
\frac1n\sum_{i=1}^n\widehat H_i^{\rm oof}.
\]
The same decomposition used above gives
\[
D_n
=
\left\{
\frac1N\sum_{j=1}^N H_\dagger(\widetilde X_j)
-
\frac1n\sum_{i=1}^nH_\dagger(X_i)
\right\}
+
o_p(n^{-1/2}),
\]
componentwise, and therefore
\[
\sqrt n\,D_n=O_p(1).
\]
Since the estimator in \eqref{eq:joint-tc-cross} is affine in
$\boldsymbol\lambda$,
\begin{align*}
\sqrt n\Bigl\{
\widehat\mu_{\mathrm{JTC\text{-}Cross},
\widehat{\boldsymbol\lambda}}
-
\widehat\mu_{\mathrm{JTC\text{-}Cross},
\boldsymbol\lambda_{\rm Jcross}^*}
\Bigr\}
&=
\sqrt n
\left(
\widehat{\boldsymbol\lambda}
-
\boldsymbol\lambda_{\rm Jcross}^*
\right)^\top
D_n\\
&=
o_p(1)
\end{align*}
whenever
\[
\widehat{\boldsymbol\lambda}
-
\boldsymbol\lambda_{\rm Jcross}^*
=o_p(1).
\]
Thus, substituting a consistent feasible weight estimator leaves the same
first-order limiting distribution and variance.
\end{proof}
%\begin{comment}
\begin{proof}[Proof of Proposition~\ref{prop:joint-weight-rate}]
Write
\[
H_i^\dagger\coloneqq H_\dagger(X_i),
\qquad
q_i\coloneqq 
\widehat H_i^{\rm oof}-H_i^\dagger.
\]
If $i\in I_k$, then
\[
q_i
=
\begin{pmatrix}
0\\
\widehat\Delta^{(-k)}(X_i)-\Delta^\dagger(X_i)
\end{pmatrix}
=
\begin{pmatrix}
0\\
e_k(X_i)
\end{pmatrix}.
\]
Let
\[
\overline H_\dagger
=
\frac1n\sum_{i=1}^n H_i^\dagger,
\qquad
\overline q
=
\frac1n\sum_{i=1}^n q_i,
\qquad
\overline H_{\rm oof}
=
\overline H_\dagger+\overline q.
\]

We first control the empirical size of the foldwise transfer errors. For each
fold $k$, conditional on $\mathcal D_s$ and $\mathcal D_{-k}$,
\[
\E\!\left[
\frac K n\sum_{i\in I_k}e_k(X_i)^2
\middle|
\mathcal D_s,\mathcal D_{-k}
\right]
=
\E_t[e_k(X)^2\mid\mathcal D_s,\mathcal D_{-k}]
=
O_p(r_n).
\]
Conditional Markov's inequality and fixed $K$ therefore imply
\[
\frac1n\sum_{i=1}^n\|q_i\|_2^2
=
O_p(r_n).
\]
Consequently,
\begin{equation}\label{eq:proof-qbar}
\|\overline q\|_2
\le
\left(
\frac1n\sum_{i=1}^n\|q_i\|_2^2
\right)^{1/2}
=
O_p(\sqrt{r_n}).
\end{equation}

Let
\[
\widetilde\Sigma_H
=
\frac1{n-1}\sum_{i=1}^n
(H_i^\dagger-\overline H_\dagger)
(H_i^\dagger-\overline H_\dagger)^\top
\]
denote the empirical covariance matrix computed from the limiting prediction
vector $H_\dagger$. Since $H_\dagger(X)$ has fixed dimension and finite fourth
moments,
\begin{equation}\label{eq:proof-oracle-Sigma}
\|\widetilde\Sigma_H-\Sigma_\dagger\|_{\rm op}
=
O_p(n^{-1/2}).
\end{equation}
Similarly, if
\[
\widetilde c_H
=
\frac1{n-1}\sum_{i=1}^n
(H_i^\dagger-\overline H_\dagger)
(Y_i-\overline Y_n),
\]
then
\begin{equation}\label{eq:proof-oracle-c}
\|\widetilde c_H-c_\dagger\|_2
=
O_p(n^{-1/2}).
\end{equation}

We next compare the out-of-fold covariance estimators with these oracle
empirical quantities. Since
\[
\widehat H_i^{\rm oof}-\overline H_{\rm oof}
=
(H_i^\dagger-\overline H_\dagger)
+
(q_i-\overline q),
\]
expansion gives
\begin{align*}
\widehat\Sigma_H-\widetilde\Sigma_H
={}&
\frac1{n-1}\sum_{i=1}^n
(H_i^\dagger-\overline H_\dagger)
(q_i-\overline q)^\top\\
&+
\frac1{n-1}\sum_{i=1}^n
(q_i-\overline q)
(H_i^\dagger-\overline H_\dagger)^\top\\
&+
\frac1{n-1}\sum_{i=1}^n
(q_i-\overline q)(q_i-\overline q)^\top.
\end{align*}
By Cauchy--Schwarz,
\[
\left\|
\frac1{n-1}\sum_{i=1}^n
(H_i^\dagger-\overline H_\dagger)
(q_i-\overline q)^\top
\right\|_{\rm op}
\le
\left[
\frac1{n-1}\sum_{i=1}^n
\|H_i^\dagger-\overline H_\dagger\|_2^2
\right]^{1/2}
\left[
\frac1{n-1}\sum_{i=1}^n
\|q_i-\overline q\|_2^2
\right]^{1/2}.
\]
The first factor is $O_p(1)$ by the moment assumptions, while the second is
$O_p(\sqrt{r_n})$ by the preceding bound. The transpose term has the same
order, and
\[
\left\|
\frac1{n-1}\sum_{i=1}^n
(q_i-\overline q)(q_i-\overline q)^\top
\right\|_{\rm op}
=
O_p(r_n).
\]
Hence
\begin{equation}\label{eq:proof-Sigma-perturb}
\|\widehat\Sigma_H-\widetilde\Sigma_H\|_{\rm op}
=
O_p(\sqrt{r_n}+r_n)
=
O_p(\sqrt{r_n}),
\end{equation}
where the last equality uses $r_n=o(1)$.

For the covariance with the outcome,
\[
\widehat c_H-\widetilde c_H
=
\frac1{n-1}\sum_{i=1}^n
(q_i-\overline q)(Y_i-\overline Y_n).
\]
Again by Cauchy--Schwarz,
\begin{align*}
\|\widehat c_H-\widetilde c_H\|_2
&\le
\left[
\frac1{n-1}\sum_{i=1}^n
\|q_i-\overline q\|_2^2
\right]^{1/2}
\left[
\frac1{n-1}\sum_{i=1}^n
(Y_i-\overline Y_n)^2
\right]^{1/2}\\
&=
O_p(\sqrt{r_n}).
\end{align*}
Combining this with \eqref{eq:proof-oracle-Sigma}--\eqref{eq:proof-oracle-c}
gives
\[
\|\widehat\Sigma_H-\Sigma_\dagger\|_{\rm op}
+
\|\widehat c_H-c_\dagger\|_2
=
O_p\!\left(
n^{-1/2}+\sqrt{r_n}
\right).
\]

Now put
\[
\alpha_n
\coloneqq 
n^{-1/2}+\sqrt{r_n}+\tau.
\]
Since $r_n=o(1)$ and $\tau=o(1)$,
\[
\|\widehat\Sigma_H+\tau I_2-\Sigma_\dagger\|_{\rm op}
=
O_p(\alpha_n)
=
o_p(1).
\]
The assumption
$\lambda_{\min}(\Sigma_\dagger)\ge\kappa_H>0$
therefore implies
\[
\|(\widehat\Sigma_H+\tau I_2)^{-1}\|_{\rm op}
=
O_p(1).
\]
Using the resolvent identity,
\[
(\widehat\Sigma_H+\tau I_2)^{-1}
-
\Sigma_\dagger^{-1}
=
(\widehat\Sigma_H+\tau I_2)^{-1}
\{\Sigma_\dagger-\widehat\Sigma_H-\tau I_2\}
\Sigma_\dagger^{-1},
\]
and hence
\[
\left\|
(\widehat\Sigma_H+\tau I_2)^{-1}
-
\Sigma_\dagger^{-1}
\right\|_{\rm op}
=
O_p(\alpha_n).
\]
Therefore
\begin{align*}
\left\|
\widehat{\boldsymbol\lambda}
-
\boldsymbol\lambda_{{\rm J},n}^\dagger
\right\|_2
&\le
\frac1{1+n/N}
\Big[
\|(\widehat\Sigma_H+\tau I_2)^{-1}\|_{\rm op}
\|\widehat c_H-c_\dagger\|_2\\
&\qquad\qquad+
\left\|
(\widehat\Sigma_H+\tau I_2)^{-1}
-
\Sigma_\dagger^{-1}
\right\|_{\rm op}
\|c_\dagger\|_2
\Big]\\
&=
O_p\!\left(
n^{-1/2}+\sqrt{r_n}+\tau
\right),
\end{align*}
which proves \eqref{eq:joint-weight-rate}.

It remains to evaluate the excess population variance criterion. By
\eqref{eq:joint-finite-pop-lambda},
\[
c_\dagger
=
(1+n/N)\Sigma_\dagger
\boldsymbol\lambda_{{\rm J},n}^\dagger.
\]
Completing the square in \eqref{eq:joint-finite-pop-criterion} therefore gives
the exact identity
\begin{align*}
&
V_{{\rm J},n}^\dagger(\boldsymbol\lambda)
-
V_{{\rm J},n}^\dagger(
\boldsymbol\lambda_{{\rm J},n}^\dagger)\\
&\qquad=
(1+n/N)
\left(
\boldsymbol\lambda
-
\boldsymbol\lambda_{{\rm J},n}^\dagger
\right)^\top
\Sigma_\dagger
\left(
\boldsymbol\lambda
-
\boldsymbol\lambda_{{\rm J},n}^\dagger
\right).
\end{align*}
Taking
$\boldsymbol\lambda=\widehat{\boldsymbol\lambda}$
and using the bounded eigenvalues of the fixed-dimensional
$\Sigma_\dagger$ yields
\[
V_{{\rm J},n}^\dagger(\widehat{\boldsymbol\lambda})
-
V_{{\rm J},n}^\dagger(
\boldsymbol\lambda_{{\rm J},n}^\dagger)
=
O_p\!\left(
n^{-1}+r_n+\tau^2
\right),
\]
which proves \eqref{eq:joint-excess-criterion}.
\end{proof}
%\end{comment}
\section{Experimental Details}\label{app:experimental_setup}

For each dataset, Table~\ref{tab:dataset-stats} gives the population size $N$ from which gold labels are sampled, the swept range of gold-label counts $n$, and the repetition/cross-fitting protocol. Each repetition reveals $n$ rows' labels uniformly at random and treats the remaining $N-n$ rows as the prediction-rich unlabeled pool. The unlabeled share $(N-n)/N$ at the largest gold-label budget swept ranges from about $78\%$ (AutoEval MT-Bench Arena, $n=750$ of $N=3355$) to over $99\%$ (Census, AutoEval-ProteinGym); it is smallest for the two applications where $n$ is swept up to a sizable fraction of a comparatively small population (AutoEval MT-Bench Arena, ProteinGym).

Each application below gives its dataset, the source predictor that supplies the score, and the sparse calibration features appended to that score. Common setup details are in Appendix~\ref{app:implementation}.

\begin{table}[ht]
  \centering
  \caption{Population size ($N + n$), gold-label range ($n$), repetition count, and cross-fitting fold count ($K$). $N$ rows are unlabeled in each repetition. Parenthesized values give the LoRA variant's count where it differs from the sparse-calibration default shown (if applicable).}
  \label{tab:dataset-stats}
  \vspace{1mm}
  \begin{tabular}{lrccc}
    \toprule
    Application & Population ($N + n$) & Gold labels ($n$) & Repetitions & $K$ \\
    \midrule
    Census mean / OLS & 380,091 & 100--2000 & 200 & 5 \\
    Census healthcare & 318,215 & 100--2000 & 100 & 5 \\
    Capital Bikeshare & 8,734~\tablefootnote{Target-year (2012) population. The 8,645 source-year (2011) rows are used only to fit the source predictor.} & 50--500 & 100 & 5 \\
    ASR accent transfer & 15,378 & 50--1000 & 250 & 5 \\
    Camelyon17 (WILDS) & 119{,}958~\tablefootnote{Pooled held-out-hospital population ($85{,}054$ test plus $34{,}904$ validation patches). The $335{,}996$ training-hospital patches, including the in-distribution validation split, are used only to fit and select the source predictor.} & 25--500 & 200 & 5 \\
    PovertyMap (WILDS) & 7,872~\tablefootnote{Fold~A's pooled OOD-country population (3,963 test plus 3,909 validation clusters). The 10,796 training- and in-distribution-validation-country clusters are used only to fit and select the source predictor.} & 25--500 & 200 & 5 \\
    % Capital Bikeshare & 8,734$^{a}$ & 50--500 & 100 & 5 \\
    % ASR accent transfer & 15,378$^{b}$ & 50--1000 & 250 & 5 \\
    % Camelyon17 (WILDS) & 119{,}958$^{c}$ & 25--500 & 200 & 5 \\
    % PovertyMap (WILDS) & 7,872$^{d}$ & 25--500 & 200 & 5 \\
    
    ProteinGym (BLAT) & 4,783 & 100--1000 & 100 (20) & 5 \\
    AutoEval-ProteinGym (SPG1) & 536,962 & 50--1000 & 250 (20) & 5 (2) \\

    Chatbot Arena & 14,379 & 200--2000 & 50 (20) & 5 (2) \\
    Chatbot Arena (judge quality) & 14,379 & 500 & 250 & 5 \\
    AutoEval MT-Bench Arena & 3,355 & 25--750 & 250 (25) & 5 (3) \\
    \bottomrule
\end{tabular}
\par\smallskip
% \begin{minipage}{\linewidth}\footnotesize
% $^{a}$Target-year (2012) population. An additional 8,645 source-year (2011) rows are used only to fit the source predictor and are never sampled as gold or unlabeled.\\

% $^{b}$Gold labels are drawn only from the train split (14,369 rows); the dev and test splits are always unlabeled.\\

% $^{c}$Pooled held-out-hospital population (85,054 test plus 34,904 validation patches). The 335,996 training-hospital patches, including the in-distribution validation split, are used only to fit and select the source predictor and are never sampled as gold or unlabeled.\\

% $^{d}$Fold~A's pooled OOD-country population (3,963 test plus 3,909 validation clusters). The 10,796 training- and in-distribution-validation-country clusters are used only to fit and select the source predictor and are never sampled as gold or unlabeled.
% \end{minipage}
\end{table}

\subsection{Tabular}

\noindent\textbf{Census.} 2019 American Community Survey microdata for California via the \texttt{folktables} package~\citep{ding2021retiring}, covering three targets: mean income, an income-regression coefficient, and private health-insurance coverage.

\noindent\textit{Source predictor:} XGBoost~\citep{chen2016xgboost} predictions, precomputed and shipped by the upstream \texttt{ppi\_py} benchmark rather than trained in this codebase.

\noindent\textit{Calibration features:} raw tabular covariates plus the source prediction, together with degree-two interactions and squares of its two covariates ($6$ features total, shared across all three Census applications).

\noindent\textbf{Capital Bikeshare.} UCI's hourly Capital Bikeshare rental counts for Washington, D.C.~\citep{fanaeet2014event}; a temporal-transfer benchmark predicting 2012 rentals from a source trained only on 2011 data.

\noindent\textit{Source predictor:} a gradient-boosted regressor trained on the 2011 data: learning rate $0.06$, $300$ boosting rounds, at most $31$ leaf nodes per tree, $\ell_2$ regularization $1.0$, all other settings at defaults).

\noindent\textit{Calibration features:} raw tabular covariates plus the source prediction, one-hot-encoding the categorical covariates alongside the standardized continuous ones with no polynomial expansion ($60$ features total).

\subsection{Speech}

\noindent\textbf{AfriSpeech-200.} African-accented English speech~\citep{olatunji2023afrispeech}; we use the Yoruba-accented portion for a word-error-rate transfer task, pooling its train, dev and test splits into a single target population (Table~\ref{tab:dataset-stats}).

\noindent\textit{Source predictor:} wav2vec2-base-960h~\citep{baevski2020wav2vec20frameworkselfsupervised} with zero-shot greedy CTC decoding (no beam search), trained on American-English speech and applied directly to the Yoruba-accented audio with no fine-tuning.

\noindent\textit{Calibration features:} the source WER prediction, audio duration, hypothesis word count, speaking rate, and one-hot speaker gender ($7$ features total, three gender categories observed).

\subsection{WILDS benchmarks}

\subsubsection{Camelyon17}

The WILDS~\citep{koh2021wilds} version of the CAMELYON17 lymph-node histopathology challenge~\citep{bandi2018detection}: $455{,}954$ $96\times96$ RGB patches from $50$ slides across five hospitals, labeled $1$ if the patch's central $32\times32$ region contains tumor tissue. Three hospitals are used for training and the two held-out ones, the OOD validation and OOD test hospitals, are pooled into a single target population. Pooling is sound because we select the source checkpoint on in-distribution data, so neither held-out hospital informs training or model selection; it also makes the target a two-hospital mixture rather than a single unseen hospital. WILDS's own CodaLab download host was unreachable, so we load the \texttt{wltjr1007/Camelyon17-WILDS} Hugging Face mirror and assign hospitals from its per-patch \texttt{center} field, which reproduces WILDS's official split sizes exactly ($302{,}436$ training, $33{,}560$ in-distribution validation, $34{,}904$ OOD validation and $85{,}054$ test patches).

\noindent\textit{Source predictor:} DenseNet-121~\citep{huang2017densely} trained from scratch on the three training hospitals, following WILDS's ERM baseline (binary cross-entropy; SGD with momentum $0.9$, learning rate $10^{-3}$, weight decay $10^{-2}$; batch size $32$; $10$ epochs; random flips). Its sigmoid probability is the source score. The selected checkpoint reaches $0.968$ and $0.966$ accuracy on the training hospitals and in-distribution validation, against $0.787$ and $0.673$ on the two held-out hospitals.

\noindent\textit{Calibration features:} the source probability, the patch's $(x,y)$ coordinates within its slide, and one-hot indicators for the $20$ slides of the two target hospitals ($23$ features total). The slide indicators let the correction absorb slide-to-slide staining variation within and between the target hospitals.

\subsubsection{PovertyMap}

The WILDS~\citep{koh2021wilds} version of the PovertyMap benchmark~\citep{yeh2020using}: $224\times224$ multispectral satellite images (seven Landsat bands plus a nightlights band) for $19{,}669$ DHS survey clusters in 23 African countries, labeled with a real-valued asset-wealth index. We use WILDS's country fold~A, whose ten out-of-distribution countries we pool into a single target population: the OOD test group (Angola, C\^ote d'Ivoire, Ethiopia, Mali, Rwanda) and the OOD validation group (Benin, Burkina Faso, Guinea, Sierra Leone, Tanzania). Doing so doubles the target population and the number of countries the correction can key on. WILDS's own CodaLab download host was unreachable, so we load the \texttt{1aurent/PovertyMap} Hugging Face mirror, which also ships the survey metadata (urban/rural, nightlight summaries, country, year) as ordinary columns. The loader identifies the fold by matching the mirror's OOD country sets against WILDS's five fold definitions.

\noindent\textit{Source predictor:} ResNet18~\citep{he2016deep} with its first convolution widened to the eight multispectral bands, trained from scratch on fold~A's training countries, following WILDS's ERM baseline (mean squared error; Adam with learning rate $10^{-3}$; step decay of $0.96$ per epoch; batch size $64$; up to $200$ epochs; random flips). It reaches correlations of $0.902$ and $0.868$ on the training countries and in-distribution validation, against $0.830$ on the pooled held-out countries.

\noindent\textit{Calibration features:} the source prediction, the urban/rural indicator, the survey's two nightlight summaries (\texttt{nl\_mean} and \texttt{nl\_center}), and one-hot indicators for the ten target countries ($14$ features total). The mirror's binarized wealth column is never used, as it is derived from the target.

\subsubsection{Model selection}

Both WILDS source predictors keep the checkpoint that scores best on the \emph{in-distribution} validation split rather than the out-of-distribution one WILDS's own protocol uses, because our setting assumes no target-domain labels at training time; this is also what makes pooling both held-out groups into a single target leak-free (\texttt{--selection-split val} recovers WILDS's protocol). One consequence is that our Camelyon17 source reaches $0.673$ on the OOD test hospital against WILDS's reported ERM baseline of $70.3\%$, a benchmark \citet{koh2021wilds} single out for its $6.4\%$ seed-to-seed standard deviation there.

\subsection{Protein language models}
\label{app:models-features}

\noindent\textbf{ProteinGym.} The BLAT\_ECOLX\_Firnberg\_2014 deep-mutational-scanning assay from the ProteinGym benchmark~\citep{notin2023proteingym}, measuring the fitness effect of amino-acid substitutions in a beta-lactamase enzyme. It also includes a protein-model-ranking target and an annotator-quality sweep.

\noindent\textit{Source predictor:} ESM-2 8M~\citep{lin2023evolutionary} zero-shot masked-marginal scores, with no training.

\noindent\textit{Candidate models:} three ESM-2 checkpoints~\citep{lin2023evolutionary} (8M, 35M, and 150M parameters), which are ranked in the protein-model-ranking target and swapped in as the annotator in the annotator-quality sweep.

\noindent\textit{Calibration features:} one-hot mutation identity (position and substituted amino acid) with the ESM-2 score ($47$ features total, for the $286$-residue BLAT assay with all $20$ canonical amino acids observed).

\noindent\textit{LoRA calibration:} fine-tunes the same ESM-2 8M backbone that produces the source score, on the raw mutated sequence, with the source score as a second linear term.

\noindent\textbf{AutoEval-ProteinGym.} A replication of AutoEval's~\citep{boyeau2024autoeval} released SPG1 experiment (the STRSG assay). This replication follows AutoEval's own convention of $90\%$ nominal coverage, rather than the $95\%$ used elsewhere in this paper.

\noindent\textit{Source predictor:} a precomputed VESPA~\citep{marquet2021embeddings} annotator score.

\noindent\textit{Candidate models:} the seven zero-shot models AutoEval compares, using AutoEval's released scores: UniRep~\citep{alley2019unified}, CARP (640M)~\citep{yang2024convolutions}, ESM-1b~\citep{rives2021biological}, ESM-1v (five-model ensemble)~\citep{meier2021language}, ESM-2 (15B)~\citep{lin2023evolutionary}, RITA (XL)~\citep{hesslow2022rita}, and ProGen2 (XLarge)~\citep{nijkamp2023progen2}. The annotator-quality sweep uses each of these as the annotator in turn, plus VESPA.

\noindent\textit{Calibration features:} per-site biochemical-property deltas (hydrophobicity, volume, polarity, charge, aromaticity) and epistasis proxies with the VESPA score ($37$ features total).

\noindent\textit{LoRA calibration:} fine-tunes a separately loaded ESM-2 8M encoder on the raw mutant sequence, with the VESPA score as an auxiliary input -- source and LoRA backbone are different models here. %Unlabeled scoring is capped at 200{,}000 of the assay's 537{,}000 rows.

\subsection{LLM arenas}

\noindent\textbf{Chatbot Arena.} Human pairwise preferences from Chatbot Arena~\citep{chiang2024chatbot}, paired with precomputed public LLM-judge scores, restricted to the most frequently compared models.

\noindent\textit{Source predictor:} the Meta-Llama-3-8B-Instruct judge's precomputed pairwise-preference probability (human agreement $62.9\%$).

\noindent\textit{Calibration features:} a small metadata layer only -- source logit, prompt/response character and word counts, response-length contrast, and one-hot model-identity dummies for both compared models ($47$ features total among the $20$ most-compared models); a frozen-sentence-embedding feature mode also exists in code but was not used for the reported run.

\noindent\textbf{AutoEval MT-Bench Arena.} A replication of AutoEval's~\citep{boyeau2024autoeval} released MT-Bench~\citep{zheng2023judging} experiment, matching human ratings to GPT-4's pairwise judgments across six chat systems. This replication follows AutoEval's own convention of $90\%$ nominal coverage, rather than the $95\%$ used elsewhere in this paper.

\noindent\textit{Source predictor:} GPT-4's own precomputed pairwise judgment, as released.

\noindent\textit{Calibration features:} its Bradley--Terry design matrix and a small metadata layer (source logit, response-length contrast, and turn; $8$ features total among the six chat systems) combined with frozen MiniLM-L6-v2 sentence embeddings of the prompt and a response-embedding contrast ($2\times384=768$ dimensions), for $776$ features total. Question-level fixed effects are omitted in both arena applications since the smallest gold-sample sizes leave too few observations per calibration fold.

\noindent\textit{LoRA calibration (both arenas):} both replace their (unfine-tunable) source entirely with \texttt{distilgpt2}~\citep{sanh2019distilbert}, which supplies its own zero-shot pairwise-preference score and is LoRA-fine-tuned as the correction model on the same formatted prompt/response text. In both cases this also makes the sparse-vs-LoRA comparison an apples-to-apples test of calibration mechanism: each LoRA-comparison's sparse arm is given \texttt{distilgpt2}'s own frozen, pooled hidden state over the identical text, so the only difference between the sparse and LoRA arms is the calibration mechanism itself, not which encoder supplies the features.
% ---rather than the metadata/MiniLM-embedding features used in the corresponding main-body result above---

\subsection{Hyperparameters and Implementation}
\label{app:implementation}
To promote reproducibility, the source code is provided in the supplementary material, along with the SLURM script files used to launch the jobs.

\textbf{Sparse-linear calibration.} Continuous-outcome applications (Census mean/OLS, Capital Bikeshare, PovertyMap, ProteinGym), and Camelyon17, whose binary label enters a mean-estimation residual correction, select their calibrator's $\ell_1$ penalty by $5$-fold cross-validation over scikit-learn's automatic log-spaced grid (\texttt{LassoCV}). AutoEval-ProteinGym instead fixes the penalty ($\alpha=0.01$, no cross-validation) given the cost of repeated re-fits over its much larger unlabeled pool. Binary/preference-outcome applications (Census healthcare, Chatbot Arena, AutoEval MT-Bench Arena) calibrate with a fixed, smoothed-$\ell_1$-penalized offset-logistic correction ($\alpha=10^{-3}$ for Census healthcare and Chatbot Arena, $\alpha=0.01$ for AutoEval MT-Bench Arena) rather than cross-validation, which we found unnecessary for stability at this scale and considerably slower under the offset-logistic objective.

\textbf{Joint power-tuning ridge.} The joint estimator's multi-source weight solve is stabilized with a small, fixed ridge term added to its Hessian before inversion: a multiplier of $10^{-8}$ (relative to $\mathrm{trace}(\Sigma_H)/M$) for the mean estimator, and $10^{-6}$ for the logistic and OLS joint estimators. This constant is fixed across every experiment purely for numerical conditioning of the weight solve -- it is never tuned from data or exposed as a command-line option.

\noindent\textbf{Confidence intervals.} All cross-fitted estimators use Wald intervals $\hat\mu\pm z_{1-\alpha/2}\widehat{\mathrm{SE}}$. For TC--Cross-PPI(++), $\widehat{\mathrm{SE}}^2=\widehat{\operatorname{Var}}(Y-\hat\lambda\hat f_t^{\,\mathrm{oof}})/n+\hat\lambda^2\,\widehat{\operatorname{Var}}(\bar f_t)/N$, where $\hat f_t^{\,\mathrm{oof}}$ is the out-of-fold calibrated prediction on the gold sample and $\bar f_t$ the fold-averaged prediction on the unlabeled sample. Joint TC--Cross-PPI++ is the same with $\hat\lambda\hat f_t$ replaced by the weighted combination $\hat\lambda_s f_s+\hat\lambda_\Delta\hat\Delta$ of the source prediction and the correction. The weights are estimated from the same out-of-fold quantities, with no additional split, and are treated as fixed in $\widehat{\mathrm{SE}}$.

\textbf{LoRA hyperparameters.} Table~\ref{tab:lora-hparams} gives the adapter configuration for each of the four LoRA experiments; all four additionally include the source score as an auxiliary linear input to the correction and pool the backbone's final hidden states by mean pooling.

\begin{table}[ht]
  \centering
  \small
  \caption{LoRA adapter hyperparameters by application. Target modules follow each backbone's attention-layer naming (\texttt{query}/\texttt{value} for ESM-2, \texttt{c\_attn} for GPT-2-family models). Both arena applications share \texttt{distilgpt2} as backbone.}
  \label{tab:lora-hparams}
  \resizebox{\textwidth}{!}{%
  \begin{tabular}{lcccccccc}
    \toprule
    Application & $r$ & $\alpha$ & Dropout & Target modules & Epochs & LR & Batch (train/predict) & Max length \\
    \midrule
    ProteinGym (BLAT) & 1 & 16 & 0.05 & query, value & 100 & $10^{-3}$ & 256 / 256 & --- \\
    AutoEval-ProteinGym & 2 & 16 & 0.05 & query, value & 100 & $10^{-3}$ & 512 / 512 & --- \\
    Chatbot Arena & 2 & 16 & 0.05 & c\_attn & 100 & $10^{-3}$ & 16 / 64 & 1024 \\
    AutoEval MT-Bench Arena & 2 & 16 & 0.05 & c\_attn & 100 & $10^{-3}$ & 16 / 64 & 1024 \\
    \bottomrule
  \end{tabular}
  }
\end{table}

\textbf{Compute Resources:} Non-LoRA experiments, and those that don't require training a source model, are run on up to 8 Intel Xeon 8268s. Experiments that score or fine-tune a neural network (i.e., Camelyon17's and PovertyMap's source training, ProteinGym, AutoEval-ProteinGym LoRA, the Chatbot Arena and AutoEval MT-Bench Arena LoRA variants, and ASR accent transfer), additionally use a single V100, or A100 for LoRA experiments.

\section{Additional Results}\label{app:additional_results}
% Figures~\ref{fig:empirical-ess-full} and \ref{fig:empirical-width-full} repeat the main-text panel figures (Section~\ref{sec:empirical}) across all datasets, including ProteinGym and Census Income (OLS), which are discussed only in this appendix. Figure~\ref{fig:empirical-coverage} reports the empirical coverages across all datasets. 

Figures~\ref{fig:empirical-ess-full}-~\ref{fig:empirical-coverage} report the empirical ESS, interval widths, and coverage across all datasets, including ProteinGym and Census Income (OLS), which are discussed only in this appendix. 

\begin{figure}[H]
  \centering
  \includegraphics[width=0.9\textwidth]{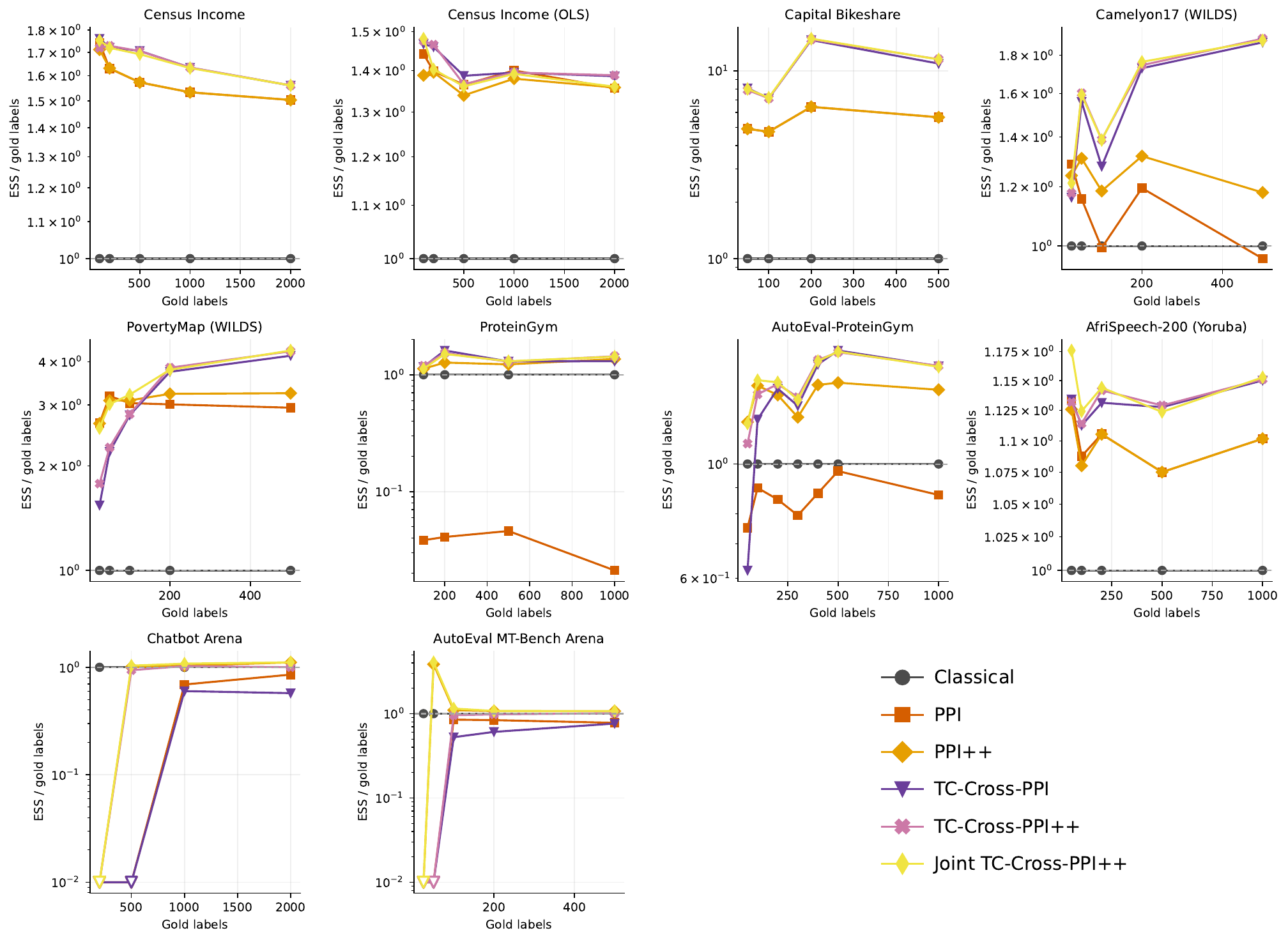}
  \caption{Empirical ESS divided by the number of gold labels ($n$), on a logarithmic $y$-axis. The dashed line is classical inference. Values above one indicate MSE improvement over the gold-only estimator. Values below $0.01$ are drawn at $0.01$ as open downward triangles; they occur only for the two Bradley--Terry arena applications at their smallest label budgets, where quasi-separation makes the non-classical estimators unstable.}
  \label{fig:empirical-ess-full}
\end{figure}

\begin{figure}[H]
  \centering
  \includegraphics[width=0.9\textwidth]{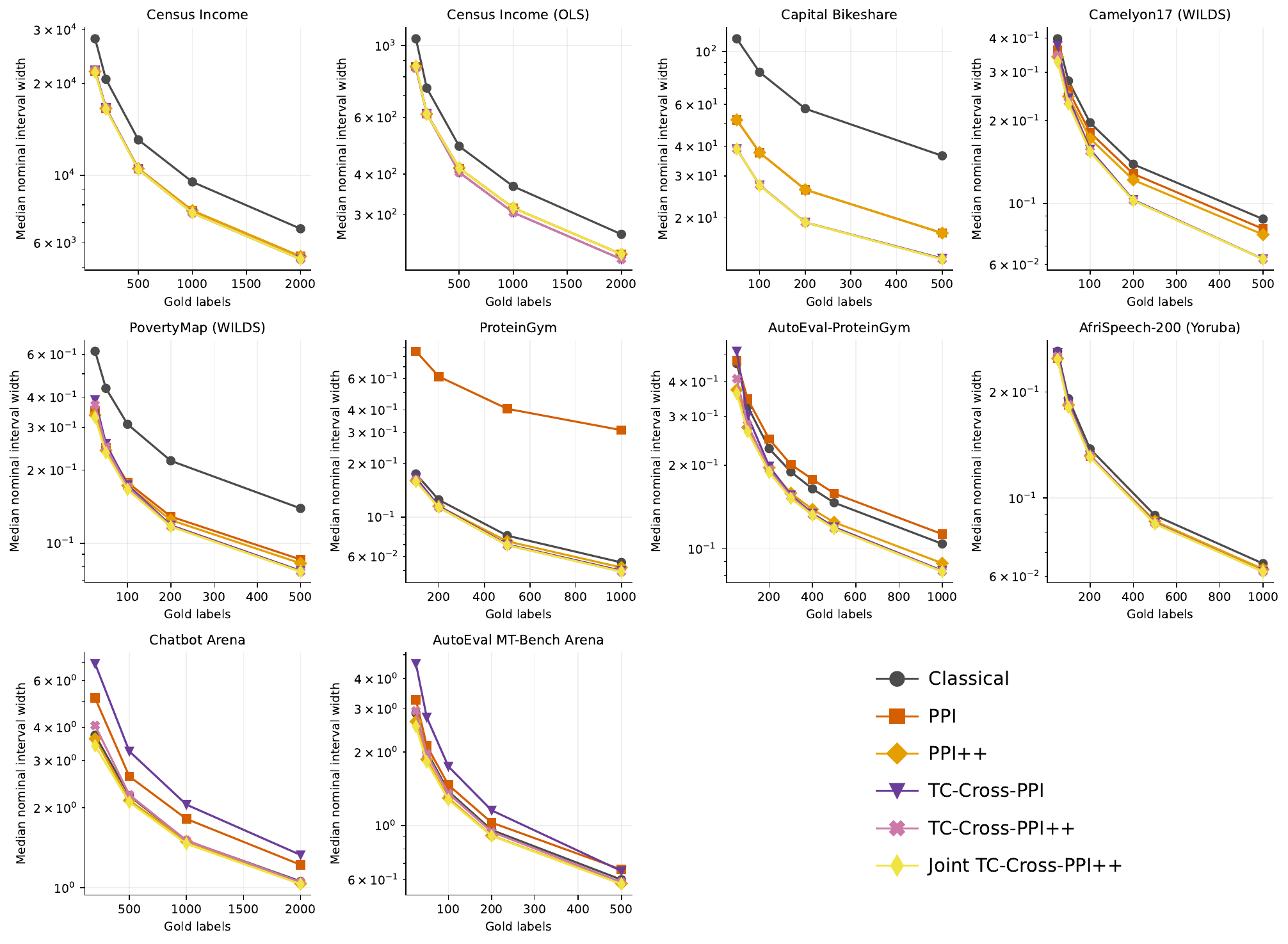}
      \caption{Median (over trials) width of nominal $95\%$ confidence intervals ($90\%$ for the two AutoEval replications), on a logarithmic $y$-axis. Widths are in each estimand's natural units (e.g., dollars for Census Income, hourly rentals for Capital Bikeshare), so they should be compared across methods within a panel rather than across panels.}
  \label{fig:empirical-width-full}
\end{figure}

\begin{figure}[H]
  \centering
  \includegraphics[width=0.9\textwidth]{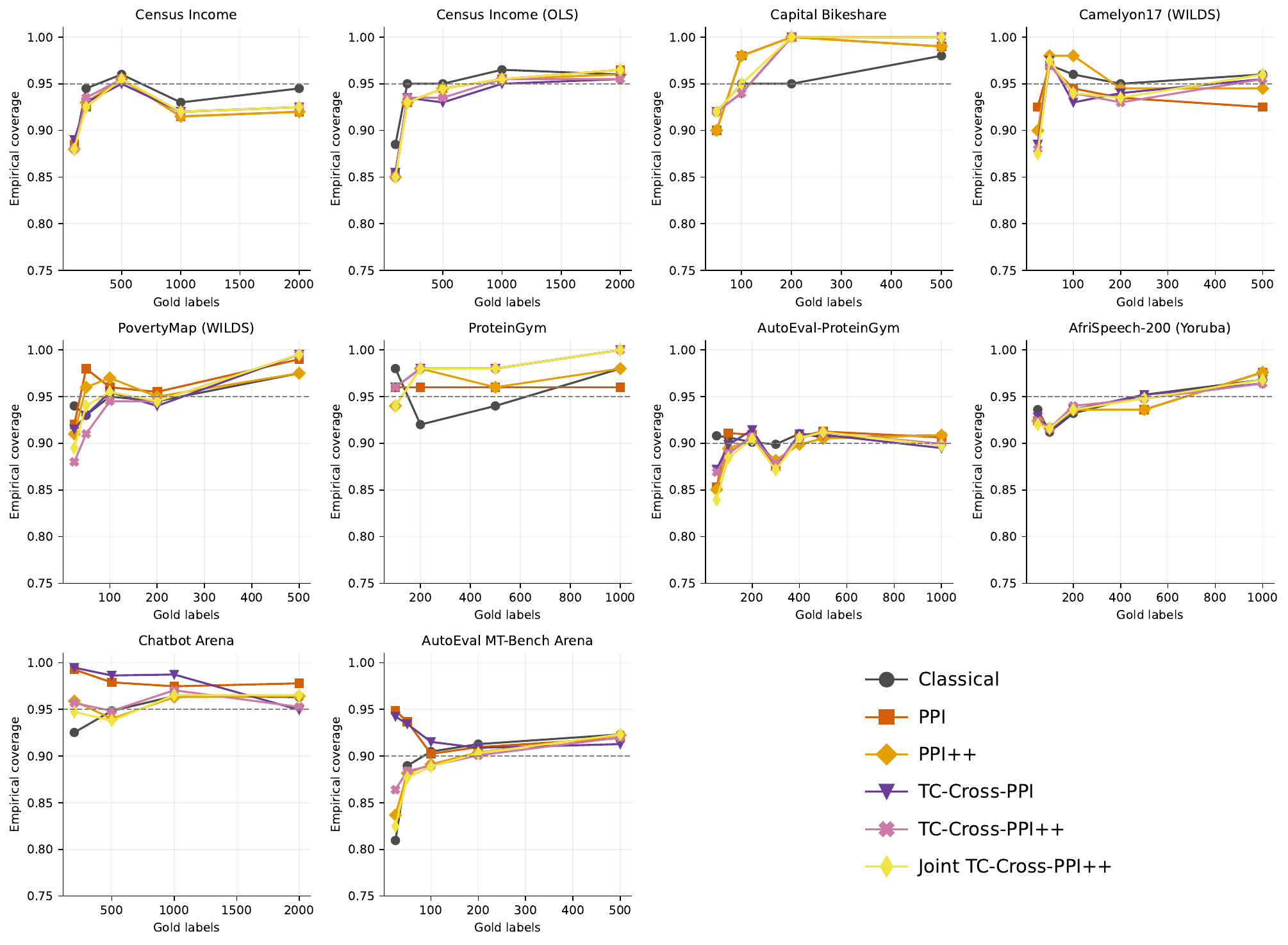}
  \caption{Empirical coverage of nominal confidence intervals. The dashed line marks the nominal level: $95\%$ for most panels, and $90\%$ for the two AutoEval replications (AutoEval-ProteinGym and AutoEval MT-Bench Arena), following AutoEval's convention (Appendix~\ref{app:experimental_setup}). Coverage substantially below the line indicates anti-conservative inference; coverage near one can instead reflect conservative or separation-driven intervals and should be interpreted jointly with interval width.}
  \label{fig:empirical-coverage}
\end{figure}

\subsection{Results with Additional Datasets}
\vspace{-1mm}
\paragraph{Census OLS.}
On the Census dataset, we additionally estimate the OLS coefficient using the same covariates (Appendix~\ref{app:experimental_setup}; Figures~\ref{fig:empirical-ess-full} and~\ref{fig:empirical-width-full}). PPI++ has relative MSE of $0.717$ at $n=200$ and $0.747$ at $n=500$, and TC--Cross-PPI++ improves on it at both, with $0.682$ and $0.732$. Joint TC--Cross-PPI++ is more conservative here, at $0.713$ and $0.735$, tracking PPI++ more closely than the stronger cross-fitted correction. Its power weight can only shrink toward whichever direction it trusts least.
% \begin{figure}[H]
%   \centering
%   \includegraphics[width=0.7\textwidth]{generated/census_income_ols_ess_std.pdf}
%   \caption{Census OLS: empirical effective sample size relative to the number of gold labels, on a logarithmic $y$-axis. The dotted line is classical inference, for which the ratio is one; values above one indicate MSE improvement over it.}
%   \label{fig:census-ols}
% \end{figure}
\vspace{-2mm}
\paragraph{ProteinGym.}
We analyze the \texttt{BLAT\_ECOLX\_Firnberg\_2014} assay with ESM-2 8M (\texttt{esm2\_t6\_8M\_UR50D}) as the source predictor (Figures~\ref{fig:empirical-ess-full} and~\ref{fig:empirical-width-full}). The raw source score is poorly aligned with experimental fitness, and vanilla PPI pins $\lambda$ to one, so it is highly inefficient here, with relative MSE above $20\times$ classical at every $n$. Power tuning fixes this. PPI++ reaches relative MSE of $0.79$--$0.82$ across $n=200$--$500$, and the cross-fitted methods improve further still, to $0.65$--$0.77$, or ESS gains of up to ${\sim}53\%$. For model ranking we target $\mu_m=\mathbb E[Ys_m(X)]$, where $s_m$ is model $m$'s standardized zero-shot score. With three candidate ESM models, TC--Cross-PPI++ reaches mean rank correlation $1.000$ at $n=100$, against $0.917$ for PPI++ and $0.250$ for vanilla PPI.

\subsection{Results looking at Annotator Quality}
\vspace{-1mm}
\paragraph{Protein annotator quality.}
To study sensitivity to synthetic-label quality, we repeat the ProteinGym analysis with ESM-2 8M, 35M, and 150M as the annotator, at a fixed $n=500$. Their correlations with experimental fitness are $0.396$, $0.522$, and $0.626$. As annotator quality increases, PPI++ MSE decreases from $2.89\times10^{-4}$ to $2.74\times10^{-4}$ to $2.18\times10^{-4}$. TC--Cross-PPI++ reduces MSE further, to $2.33\times10^{-4}$, $2.18\times10^{-4}$, and $1.76\times10^{-4}$, and Joint TC--Cross-PPI++ tracks it closely (Figure~\ref{fig:annotator-quality}, left). Transfer calibration again has its largest proportional benefit for the weakest annotator. Coverage for TC--Cross-PPI++ is $0.976$, $0.972$, and $0.980$ across the three annotators.
\vspace{-2mm}
\paragraph{AutoEval-ProteinGym annotator quality.}
We repeat the SPG1 replication using all eight candidate/annotator scores (UniRep, CARP, ESM-1b, ESM-1v, ESM-2, RITA, ProGen2, and VESPA) as the source, at a fixed $n=500$. Annotator-fitness correlation ranges from $-0.05$ for UniRep to $0.49$ for VESPA, and both MSE and ESS gain improve monotonically with it (Figure~\ref{fig:annotator-quality}, right). PPI++'s ESS gain rises from $+0.3\%$ to $+36.8\%$ over that range, and TC--Cross-PPI++'s rises from $+36.8\%$ to $+70.2\%$, staying ahead of PPI++ at every annotator. As with Census OLS, Joint TC--Cross-PPI++ again tracks closer to PPI++ than to the stronger TC--Cross-PPI++ correction. Coverage stays within $0.88$--$0.92$ across every method and annotator, against a nominal $90\%$.

\begin{figure}[H]
  \centering
  \includegraphics[width=0.9\textwidth]{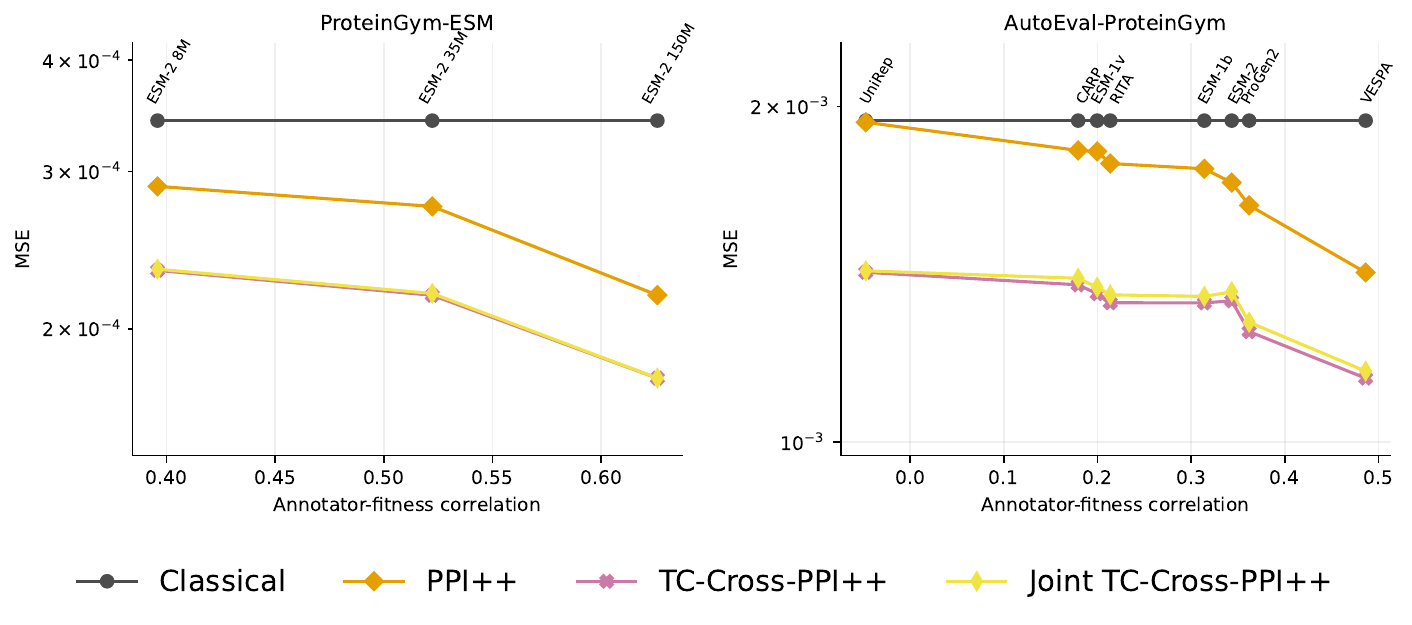}
  \caption{Sensitivity to annotator quality at $n=500$ (MSE on a logarithmic $y$-axis in both panels). Left: ProteinGym-ESM MSE by method versus annotator--fitness correlation (3 annotators). Right: AutoEval-ProteinGym MSE by method versus annotator--fitness correlation (8 annotators).}
  \label{fig:annotator-quality}
\end{figure}

\subsection{Variance Results}
For mean estimation, adding a constant to the predictor doesn't change PPI's variance, since the rectifier already corrects for such a shift. What transfer calibration adds is a lower residual variance, $\operatorname{Var}(Y-f)$, which Table~\ref{tab:residual-variance} compares for the source and calibrated predictors. Calibration helps most when the source's errors depend on the covariates. On Capital Bikeshare, the 2011-to-2012 shift varies with the calendar and weather covariates, and calibration roughly halves the residual variance. The reduction also grows with $n$, since more gold labels give a better-fit correction. When little of the source's error is predictable from the calibration features, as on Census Income, calibration changes little and the cross-fitted methods perform about as well as PPI++. With very few labels, the correction can instead increase the residual variance (e.g., PovertyMap at $n=25$). Joint power tuning matters most here, since it can down-weight the correction while keeping the source predictor, and it essentially matches PPI++ where TC--Cross-PPI falls behind (Figure~\ref{fig:empirical-ess}).

\begin{table}[H]
  \centering
  \caption{Residual variance of the transfer-calibrated predictor relative to the source predictor, $\widehat{\operatorname{Var}}(Y-\hat f_t)/\widehat{\operatorname{Var}}(Y-f_s)$, on the gold sample and averaged over trials, at the smallest and largest label budgets. Values below one indicate calibration reduced the residual variance.}
  \label{tab:residual-variance}
  \begin{tabular}{lrr}
\toprule
Application & Smallest $n$ & Largest $n$ \\
\midrule
Census Income & 1.03 ($n=100$) & 0.96 ($n=2000$) \\
Capital Bikeshare & 0.61 ($n=50$) & 0.41 ($n=500$) \\
Camelyon17 (WILDS) & 1.07 ($n=25$) & 0.60 ($n=500$) \\
PovertyMap (WILDS) & 1.97 ($n=25$) & 0.79 ($n=500$) \\
ProteinGym & 0.04 ($n=100$) & 0.03 ($n=1000$) \\
AfriSpeech-200 (Yoruba) & 1.07 ($n=50$) & 0.97 ($n=1000$) \\
\bottomrule
\end{tabular}

\end{table}

\subsection{Results with LoRA}
As an alternative to sparse-linear calibration, we also fine-tune a LoRA adapter as the correction model. On BLAT this adapts the ESM-2 8M backbone that produces the source score; on AutoEval-ProteinGym the source (VESPA) is not a fine-tunable network, so the adapter is trained on a separate ESM-2 8M encoder with the VESPA score as an input; on AutoEval MT-Bench Arena, distilgpt2 replaces the GPT-4 judge as both the source and the LoRA backbone (Appendix~\ref{app:experimental_setup}). The results are mixed and assay-dependent (Figure~\ref{fig:proteingym-lora}, Table~\ref{tab:lora-results}). On BLAT, sparse TC--Cross-PPI++ is slightly preferable, with relative MSE of $0.854$ against $0.904$ at $n=1000$. On SPG1, LoRA calibration is noisier at small $n$ but eventually overtakes sparse calibration, giving the strongest result in this comparison at $n=500$ ($0.223$ against $0.423$ relative MSE, an ESS gain of $348\%$). On AutoEval MT-Bench Arena, distilgpt2 supplies both the source score and the LoRA backbone, and neither calibration style meaningfully improves on classical inference across the sweep from $n=100$ to $750$. Sparse TC--Cross-PPI++ stays within $1$--$4\%$ of classical throughout, while its LoRA counterpart is noisier at small $n$ ($1.300$ at $n=100$) before converging to the same near-parity result ($1.010$ at $n=750$). We read this as evidence that LoRA calibration can need more gold labels to stabilize than sparse-linear calibration, not that either approach dominates.

\begin{table}[H]
  \centering
  \small
  \caption{Sparse-linear versus LoRA calibration at one representative $n$ per application (Figure~\ref{fig:proteingym-lora} shows the full ProteinGym/AutoEval-ProteinGym sweep over $n$; the AutoEval MT-Bench Arena sweep over $n=100$--$750$ is summarized in the text). Each cell reports (MSE ratio, CI width ratio), both normalized by classical inference. Bold marks the better of the two calibrated methods, independently for each of the two ratios.}
  \label{tab:lora-results}
  \begin{tabular}{lllr}
\toprule
Application & $n$ & Method & (MSE, CI width) ratios \\
\midrule
ProteinGym (BLAT) & 1000 & Classical & (1.000, 1.000) \\
 &  & TC-Cross-PPI++ & (\textbf{0.854}, \textbf{0.886}) \\
 &  & TC-Cross-PPI++-LoRA & (0.904, 0.937) \\
AutoEval-ProteinGym & 500 & Classical & (1.000, 1.000) \\
 &  & TC-Cross-PPI++ & (0.423, 0.822) \\
 &  & TC-Cross-PPI++-LoRA & (\textbf{0.223}, \textbf{0.746}) \\
AutoEval MT-Bench Arena & 750 & Classical & (1.000, 1.000) \\
 &  & TC-Cross-PPI++ & (1.010, 0.998) \\
 &  & TC-Cross-PPI++-LoRA & (\textbf{1.010}, \textbf{0.994}) \\
\bottomrule
\end{tabular}

\end{table}

\begin{figure}[H]
  \centering
  \includegraphics[width=\textwidth]{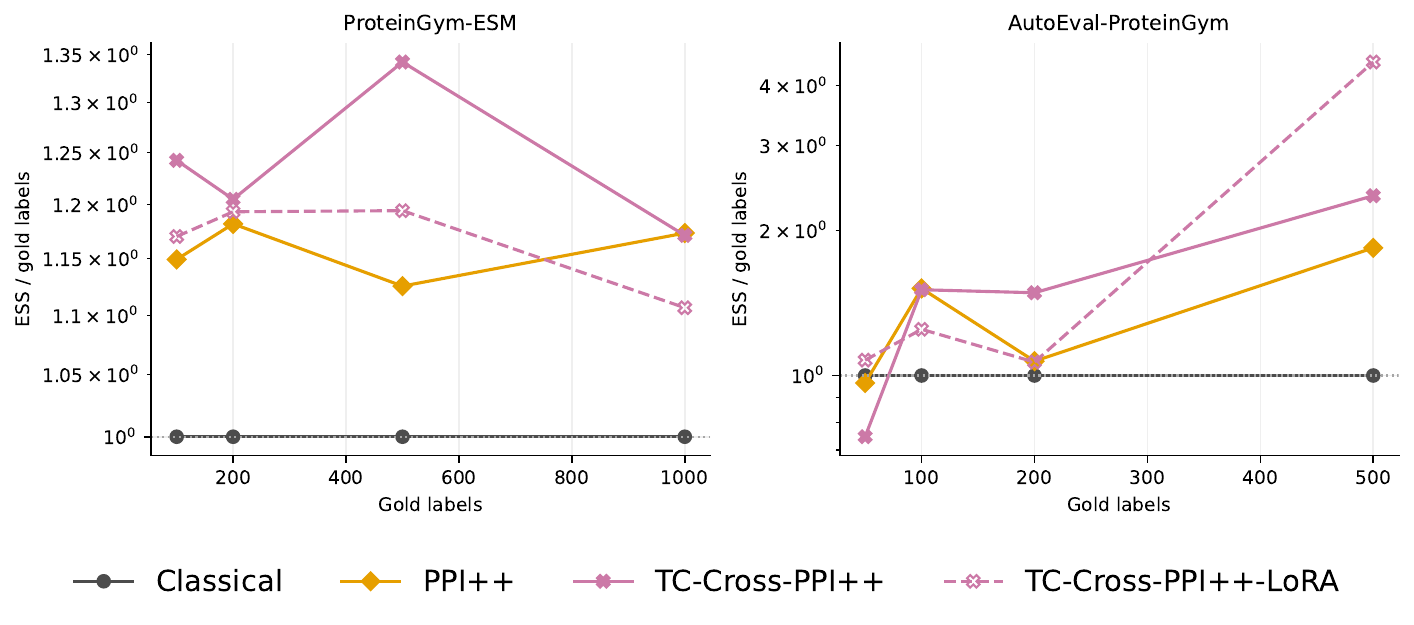}
  \caption{Sparse-linear versus LoRA calibration for ProteinGym-ESM (left) and AutoEval-ProteinGym (right), both using ESM-2 8M as the LoRA backbone. ESS/gold-labels is shown on a logarithmic $y$-axis; the dotted line at one is classical inference.}
  \label{fig:proteingym-lora}
\end{figure}

\subsection{Results with Other PPI Variants}\label{app:other_ppi_results}

\paragraph{Comparison with Cross-PPI.} Cross-prediction-powered inference (Cross-PPI)~\citep{zrnic2024cross} also cross-fits a predictor on the gold labels, but it trains that predictor from scratch rather than adapting a pretrained source. This comparison tells us how much of our gain comes from the source predictor and how much from cross-fitting alone. For Census Income and Capital Bikeshare, whose sources are tree ensembles, we train Cross-PPI with the same model class on the target covariates (Figure~\ref{fig:cross-ppi}). On Capital Bikeshare, Cross-PPI reaches a relative MSE of only $0.75$--$0.91$, compared with $0.16$--$0.21$ for PPI and $0.07$--$0.14$ for TC--Cross-PPI, with coverage between $0.91$ and $0.98$. On Census Income it does worse than classical inference up to $n=500$ (relative MSE $1.85$ at $n=100$) and reaches only $0.87$ at $n=2000$, while PPI, PPI++ and the transfer-calibrated methods stay between $0.57$ and $0.67$. A model fit to up to a few thousand gold labels from the target domain could not compete with the source model that was trained on far more data. Our approach uses both the well-trained source predictor and the scarce gold labels on learning a correction.

\begin{figure}[H]
  \centering
  \includegraphics[width=\textwidth]{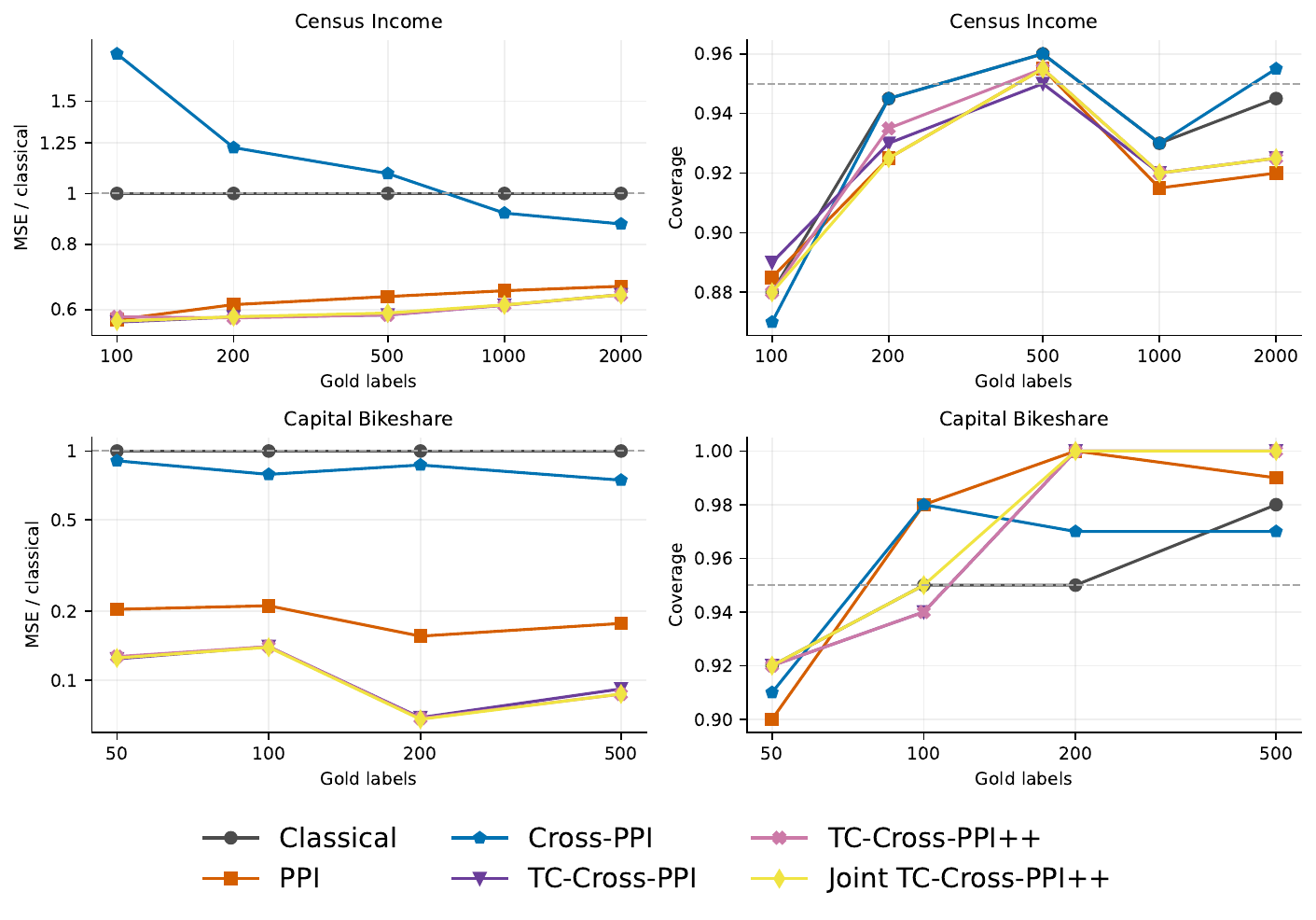}
  \caption{Cross-PPI, trained from scratch with the source's model class, compared with PPI and the transfer-calibrated methods on Census Income (top, 200 repetitions) and Capital Bikeshare (bottom, 100 repetitions). Left: MSE relative to classical inference on a logarithmic $y$-axis; the dashed line at one is classical inference. Right: coverage, with the dashed line at the nominal $95\%$. Both panels use a logarithmic $x$-axis.}
  \label{fig:cross-ppi}
\end{figure}

\paragraph{Comparison with Stratified PPI.}
Stratified prediction-powered inference (StratPPI)~\citep{fisch2024stratified} partitions the input space into strata $\mathcal{A}_k$ with known weights $w_k$ and draws $n_k$ gold labels from each stratum. It then minimizes the stratified rectified loss $$\sum_k w_k\Big[\tfrac{1}{n_k}\sum_{i\in\mathcal{L}_k}\ell_\theta(X_i,Y_i)+\lambda_k\big(\tfrac{1}{N_k}\sum_{j\in\mathcal{U}_k}\ell_\theta(X_j,f(X_j))-\tfrac{1}{n_k}\sum_{i\in\mathcal{L}_k}\ell_\theta(X_i,f(X_i))\big)\Big],$$
with one power-tuning weight $\lambda_k\in[0,1]$ per stratum. Each $\lambda_k$ is tuned by the PPI++ criterion restricted to its stratum, so the synthetic labels are trusted only where they track the gold labels. The strata are formed from the autorater's own output. This makes StratPPI a useful baseline for us, since it adapts to judge heterogeneity without learning any correction, while transfer calibration corrects the judge as a function of the covariates.

We evaluate StratPPI on the Arena datasets with proportional allocation, $n_k\propto w_k$, using one stratum per value of GPT-4's five-valued preference on AutoEval MT-Bench Arena and $10$ equal-mass bins of the judge probability on Chatbot Arena. Figure~\ref{fig:stratppi} reports the median MSE across trials, relative to classical inference, since at small $n$ the Bradley--Terry fit occasionally diverges for every method except classical. StratPPI's coverage is $0.83$--$0.93$ on AutoEval, against a nominal $0.90$, and $0.92$--$0.97$ on Chatbot Arena, against a nominal $0.95$. Its median-MSE ratio is $0.82$--$1.01$ on AutoEval, against $0.70$--$0.96$ for PPI++, and $0.89$--$1.30$ on Chatbot Arena at $n\geq500$, against $0.72$--$0.99$. The per-stratum $\lambda_k$ average $0.25$--$0.39$, close to PPI++'s single weight.

\begin{figure}[ht!]
  \centering
  \includegraphics[width=0.95\textwidth]{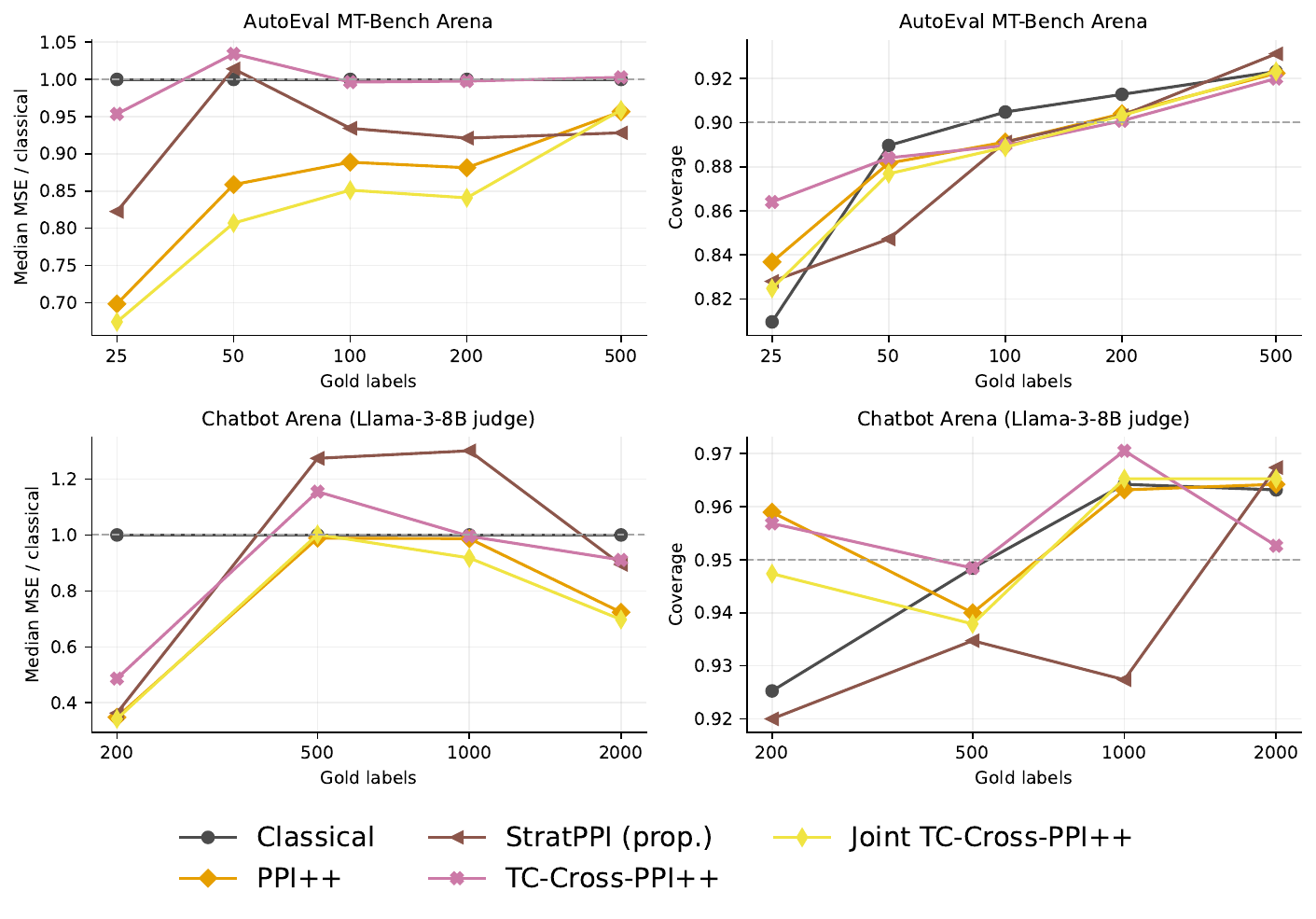}
  \caption{StratPPI (proportional allocation, strata from the judge's own output) compared with PPI++ and the transfer-calibrated methods on AutoEval MT-Bench Arena (top, 250 repetitions) and Chatbot Arena with the Llama-3-8B judge (bottom, 50 repetitions). Left: median BT-coefficient MSE relative to the classical median; the dashed line at one is classical inference. Right: average coordinate-wise coverage, with the dashed line at nominal coverage ($90\%$ for AutoEval, following its convention, and $95\%$ for Chatbot Arena). Both panels use a logarithmic $x$-axis.}
  \label{fig:stratppi}
\end{figure}

\end{document}